\documentclass[11pt, letterpaper, logo, onecolumn, copyright]{template_from_google}

\usepackage[T1]{fontenc}

\usepackage{bm} 
\usepackage{nicefrac} 

\usepackage{booktabs} 
\usepackage{enumitem} 
\usepackage{fancyhdr}
\usepackage{multirow} 
\usepackage{tabularx}
\usepackage{multicol}
\usepackage{array}
\usepackage{longtable}
\usepackage{booktabs}
\usepackage{multirow}
\usepackage{adjustbox}
\usepackage{booktabs}

\usepackage{wrapfig} 
\usepackage{tocloft} 
\usepackage{fancybox}
\usepackage{authblk} 
\usepackage[bottom]{footmisc} 

\usepackage{algorithm}
\usepackage[noend]{algpseudocode}

\usepackage{graphicx}
\usepackage{caption}
\usepackage{subcaption}
\providecommand{\safeincludegraphics}[2][]{\IfFileExists{#2}{\includegraphics[#1]{#2}}{\fbox{\parbox{0.9\linewidth}{\centering Missing figure: \texttt{\detokenize{#2}}}}}}

\usepackage[dvipsnames]{xcolor}

\usepackage{xspace} 

\usepackage[authoryear, sort&compress, round]{natbib}

\usepackage[dvipsnames]{xcolor} 

\usepackage{hyperref}

\definecolor{darkblue}{rgb}{0.0, 0.0, 0.6}
\definecolor{darkred}{rgb}{0.7, 0.0, 0.0}
\hypersetup{
  pdffitwindow=true,
  pdfstartview={FitH},
  pdfnewwindow=true,
  colorlinks,
  linktocpage=true,
  linkcolor=darkred,
  urlcolor=darkblue,
  citecolor=darkblue
}

\usepackage{amsmath,amsfonts,bm}

\def\eqref#1{equation~\ref{#1}}

\def\1{\bm{1}}

\DeclareMathAlphabet{\mathsfit}{\encodingdefault}{\sfdefault}{m}{sl}
\SetMathAlphabet{\mathsfit}{bold}{\encodingdefault}{\sfdefault}{bx}{n}

\def\gA{{\mathcal{A}}}

\newcommand{\E}{\mathbb{E}}

\usepackage{amsmath}
\usepackage{amssymb}
\usepackage{mathtools}
\usepackage{bbm}
\usepackage{amsthm}

\usepackage{graphicx}
\usepackage{enumitem}
\usepackage{tikz}
\usepackage{booktabs}
\usepackage{adjustbox}

\usepackage{longfbox}
\usepackage{algorithm}
\usepackage[noend]{algpseudocode}

\usepackage{caption}
\usepackage{subcaption}

\usepackage{xspace}
\usepackage{wrapfig}
\usepackage{multicol,multirow}

\usepackage{tocloft}

\definecolor{Yellow}{RGB}{255, 191, 0}
\definecolor{Green}{RGB}{34, 139, 34}

\usepackage{xcolor}
\definecolor{darkgreen}{RGB}{1, 50, 32} 

\newcommand{\method}{\texttt{PPT}\xspace}

\theoremstyle{plain}
\newtheorem{theorem}{Theorem}[section]

\newtheorem{lemma}[theorem]{Lemma}
\newtheorem{corollary}[theorem]{Corollary}

\newtheorem{remark}[theorem]{Remark}

\newtheorem{proposition}{Proposition}

\definecolor{pptrow}{RGB}{225,236,250} 

\let\cite\citep

\title{\papertitle}
\renewcommand{\headrulewidth}{0.4pt}

\reportnumber{} 

\renewcommand{\today}{}

\title{Explore Broadly, Reason Sharply: Push Small Models toward the Frontier via Sampling}

\author[1,*]{Panagiotis Theodoropoulos}
\author[2]{Nan Jiang}
\author[3]{Xintong Duan}
\author[3]{Ali Hasan}
\author[3]{Yuriy Nevmyvaka}
\author[1]{Evangelos A. Theodorou}
\author[3,\dag]{Wei Deng}

\affil[1]{Georgia Institute of Technology}
\affil[2]{University of Texas at El Paso}
\affil[3]{ML Research, Morgan Stanley}

\begin{abstract}
Power-sharpened sampling is an inference-time alternative to reinforcement-learning (RL) post-training
for enhancing reasoning in large language models (LLMs). High-probability sequences are amplified 
under the base model without parameter updates or external rewards,
avoiding the costly optimization and jagged generalization of RL.
However, this approach faces a fundamental exploration--exploitation trade-off, as
strong sharpening restricts exploration, trapping samplers in plausible but incorrect reasoning trajectories,
whereas weak sharpening leaves the answer distribution diffuse.
To resolve this trade-off, we introduce \textbf{Parallel Power Tempering (PPT)}, instantiating power-sharpened LLM sampling 
via parallel tempering.
Running multiple \emph{interacting} replicas in parallel at different sharpening levels allows lower-power replicas to explore diverse reasoning trajectories and higher-power chains to further exploit higher-likelihood responses favored by the sharpened target. 
Specifically, we tailor \method{} to inference-time sampling by mitigating a truncation bias, identified in prior power samplers, and investigate effective swap strategies under finite memory and compute budgets.
Extensive experimentation shows that \method{} substantially improves single-chain power-sharpened sampling and outperforms RL-post-trained models, producing higher-quality reasoning traces and even achieving performance comparable to frontier models.
\end{abstract}

\begin{document}

\maketitle
{\renewcommand{\thefootnote}{\fnsymbol{footnote}}%
\footnotetext[1]{Work done during an internship at Morgan Stanley.\quad\textsuperscript{$\dagger$}Correspondence: weideng056@gmail.com}}

\vspace{-.3cm}
\section{Introduction}
\label{sec:intro}

\begin{figure}[b]
  \centering
  \vspace{-.3cm}
  \includegraphics[width=\textwidth]{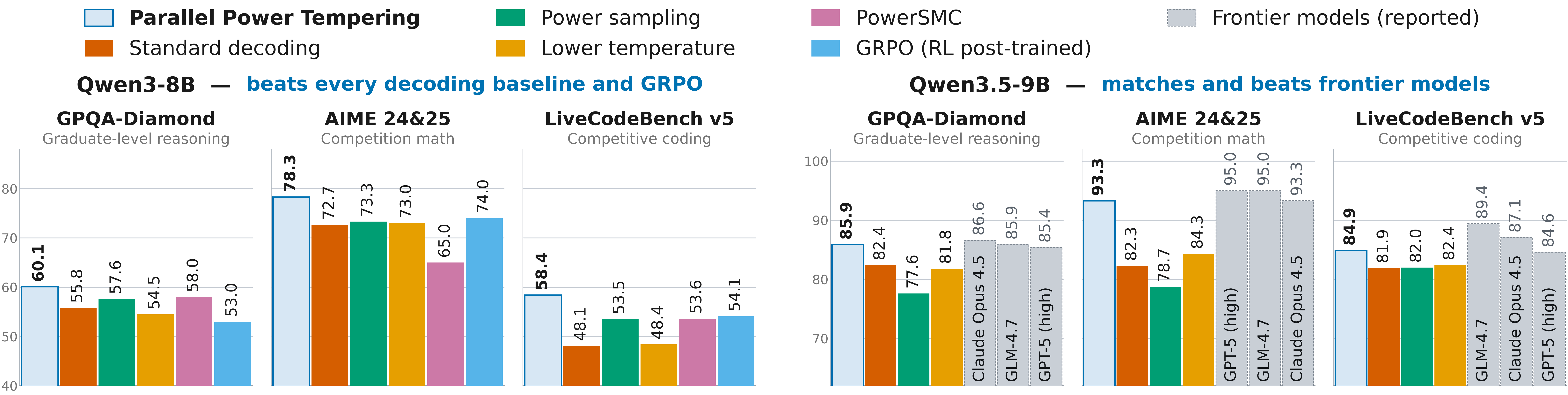}
    \vspace{-.3cm}
  \caption{Single-shot accuracy (\%) of Parallel Power Tempering. 
  Left: Qwen3-8B across three benchmarks. 
  Right: Qwen3.5-9B on GPQA-Diamond, AIME~24\&25 and LiveCodeBench~v5 against decoding baselines 
  and the reported scores of frontier models.
  }
  \label{fig:radar}  \vspace{-0.3cm}
\end{figure}

Reinforcement Learning (RL)-guided post-training with task-specific rewards is a widely used recipe for improving the reasoning capabilities in LLMs
~\citep{shao2024deepseekmath, guo2025deepseekr1}. 
This approach is particularly effective in domains such as mathematics and code generation, where automated verifiers can reliably assess correctness.
However, RL post-training requires access to a verifier or reward model that scores sampled outputs, and it requires expensive gradient-based updates to the model's parameters \citep{shao2024deepseekmath}. Moreover, reliable rewards
are often unavailable for open-ended scientific inquiry, deliberation, and
long-horizon planning.
Even when rewards exist, optimizing for a narrow set of rewarded tasks can erode
prior capabilities and yield \emph{``jagged'' generalization} to nearby
problems~\citep{hu2026breaking}.
These limitations motivate inference-time alternatives that improve reasoning
without parameter updates or external rewards.

Traditional inference-time methods improve generation either by selecting a high-scoring sequence among multiple completions~\citep{huang2025bestofn,NEURIPS2025_1c7eff16}, or by locally reshaping token probabilities with temperature and truncation rules~\citep{holtzman2020curious,meister2023locally,nguyen2024minp,tang2025top}.
The latter are inherently \emph{myopic}, as they modify each decoding step without directly controlling the distribution over complete responses. Neither approach generally samples from a prescribed sequence-level target.

In this vein, recent works suggest that a model's capability can be substantially enhanced at inference time by sampling from a sharpened distribution over the model's outputs~\citep{karan2025reasoning,ji2026scalable}.
Such sequence-level targets naturally depend on Monte Carlo methods for controllable generation and blockwise resampling \citep{mireshghallah2022mix,forristal2023block}.
The cornerstone hypothesis is that reasoning trajectories are latent in models, and thus, sequence-level sharpening amplifies these trajectories, enabling sampling from them without additional RL post-training while yielding comparable inference-time reasoning performance.

However, power-sharpened sampling introduces a fundamental inference-time challenge. 
Sharpening concentrates the sampling distribution around high-likelihood responses, which can improve final-answer quality, but excessive concentration can trap the sampler in locally plausible yet incorrect reasoning paths ~\citep{karan2025reasoning,ji2026scalable}. 
This behavior is especially consequential for multi-step reasoning tasks, where an early mistake can lead to a coherent but wrong solution ~\citep{li2025blinkofaneyetheory}. 
Conversely, flatter distributions explore a broader range of alternative reasoning paths, but may also generate noisy or lower-quality responses that are unsuitable as final outputs ~\citep{troshin2025control}.
Thus, power-sharpened LLM sampling faces a central \textbf{exploration--exploitation trade-off}: the sampler  must explore diverse reasoning trajectories while still exploiting the sharpened distribution to produce high-quality answers.

To address this challenge, we introduce \textbf{Parallel Power Tempering (\method)}, a power-sharpened parallel tempering sampler for LLM inference. 
Our method adapts classical parallel tempering, also known as replica exchange, from multi-modal MCMC ~\citep{swendsen1986replica,geyer1991markov,hukushima1996exchange,earl2005parallel} to sequence-level power-sharpened LLM sampling.
Parallel tempering transfers naturally to the LLM regime. Instead of running a single sampler at one sharpening level, \method{} runs replicas in parallel over a ladder of sharpening levels. Unlike prior LLM replica-exchange methods that vary prefix length \citep{he2026recipes}, all replicas share a common horizon, which enables whole-record swaps.
The lower-sharpening replicas act as exploratory chains that can search broadly across possible responses, while the higher-sharpening replicas act as exploitative chains that concentrate on responses favored by the sharpened objective.
The replicas are coupled through swap moves, which allow neighboring replicas to exchange their generated responses according to a principled and tractable Metropolis--Hastings acceptance rule ~\citep{deng20b,syed2022nonreversible}. 
Every swap decision therefore reduces to the base-model
log-probabilities of the two responses being exchanged, which are already
cached from generation, so a swap requires no additional model evaluations.
In this way, promising reasoning paths discovered by exploratory replicas can be transferred to sharper replicas, while sharper replicas can escape poorly mixed regions by exchanging with flatter chains.
We evaluate \method{} across a range of small open-source models on benchmarks spanning mathematics, coding, and STEM reasoning.
Our main contributions can be summarized as follows:
\begin{itemize}[leftmargin=14pt]
    \item We introduce \method, a novel power-sharpened parallel tempering sampler specially adapted to LLM sampling that runs a ladder
    of replicas at various sharpening levels and couples adjacent replicas
    through swap moves, letting exploratory low-power chains feed diverse states
    to exploitative high-power chains, mitigating the central limitation on the exploration--exploitation trade-off.
    \item We identify a \emph{structural truncation bias} in prior implementations of early-stopped power
          sampling, which parallel tempering cannot repair, and eliminate it via a fixed-horizon construction that preserves the true sharpened target. 
          Since every swap schedule is exact under this construction, we further study efficient swap strategies subject to memory and compute constraints.
    \item Extensive experiments across math, STEM, and code reasoning benchmarks
    on multiple open models show that \method{} substantially outperforms sampling
    and RL baselines in nearly all settings. Notably, with Qwen3.5-9B, it even matches or
    exceeds the performance of frontier models, as illustrated in Figure \ref{fig:radar}.
\end{itemize}

\section{Preliminaries}\label{sec:prelim}

\subsection{Notations} 
Let $\mathbf{x}_0$ denote a given prompt and let
$\mathbf{x}_{0:T} = (\mathbf{x}_0, x_1, \dots, x_T)$ be a sequence of generated tokens appended to that prompt, where each $x_t$ belongs to a finite vocabulary~$\mathcal{V}$.
An autoregressive LLM factorizes the joint distribution over $\mathbf{x}_{0:T}$ via the chain rule $  p_0(\mathbf{x}_{1:T}|\mathbf{x}_0)
  = \prod_{t=1}^{T} p_0(x_t \mid \mathbf{x}_{0:t-1})$.
At fixed horizon $T$, let $\mathcal S_T$ be the finite state space containing sequences of length-$T$ and records terminated at $\tau\leq T$. 
Post-terminal entries are filled with deterministic padding leading to a state $(x_1,\ldots,x_{\tau-1},\mathrm{EOS},\bot,\ldots,\bot)$, 
so $p_0(\mathbf{x}_{1:T}\mid\mathbf{x}_0)$ through $T$ or until EOS.
Sampling proceeds sequentially: at each step $t=1,\dots,T$, a token is drawn from
$x_t \sim p_0(\cdot \mid \mathbf{x}_{0:t-1})$.

\subsection{Power Sharpening for LLMs}
Recent work proposes sampling from a \emph{power-sharpened} variant of this distribution~\citep{karan2025reasoning,ji2026scalable}:
\begin{align} \label{eq:pi-alpha}
  \pi_\alpha(\mathbf{x}_{1:T}\mid \mathbf{x}_0)
  = \frac{p_0(\mathbf{x}_{1:T}\mid \mathbf{x}_0)^{\alpha}}{Z_\alpha(\mathbf{x}_0)},
  \qquad
  Z_\alpha(\mathbf{x}_0)
  =
  \sum_{\mathbf{x}_{1:T}'\in\mathcal{V}^T}
  p_0(\mathbf{x}_{1:T}'\mid \mathbf{x}_0)^{\alpha},
\end{align}
where $\alpha > 1$ is the sharpening parameter and  $Z_{\alpha}(\mathbf{x}_0)$ is the normalizing constant.
Raising the base distribution to a power of $\alpha$ suppresses low-likelihood continuations while amplifying high-likelihood ones~\citep{karan2025reasoning}.
However, direct autoregressive sampling from $\pi_\alpha$  is intractable, since the normalizing constant in Eq. ~\ref{eq:pi-alpha} requires summing the powered likelihood over all possible future completions. 
For this reason Metropolis--Hastings sampling was employed.

\noindent\textbf{Metropolis--Hastings (MH)} is an MCMC algorithm for sampling from a distribution known only up to a normalizing constant~\citep{metropolis1953equation,hastings1970monte}.
Given a target density $\pi_{\alpha}(\mathbf{x})$, MH constructs a Markov chain
$\mathbf{x}^{(0)} \to \mathbf{x}^{(1)} \to \cdots \to \mathbf{x}^{(n)}$
as follows.
From the current state $\mathbf{x}^{(i)}$, a candidate $\mathbf{y}$ is drawn from a proposal distribution. 
Let $g_\alpha$ be the tokenwise-powered proposal
\begin{equation}
\label{eq:proposal}
g_\alpha(x_t\mid\mathbf{x}_0,\mathbf{x}_{<t})
=
\frac{
p_0(x_t\mid\mathbf{x}_0,\mathbf{x}_{<t})^\alpha
}{
\sum_{v\in\mathcal V}
p_0(v\mid\mathbf{x}_0,\mathbf{x}_{<t})^\alpha
}.
\end{equation}
After EOS, its only admissible output is $\bot$, with probability one.
Although tractable, $g_\alpha$ is a tokenwise proposal and is not the
sequence-level target $\pi_{\alpha}$. 
Following \citet{karan2025reasoning}, draw a restart position
$r$ from a state-independent distribution, retain the prefix
$\mathbf{x}_{<r}$, and resample the suffix using
\[
q_r(\mathbf{y}\mid\mathbf{x})
=
\mathbf 1\{\mathbf{y}_{<r}=\mathbf{x}_{<r}\}
\prod_{t=r}^{T}
g_\alpha(y_t\mid\mathbf{x}_0,\mathbf{y}_{<t}).
\]
If $r$ lies after the current EOS, the padding convention makes this a
self-transition. Conditional on the sampled $r$, accept the proposal with
probability
\begin{equation}
\label{eq:accept-ratio}
A_r(\mathbf{x},\mathbf{y})
=
\min \left\{1, 
\frac{
\pi_{\alpha}(\mathbf{y}\mid\mathbf{x}_0)
q_r(\mathbf{x}\mid\mathbf{y})
}{
\pi_{\alpha}(\mathbf{x}\mid\mathbf{x}_0)
q_r(\mathbf{y}\mid\mathbf{x})
}\right\}.
\end{equation}
Each MH restart preserves $\pi_{\alpha}$; thus, the
state-independent mixture over $r$ also preserves $\pi_{\alpha}$.

\medskip

\subsection{Parallel Tempering} 
Parallel tempering (PT) is an MCMC technique designed to improve mixing for target distributions that are difficult to explore directly, such as multimodal or highly concentrated distributions~\citep{swendsen1986replica,earl2005parallel}.
Instead of running one chain, PT runs multiple chains, or \emph{replicas}, in parallel over a ladder of tempered distributions.

PT introduces a sequence of auxiliary distributions $ \pi_1(\mathbf{x}), \pi_2(\mathbf{x}), \dots, \pi_K(\mathbf{x})$.
The joint state of all replicas is denoted by $
  \mathbf{X}
  =
  \left(
    \mathbf{x}^{(1)}, \mathbf{x}^{(2)}, \dots, \mathbf{x}^{(K)}
  \right)$,
where $\mathbf{x}^{(k)}$ is the current state of replica $k$.
The replicas are coupled together through swaps between neighboring replicas $k$ and $k+1$ proposing state exchange as $(\mathbf{x}^{(k)}, \mathbf{x}^{(k+1)}) \to (\mathbf{x}^{(k+1)}, \mathbf{x}^{(k)})$, with acceptance probability
\begin{equation}
  A_{\mathrm{swap}}
  =
  \min\!\left\{
    1,\,
    \frac{
      \pi_k\!\left(\mathbf{x}^{(k+1)}\right)
      \pi_{k+1}\!\left(\mathbf{x}^{(k)}\right)
    }{
      \pi_k\!\left(\mathbf{x}^{(k)}\right)
      \pi_{k+1}\!\left(\mathbf{x}^{(k+1)}\right)
    }
  \right\}.
\end{equation}

\section{Methodology}
\label{sec:method}

\begin{figure}[t]
    \centering
    \vspace{-.3cm}
    \safeincludegraphics[width=.95\linewidth]{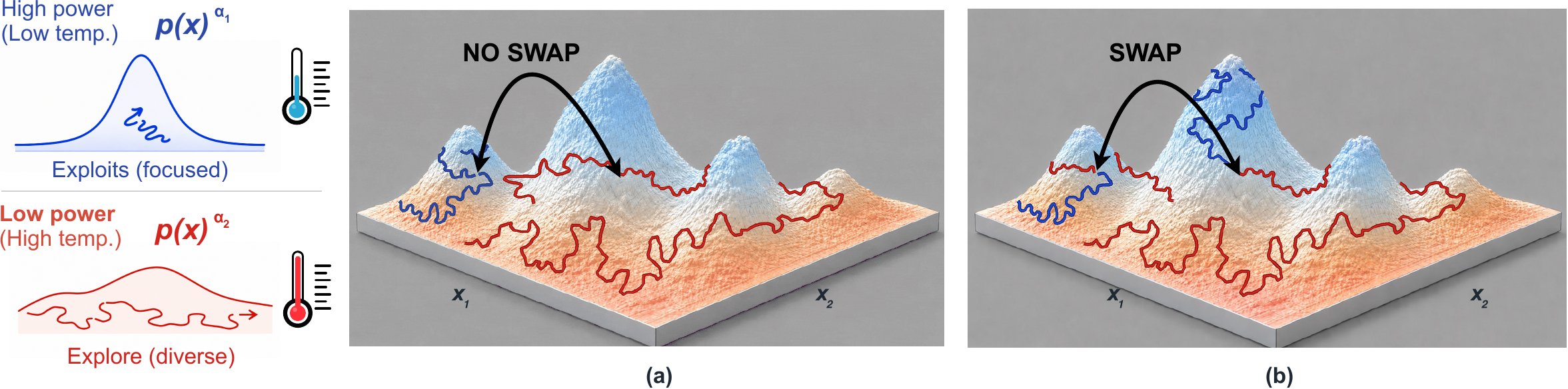}
    \vspace{-.1cm}
    \caption{In our \method, a flat, low-power chain {\color{red}$p(\mathbf{x})^{\alpha_2}$ (red)} \emph{explores freely} while a sharp, high-power chain {\color{blue}$p(\mathbf{x})^{\alpha_1}$ (blue)} exploits local modes to \emph{reason sharply}; periodically the chains propose to swap states. In (a) the swap is rejected and both stay put. In (b), it is accepted, letting the focused chain teleport to a high-probability region the explorer found, rather than climbing over the barrier itself.}
    \label{fig:placeholder}
\end{figure}

\subsection{Motivation: The Exploration Bottleneck}
\label{sec:exploration-horizon}

Recall that at horizon $T$, the sharpened power target has the form
$\pi_\alpha(\mathbf{x})\propto p_0(\mathbf{x})^\alpha$. The power $\alpha$
encodes an exploration--exploitation trade-off that no single value resolves:
raising it concentrates mass on high-likelihood responses---desirable when
high-likelihood traces are more coherent or reliable---but sharpens the
landscape and impedes local exploration, while lowering it flattens the
landscape and eases exploration at the price of leaving good answers diffuse.

To address this exploration--exploitation trade-off, we introduce \textbf{Parallel Power Tempering (\method{})}, which couples multiple replicas through swaps. Specializing the parallel-tempering construction from Section~\ref{sec:prelim} to sequence-level power targets, we choose a ladder of sharpening powers $1\leq\alpha_1<\cdots<\alpha_K$ and assign replica $k$ the target $\pi_{\alpha_k}(\mathbf{x})\propto p_0(\mathbf{x})^{\alpha_k}$ and couple the
replicas through state swaps. Replicas with small $\alpha_k$ act as
high-temperature explorers that range across alternative reasoning paths,
while swaps let the promising trajectories they discover travel up the ladder
to the target chain---the sharpest rung $\alpha_K$---which thus reaches
high-probability regions by exchange rather than by climbing over reasoning
barriers itself. See Figure~\ref{fig:placeholder} for a visual example.

\subsection{Structural Truncation Bias and the Fixed-Horizon Correction}
\label{sec:fixed-horizon}
To apply \method{} to variable-length sequences, refinement must allow
early termination to be revised.
Naïvely extending the early-stopping implementation of \citet{karan2025reasoning} 
to the parallel-tempering setting inherits a one-way truncation bias.
Each suffix proposal regenerates only through the current realized length,
so a shorter candidate $\mathbf{y}$ can satisfy
$q(\mathbf{y}\mid\mathbf{x})>0$ while $q(\mathbf{x}\mid\mathbf{y})=0$.
The exact MH rule should reject this move, but the implementation can accept it
using token scores truncated to the candidate's endpoint. 
Therefore, the accepted shortenings create
uncompensated probability flow toward shorter records, preventing
the sampler from preserving the true target.

Under a uniform accepted-shortening condition, this one-way truncation also yields an asymptotic bias floor.
More formally, let $\mathcal B_h$ denote the records of length at most $h$, and 
let $b_h^{(k)}=\pi_{\alpha_k}(\mathcal B_h^c)$ denote the probability mass
of longer records at rung $k$. 
Assume the MH updates accept a move shortening a record from
above $h$ to at most $h$,  with probability $\delta>0$,
but can never lengthen records.
Then for any initial distribution $\nu$, after $N$ iterations, we show in App. \ref{sec:truncation-lower-bound} that
\begin{equation}
\label{eq:truncation-floor}
\liminf_{N\to\infty}
\operatorname{TV}(\nu_N,\Pi)
\ge 1-\prod_{k=1}^K(1-b_h^{(k)}).
\end{equation}
This inequality quantifies the asymptotic error due to the violation of the target preservation by the 
unbalanced MH rule.
Crucially, this truncation would \emph{ hinder \method's
ability to explore}, as shortened records transferred to lower power rung
would retain their restricted generation budget.


\begin{remark}\label{rem:fixed-hor}
To remove this one-way truncation, we maintain a fixed horizon $T$, through deterministic post-EOS padding $\bot$, 
allowing suffix proposals starting at or before EOS
to regenerate up to $T$ regardless of the current completion length, and explore more effectively.
We show in App.~\ref{sec:theoretical-properties}
that this fixed-horizon correction removes structural truncation bias,
while local MH updates composed with replica swaps preserve the true sharpened target.
\end{remark}

\begin{wrapfigure}[13]{r}{0.3\columnwidth}
  \centering
  \vspace{-.3cm}
  \includegraphics[width=.95\linewidth]{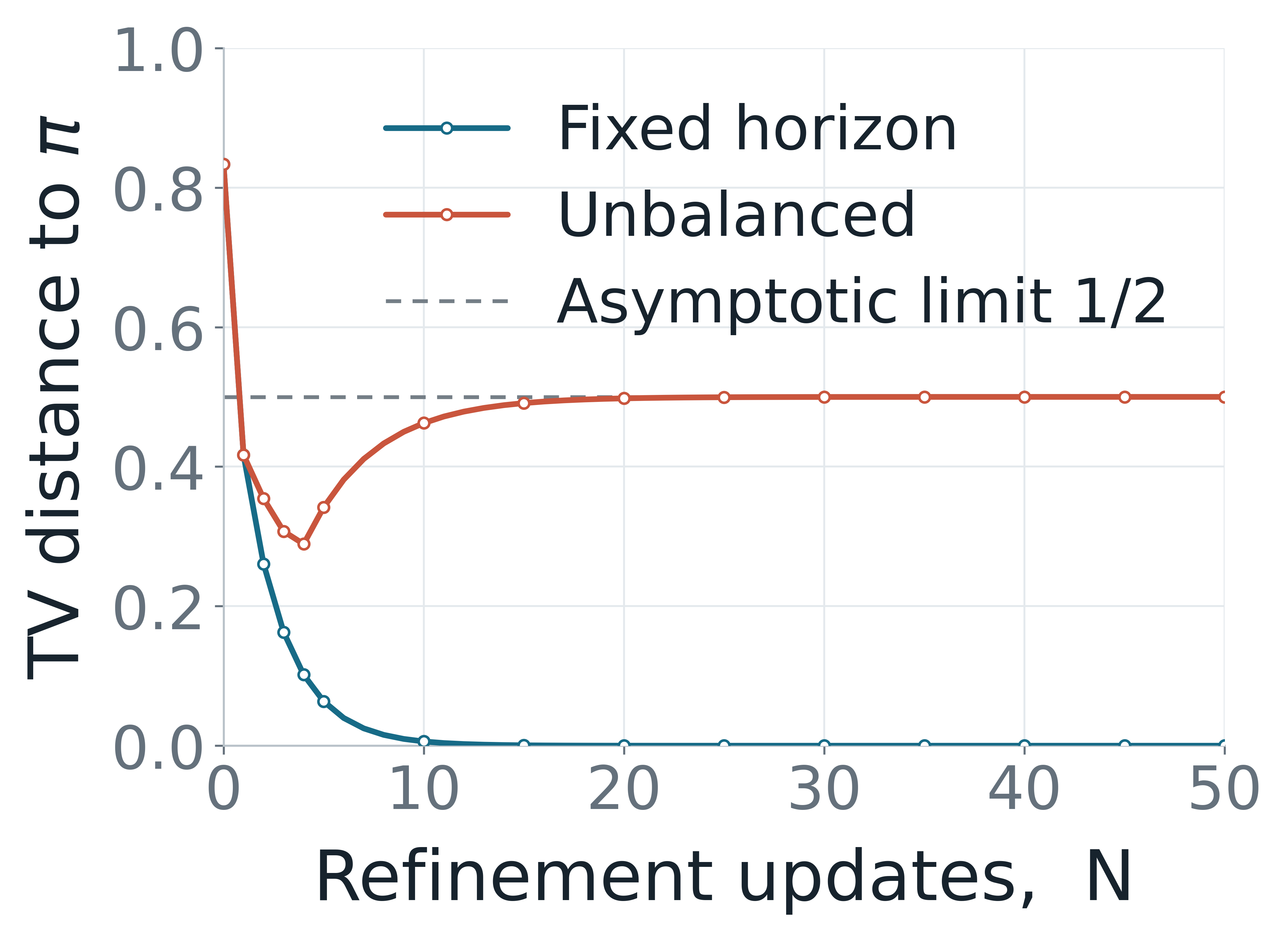}
  \caption{Comparison between fixed-horizon and unbalanced refinement starting from $X_3$.}
  \label{fig:toy-tv-x4}
\end{wrapfigure}
\textbf{Error Floor: Example. }
To better understand the effect of the one-way truncation, consider
one chain at a fixed horizon $T=2$, with power $\alpha=2$ and
vocabulary $\{a,e\}$, where $e$ is the terminal token. The model assigns
probability $1/2$ to each token at every unfinished prefix. On the common record space
$(X_1,X_2,X_3,X_4)=\bigl((e,\bot),(a),(a,e),(a,a)\bigr)$,
and the powered target is $\pi=(\frac23,0,\frac16,\frac16)$, assigning zero target mass
on the unfinished one-token record $X_2$.
Assume, both samplers start from $X_3=(a,e)$ and perform $N$ MH updates.
Once unbalanced refinement accepts a one-token
record, it cannot revisit either two-token continuation. Fixed-horizon
refinement retains this possibility: from $X_1$, it returns to a two-token
record with probability $1/8$ per update. Thus an early EOS remains revisable.
For $N\ge1$, the fixed-horizon and truncating laws $\nu_N$ and
$\widetilde\nu_N$ satisfy
\begin{align*}
\operatorname{TV}\!\left(\nu_N,\pi\right)
&=
\frac23\left(\frac58\right)^N
\xrightarrow[N\to\infty]{}0,
&
\operatorname{TV}\!\left(\widetilde\nu_N,\pi\right)
&\xrightarrow[N\to\infty]{}\frac12,
\end{align*}
Although, unbalanced refinement initially reduces the error, as illustrated in Figure~\ref{fig:toy-tv-x4},
exploration is eventually confined to the one-token records. More details are left for App. \ref{sec:theoretical-properties}.

\subsection{Parallel Power Tempering}
\label{sec:repps-llm}
Following the progressive schedule of \citet{karan2025reasoning}, we choose a
block width $B$ and define
\[
T_m=\min\{mB,T\},\qquad m=0,\ldots,M,\qquad M=\lceil T/B\rceil.
\]
The fixed-horizon construction of Remark \ref{rem:fixed-hor} is applied at every stage. 
More specifically, at stage $m$, all replicas share horizon $T_m$, with positions after EOS
padded by $\bot$. Let us denote the record of replica $k$ by
$\mathbf{x}^{(k)}\in\mathcal S_{T_m}$, targeting 
\(
\pi_k(\mathbf{x})\propto p_0(\mathbf{x})^{\alpha_k},
\)
with stage-$m$ joint target being expressed as
\begin{equation}
\label{eq:stage-joint-target}
    \Pi_m\!\left(\mathbf{x}^{(1)},\ldots,\mathbf{x}^{(K)}\right)
    :=
    \prod_{k=1}^{K}\pi_k\!\left(\mathbf{x}^{(k)}\right).
\end{equation}
Because $\Pi_m$ factorizes, its $k^\text{th}$ marginal is $\pi_k$. Since $T_M=T$,
the designated rung $k^\star$ has marginal
$\pi_{k^\star}=\pi_{\alpha^\star}$, which is exactly the target at the horizon-$T$
(Section~\ref{sec:prelim}). The auxiliary replicas provide potential transport
paths without altering this output marginal; their mixing advantage is
conditional on their local kernels being faster than the output-rung kernel.

\noindent\textbf{(1) Block proposal.}
Starting from its stage-$(m-1)$ record, we extend each open replica by up to
$B$ tokens using the tokenwise-powered proposal $g_{\alpha_k}$ from
Eq.~\ref{eq:proposal}:
\begin{equation}
\label{eq:proposal-g}
    x_t^{(k)}\sim
    g_{\alpha_k}\!\left(\,\cdot\mid\mathbf{x}^{(k)}_{<t}\right),
    \qquad t=T_{m-1}+1,\ldots,T_m.
\end{equation}
Generation stops at EOS, so only non-terminated records require model calls.
Sampling from $g_{\alpha_k}$ corresponds to token-level decoding at temperature
$1/\alpha_k$, which differs from the sequence-level target. Therefore, block
extension initializes stage $m$ but does not draw from target $\Pi_m$.

\noindent\textbf{(2) Local refinement.}
At stage $m$ and rung $k$, fix the current record $\mathbf{x}\in\mathcal S_{T_m}$.
Each local update draws a resampling position
$r\sim\operatorname{Unif}\{1,\ldots,T_m\}$, retains the prefix
$\mathbf{x}_{<r}$, and regenerates the suffix, yielding
$\mathbf{y}\sim q_r(\cdot\mid\mathbf{x})$, where
\begin{equation}
\label{eq:fixed-horizon-proposal}
q_r(\mathbf{y}\mid\mathbf{x})
    :=
    \mathbf 1\{\mathbf{y}_{<r}=\mathbf{x}_{<r}\}
    \prod_{t=r}^{T_m}
    g_{\alpha_k}(y_t\mid\mathbf{y}_{<t}).
\end{equation}
The MH correction accepts the candidate with probability
\begin{equation}
\label{eq:local-accept}
    A^\text{refine}=\min\left\{1,
    \frac{
        \pi_k(\mathbf{y})\,q_r(\mathbf{x}\mid\mathbf{y})
    }{
        \pi_k(\mathbf{x})\,q_r(\mathbf{y}\mid\mathbf{x})
    }\right\}.
\end{equation}
For each fixed $r$, the resulting MH kernel preserves $\pi_k$; hence, so does
the state-independent uniform mixture over $r$. Because $r$ may fall anywhere
in the record, refinement can revise decisions before the newest block.
Forward and reverse probabilities are both evaluated through $T_m$. 

\noindent\textbf{(3) Swaps.}
At horizon $T_m$, to couple the chains along the ladder, we sweep through the
adjacent pairs in order $k=1,\ldots,K-1$ and propose exchanging the records at
rungs $k$ and $k+1$,
\[
(\mathbf{x}^{(k)},\mathbf{x}^{(k+1)})
\mapsto
(\mathbf{x}^{(k+1)},\mathbf{x}^{(k)}),
\]
committing each accepted swap before proceeding to the next pair.
This sequential ordering allows a state to traverse multiple rungs within one
sweep. Related ordered swap schedules are studied in classical parallel tempering
\citep{DBLP:conf/aaai/000200LL23}.
\begin{figure}[t]
    \centering
    \vspace{-.5cm}
    \makebox[\linewidth][c]{%
        \includegraphics[width=\linewidth,trim=0 1cm 0 0,clip]{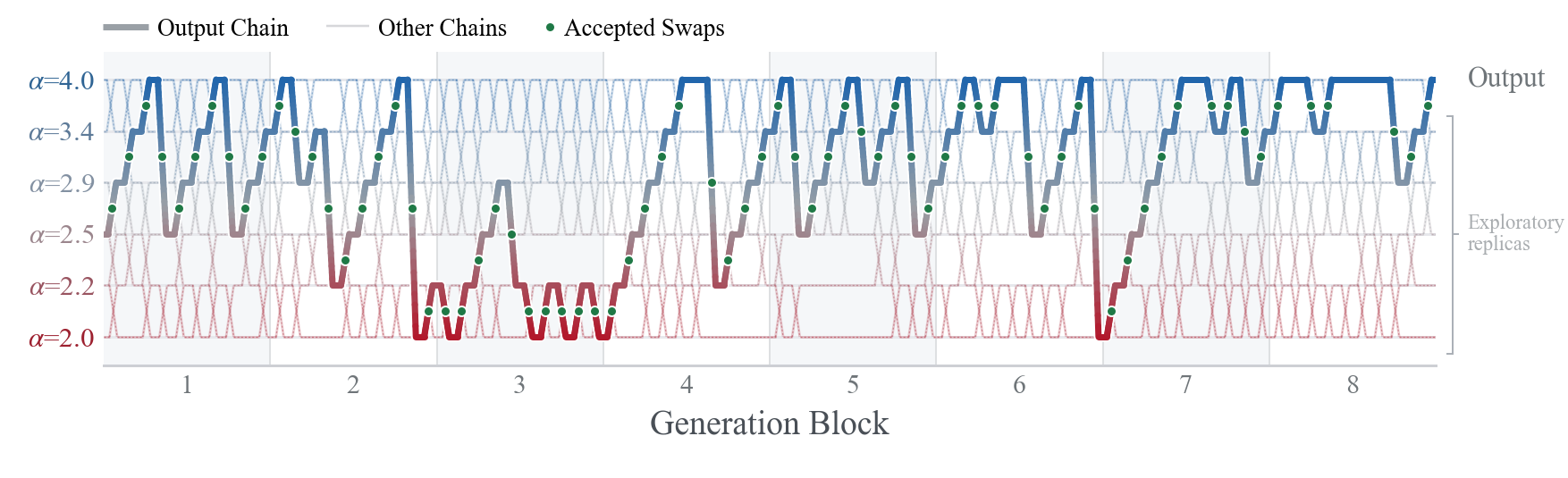}%
    }
    \caption{\textbf{Parallel tempering across the sharpening ladder.}
    Green dots mark accepted adjacent swaps; the colored gradient line traces a sampled trajectory across replicas and generation blocks.}
    \label{fig:parallel-tempering-trace}
\end{figure}

In Figure~\ref{fig:parallel-tempering-trace}, the sequence that is ultimately
returned has been carried by accepted swaps through every rung of the ladder,
from the most exploratory replica to the sharpest, before reaching the output
rung.
The MH acceptance probability is
\begin{equation}
\label{eq:swap-accept-llm}
    A^\text{swap}=\min\!\left\{1,
    \frac{
    \pi_k(\mathbf{x}^{(k+1)})
    \pi_{k+1}(\mathbf{x}^{(k)})
    }{
    \pi_k(\mathbf{x}^{(k)})
    \pi_{k+1}(\mathbf{x}^{(k+1)})
    }\right\}.
\end{equation}
Substituting $\pi_k\propto p_0^{\alpha_k}$ cancels the normalizing constants and gives
\begin{equation}
\label{eq:stage-swap}
    A^\text{swap}=\min\!\left\{1,
    \exp\!\left[
    (\alpha_{k+1}-\alpha_k)
    \left(
    \log p_0(\mathbf{x}^{(k)})
    -\log p_0(\mathbf{x}^{(k+1)})
    \right)
    \right]\right\}.
\end{equation}
\paragraph{Free Exploration.}
Swaps can transport promising trajectories from exploratory rungs to a
sharper rung whose local updates can complete them, potentially \emph{exponentially} accelerating
discovery of better modes \citep{dong2022spectral}, rather than improving only linearly with additional independent samples.
Notably, this exploration comes with minimal computational overhead as
each adjacent exchange is an MH move evaluated from cached sequence
likelihoods, so the exchange itself requires no additional model passes
(Table~\ref{tab:time-memory-main}).
Algorithm~\ref{alg:PPT} specifies the complete schedule, and
Appendix~\ref{apx:implement} gives implementation details; 
after stage $M$, the record at rung $k^\star$ is returned.

\subsection{Ladder Design}
\label{sec:ladder-design}
To effectively capitalize the benefit of Parallel Tempering, adjacent chains overlap sufficiently. 
Powers placed too far apart exchange rarely and cut the ladder into isolated pieces;
powers placed too close spend replicas without sufficient exploration. For
fixed endpoints $\alpha_1,\alpha_K$ and $K$ chains, an optimal ladder design is the
\emph{equi-accepting} ladder~\citep{rathore2005optimal, DBLP:conf/aaai/000200LL23}. For expected
acceptance $\mathcal A_k$ between rungs $k$ and $k+1$, the intermediate powers are chosen such that
\begin{equation}
\label{eq:equi-accepting-ladder}
\gA_1=\gA_2=\cdots=\gA_{K-1}.
\end{equation}
A pair with substantially lower acceptance is a bottleneck that throttles
transport along the whole ladder, while unusually high acceptance signals two
powers that are needlessly close; equalizing acceptances maximizes the weakest
link (Appendix~\ref{subsec:maximin-ladder}).
\begin{remark}
\label{rem:ar_geom_seq}
The acceptance rate $\gA_k$ in Eq.~\ref{eq:stage-swap} is governed by the gap
$\alpha_{k+1}-\alpha_k$ and the spread of the sequence
log-likelihood $\log p_0$.
When this spread
scales as $1/\alpha$, then $\gA_k$
becomes a function of the ratio $\alpha_{k+1}/\alpha_k$, and the
equi-accepting condition of Eq.~\ref{eq:equi-accepting-ladder} reduces
to the geometric ladder
\[
\alpha_k
=
\alpha_1
\left(\frac{\alpha_K}{\alpha_1}\right)^{\frac{k-1}{K-1}},
\qquad k=1,\ldots,K,
\]
In practice, we initialize with a geometric ladder and tune the
intermediate powers empirically until the measured swap acceptances
are approximately equal. More details are left for App. \ref{sec:appendix-ladder-design}.
\end{remark}

\noindent\textbf{Schedule}
Each adjacent-pair swap is itself an MH move on the joint target, so any composition
of swaps preserves $\Pi_m$ at every stage $m$, and hence $\Pi$. Therefore, the communication
schedule cannot change what we sample, only how fast records travel across the
ladder. Choosing one is an efficiency question, and its answer depends on
the regime.
In our implementation, we employ an ordered sweep of adjacent exchanges, 
applying each accepted swap immediately, as presented in Section~\ref{sec:repps-llm}.
This allows a state to traverse several rungs within a single sweep.
An alternative schedule is an asymptotically-optimal swap schedule in classical parallel tempering: the deterministic
even--odd (DEO) communication defined in
\citep{okabe2001replica,syed2022nonreversible}, which attempts a
single parity matching of disjoint interfaces and alternates between the two matchings, so that the swaps within a round can run in parallel.

\begin{remark}
If acceptance probabilities are history-independent and equal to a constant $\bar \gA$, then the expected
iteration counts for a tagged-replica round trip are
\begin{equation}
    \E T_{\mathrm{ADJ}} = K\left(1+(K-1)\frac{1-\bar\gA}{\bar\gA}\right), \qquad \E T_{\mathrm{DEO}} = 2K\left(1+(K-1)\frac{1-\bar\gA}{\bar\gA}\right),
\end{equation}
Thus, ordered ADJ requires half as many expected iterations as DEO,
where an ADJ iteration comprises a complete adjacent sweep and a DEO
iteration comprises one parity layer. This reduction comes at the cost
of executing all $K-1$ swap attempts sequentially. In our regime, where
we use few replicas and communication overhead is small relative to
local refinement, this tradeoff favors ADJ. Conversely, DEO becomes faster for
sufficiently large $K$, since ADJ's sequential communication overhead
grows quadratically with the number of replicas, compared to the DEO's linear scaling.
\end{remark}

\section{Experiments}\label{sec:exp}
\raggedbottom
\setlength{\textfloatsep}{8pt plus 2pt minus 2pt}
\setlength{\floatsep}{8pt plus 2pt minus 2pt}
\setlength{\intextsep}{8pt plus 2pt minus 2pt}

\begin{table*}[t]
  \centering
  \small
  \setlength{\tabcolsep}{5pt}
  \renewcommand{\arraystretch}{0.92}
  \caption{Pass@1 accuracy (\%) with one returned completion per item. \textbf{Bold} marks the best result, including ties, within each model group}
  \label{tab:main-results}
  \begin{adjustbox}{max width=\textwidth}
  \begin{tabular}{l|cccccc}
    \toprule
    \textbf{Method} & \textbf{MATH500} & \textbf{GPQA} & \textbf{HumanEval} & \textbf{GSM8K} & \textbf{AIME 24\&25} & \textbf{LCB v5} \\
    \midrule
    \multicolumn{7}{l}{\textbf{Qwen3-4B}} \\
    \hspace{12pt} Standard
    & 83.3 & 51.0 & 85.4 & 94.6 & 70.7 & 45.6 \\
    \hspace{12pt} Lower Temperature
    & 85.1 & 48.5 & 86.6 & 94.5 & 72.0 & 44.8 \\
    \hspace{12pt} Power Sampling
    & 83.5 & 51.0 & 90.2 & 94.2 & 73.3 & 55.1 \\
    \hspace{12pt} PowerSMC
    & 83.2 & 47.5 & 87.9 & 94.3 & 61.7 & 52.3 \\
    \hspace{12pt} GRPO
    & 85.6 & 50.0 & 91.6 & 94.6 & 73.3 & 52.7 \\
    \rowcolor{pptrow} \hspace{12pt} \textbf{\method{} (Ours)}
    & \textbf{86.0} & \textbf{53.6} & \textbf{93.9} & \textbf{94.8} & \textbf{78.3} & \textbf{57.9} \\
    \midrule
    \multicolumn{7}{l}{\textbf{Qwen3-8B}} \\
    \hspace{12pt} Standard
    & 83.9 & 55.8 & 84.9 & 94.6 & 72.7 & 48.1 \\
    \hspace{12pt} Lower Temperature
    & 84.6 & 54.5 & 87.2 & 95.0 & 73.0 & 48.4 \\
    \hspace{12pt} Power Sampling
    & 82.2 & 57.6 & 90.7 & 95.2 & 73.3 & 53.5 \\
    \hspace{12pt} PowerSMC
    & 84.1 & 58.0 & 90.2 & \textbf{95.5} & 65.0 & 53.6 \\
    \hspace{12pt} GRPO
    & 86.4 & 53.0 & 93.3 & 94.6 & 74.0 & 54.1 \\
    \rowcolor{pptrow} \hspace{12pt} \textbf{\method{} (Ours)}
    & \textbf{88.0} & \textbf{60.1} & \textbf{94.9} & \textbf{95.5} & \textbf{78.3} & \textbf{58.4} \\
    \bottomrule
  \end{tabular}
  \end{adjustbox}
\end{table*}

\subsection{Main Results}
We compare \method{} with standard and low-temperature decoding, Power Sampling~\citep{karan2025reasoning}, PowerSMC~\citep{DBLP:arxiv/power-smc}, and GRPO~\citep{shao2024deepseekmath}, which is a training-based reference.
Table~\ref{tab:main-results} reports results for Qwen3-4B and Qwen3-8B on MATH500~\citep{lightman2024lets}, GPQA~\citep{rein2024gpqa}, HumanEval~\citep{chen2021evaluatingllmcode}, GSM8K~\citep{gsm8k_dataset}, AIME 24\&25 ~\citep{aime24,aime25}, and LiveCodeBench (LCB) v5~\citep{DBLP:conf/iclr/JainHGLYZWSSS25}, with a common benchmark-specific completion cap shared by all methods. Table~\ref{tab:qwen35-9b} extends the comparison to the more recent and capable Qwen3.5-9B on the three most challenging benchmarks: GPQA, AIME 24\&25, and LCB v5.

Across both tables, \method{} attains the best or tied-best accuracy in all $15$ model--benchmark settings, spanning mathematics, science, and code; its only tie is on the saturated GSM8K.
It is demonstrated that single-chain sharpening is unreliable on strong models, as Power Sampling falls below standard decoding in most Qwen3.5-9B benchmarks, PowerSMC trails standard decoding by a wide margin on AIME 24\&25, and lower-temperature decoding yields only modest gains.
Conversely, \method{} targets the same powered distribution and outperforms every sampling-based baseline, never regresses below standard decoding, and even achieves higher accuracy than GRPO without additional parameter updates or rewards.
Notably, using Qwen3.5-9B, we observed performance that matches and even exceeds frontier models. 
Additional results are in Appendix~\ref{apx:extra-exp}.

\textbf{Runtime and Memory Cost  }
\label{sec:runtime-memory}
Furthermore, we provide a breakdown of the computational cost of \method{} in the LCB v5 dataset.
Table~\ref{tab:time-memory-main} reports per problem: peak-memory usage, local-update time,
and replica-exchange overhead for the single chain ($K=1$) and \method{} ($K=4$ for Qwen3-8B, $K=3$ for Qwen3.5-9B).
The efficiency of our implementation is highlighted as batching local 
updates across replicas accelerates wall-clock inference-time.
As shown in Table~\ref{tab:time-memory-main}, the multi-replica runs require
only $1.03\times$ and $1.72\times$ the single-chain local-update
wall-clock time for Qwen3-8B and Qwen3.5-9B, respectively, while 
replica exchange accounts for only $0.1\%$ of the total time in both cases.
However, this efficiency comes at the expense of higher peak memory usage: approximately $3.73\times$ and $2.47\times$ higher than the single-chain values.



\subsection{Compute-matched Controls}
\label{sec:compute-matched}

\begin{table*}[!t]
\centering
\vspace{-10pt}
\begin{minipage}[t]{0.44\textwidth}
\vspace{0pt}
\centering
\captionsetup{font=normalsize,skip=4pt}
    \caption{Pass@1 Accuracy using Qwen3.5-9B on GPQA, AIME 24\&25 and LCB v5.}
    \label{tab:qwen35-9b}
    \setlength{\tabcolsep}{3pt}
\begin{adjustbox}{width=\textwidth}
\begin{tabular}{lccc}
\toprule
\textbf{Qwen3.5-9B} & \textbf{GPQA} & \textbf{AIME 24\&25} & \textbf{LCB} \\
\midrule
Standard          & 82.4 & 82.3 & 81.9 \\
Power Sampling    & 77.6 & 78.7 & 82.0 \\
Lower Temperature & 81.8 & 84.3 & 82.4 \\
\midrule\textcolor{gray}{GPT-5$^{\dagger}$} & \textcolor{gray}{85.4} & \textcolor{gray}{95.0} & \textcolor{gray}{84.6} \\\textcolor{gray}{Opus 4.5$^{\dagger}$} & \textcolor{gray}{86.6} & \textcolor{gray}{93.3} & \textcolor{gray}{87.1} \\\textcolor{gray}{GLM 4.7$^{\dagger}$} & \textcolor{gray}{85.9} & \textcolor{gray}{95.0} & \textcolor{gray}{89.4} \\
\midrule
\rowcolor{pptrow} \textbf{\method{}(Ours)} & \textbf{85.9} & \textbf{93.3} & \textbf{84.9} \\
\bottomrule
\end{tabular}
\end{adjustbox}
  \par\vspace{2pt}{\raggedright\footnotesize\color{gray}\noindent$^{\dagger}$ Reported values.\par}
  \end{minipage}\hfill
\begin{minipage}[t]{0.54\textwidth}
\vspace{0pt}
    \centering
    \captionsetup{font=normalsize,skip=4pt,width=\linewidth}
    \caption{Per-example runtime and peak memory for our implementation.}
    \label{tab:time-memory-main}
    \small
    \setlength{\tabcolsep}{1pt}
    \setlength{\aboverulesep}{0.4ex}
    \setlength{\belowrulesep}{0.55ex}
    \renewcommand{\arraystretch}{0.97}
    \begin{tabular*}{\linewidth}{@{}l@{\extracolsep{\fill}}ccc@{}}
    \toprule
    \textbf{Method} &
    \shortstack{\textbf{Peak KV cache}\\ \textbf{memory (GiB)}} &
    \shortstack{\textbf{Local-update}\\ \textbf{time (sec)}} &
    \shortstack{\textbf{Swap}\\ \textbf{overhead (sec)}} \\
    \midrule
    \multicolumn{4}{@{}l}{\textbf{Qwen3-8B}} \\
    Single chain
        & 0.86 & 36.1 & -- \\
    \method{} {\tiny ($K=4$)}
        &3.21 & 37.2 & 0.037 \\
    \midrule
    \multicolumn{4}{@{}l}{\textbf{Qwen3.5-9B}} \\
    Single chain
        &3.06 & 43.8 & -- \\
    \method{} {\tiny ($K=3$)}
        &7.56 & 75.4 & 0.08 \\
    \bottomrule
    \end{tabular*}
\end{minipage}
\end{table*}

\leavevmode\vspace*{-\dimexpr\baselineskip+\parskip\relax}\par 
\begin{wrapfigure}{r}{0.42\textwidth}
\vspace{-.2cm}
\begin{minipage}{\linewidth}
    \centering
    \captionsetup{font=normalsize,skip=4pt}
    \captionof{table}{Compute-matched Qwen3-8B accuracy (\%) between our multi-chain \method $(K>1)$ and single chain $(K=1)$ samplers.}
    \label{tab:compute-matched-main}
    \setlength{\tabcolsep}{3pt}
    \begin{tabular}{lcc}
      \toprule
      \textbf{Benchmark} & \textbf{\method{}} & \textbf{Single-chain}\\
      \midrule
      MATH500 & \textbf{88.0} & 85.6 \\
      GPQA & \textbf{60.1} & 59.1 \\
      \small  AIME 24\&25 & \textbf{78.3} & 75.3 \\
      LCB v5 & \textbf{58.4} & 54.9 \\
      \bottomrule
    \end{tabular}
\end{minipage}
\vspace{-9pt} 
\end{wrapfigure}

\noindent\textbf{Additional single-chain refinement.}
To test whether additional refinement of a single chain can recover the gains of \method{}, we increase the number of MCMC steps in single chain settings ($K=1$) to match the total compute in terms of decode tokens generated by our multi-chain \method{} on Qwen3-8B.
Table~\ref{tab:compute-matched-main} shows that \method{} remains consistently more accurate across all four benchmarks.
The largest gaps occur on the hardest tasks, namely LCB v5 and AIME 24\&25, where \method{} reaches $58.4\%$ and $78.3\%$ compared with $54.9\%$ and $75.3\%$ for the extended single chain, respectively.
Thus, additional single-chain refinement does not close the performance gap at the matched compute budget.
\par
\par

\noindent\textbf{$\mathbf{K}$ uncoupled chains.}
To isolate the contribution of parallel tempering, Table~\ref{tab:swap-ablation} compares \method{} with an uncoupled ladder on both Qwen3 models. The uncoupled ladder runs the same number of chains, each refined independently over a fixed horizon with no swaps.
We report two metrics: i) \emph{Pass@1} scores a single trace; the coldest rung for \method{}, and the rung with the highest accumulated log-likelihood for the uncoupled ladder, and ii) \emph{Vote} takes the majority answer across all traces.
Table \ref{tab:swap-ablation} shows that enabling swaps consistently improves both the Pass@1 and voting accuracy in every model--benchmark setting.
Overall, we can see that significant gains are attributed to the exchange mechanism, not merely running and aggregating more replicas.

\begin{table*}[!t]
\centering
\caption{Compute-matched accuracy (\%) with and without swaps using the same number of chains. Pass@1 scores a single selected trace; Vote takes the majority answer across traces. \textbf{Bold} marks the better result.}
\label{tab:swap-ablation}
\setlength{\tabcolsep}{5pt}
\begin{tabular}{l|cc|cc|cc}
\toprule
\multirow{2}{*}{\textbf{Method}} &
\multicolumn{2}{c|}{\textbf{MATH 500}} &
\multicolumn{2}{c|}{\textbf{AIME 24\&25}} &
\multicolumn{2}{c}{\textbf{GPQA}} \\
\cmidrule(lr){2-3} \cmidrule(lr){4-5} \cmidrule(lr){6-7}
& Pass@1 & Vote
& Pass@1 & Vote
& Pass@1 & Vote \\
\midrule
\multicolumn{7}{l}{\textbf{Qwen3-4B}} \\
\hspace{12pt} Uncoupled ladder
  & 84.2 & 87.6
  & 75.7 & 78.3
  & 52.3 & 48.5 \\
\rowcolor{pptrow} \hspace{12pt} \textbf{\method{} (Ours)}
  & \textbf{86.0} & \textbf{88.8}
  & \textbf{78.3} & \textbf{80.0}
  & \textbf{53.6} & \textbf{55.1} \\
\midrule
\multicolumn{7}{l}{\textbf{Qwen3-8B}} \\
\hspace{12pt} Uncoupled ladder
  & 86.0 & 87.4
  & 76.7 & 80.0
  & 57.6 & 58.9 \\
\rowcolor{pptrow} \hspace{12pt} \textbf{\method{} (Ours)}
  & \textbf{88.0} & \textbf{90.0}
  & \textbf{78.3} & \textbf{81.7}
  & \textbf{60.1} & \textbf{61.6} \\
\bottomrule
\end{tabular}
\end{table*}

\subsection{Ablation Studies}
\label{sec:ablation}

\noindent\textbf{Number of replicas.}
We sweep $K=1$--$5$ with fixed sharpening range, block size, and MCMC steps, using our fixed-horizon kernel at $K=1$ (Figure~\ref{fig:replica_tradeoff}, left).
Compute is measured in wall-clock time in seconds per example.
Parallel batching lowers the cost per replica, so running $K$ chains
scales more favorably than $K$-fold times a single-chain run. 
For example, with $K=4$,
the total cost is only $2.1\times$ and $1.2\times$ that of a single chain
on MATH500 and HumanEval, respectively.
These results further demonstrate the favorable runtime scaling of \method{} as the number of chains increases.
Beyond five replicas, per-replica accuracy plateaus, with diminishing returns for further increasing. 
We found that accuracy saturates around four to five replicas: $K=4$, gains $2.7$ and $3.9$ percentage 
points over $K=1$, while $K=5$ peaks at $+3.2$ and $+4.5$.

\begin{figure}[t]
    \centering
    \begin{minipage}[t]{0.5\linewidth}
        \vspace{10pt}
        \includegraphics[width=\linewidth]{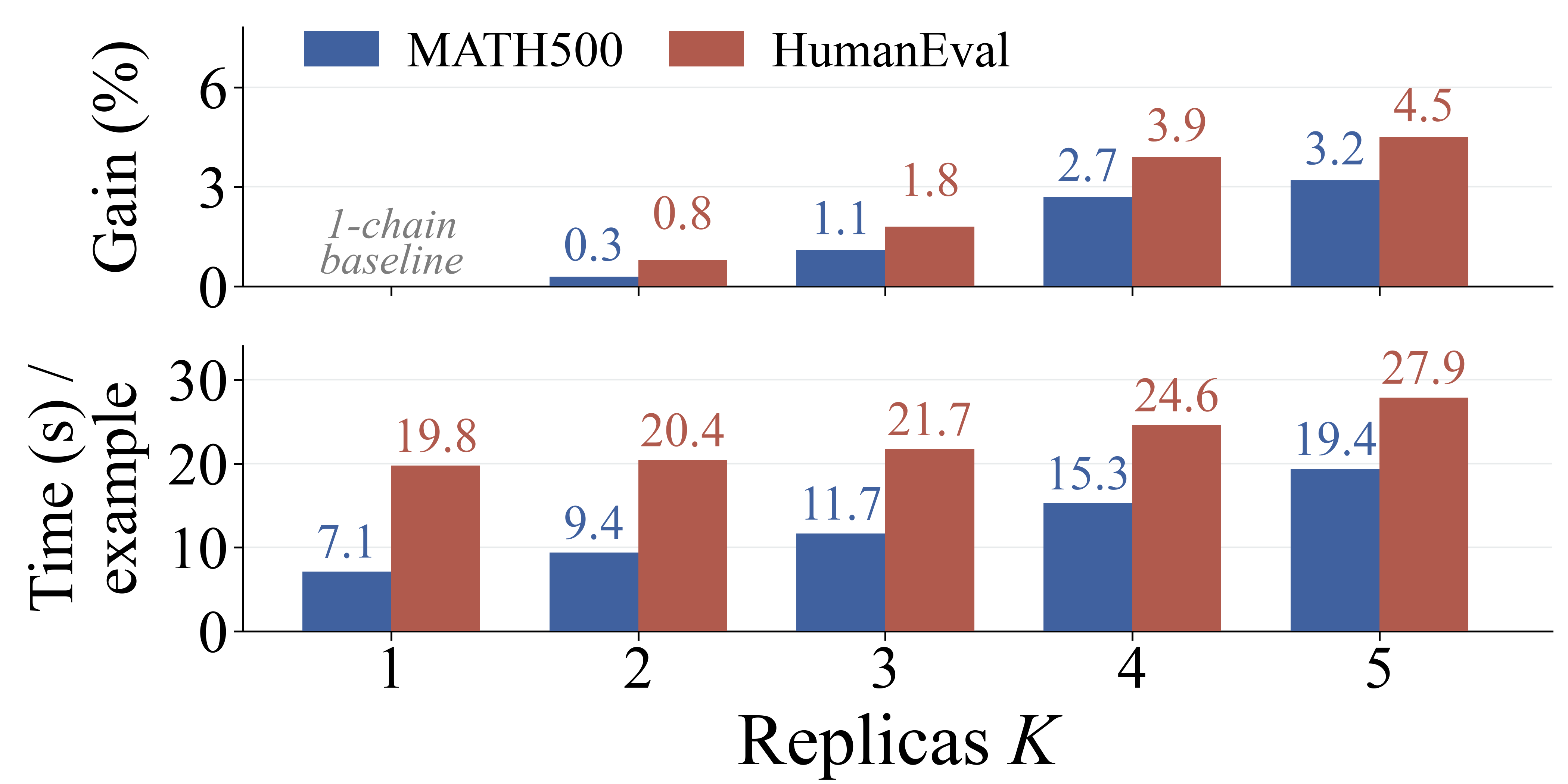}
    \end{minipage}%
    \hfill
    \begin{minipage}[t]{0.48\linewidth}
        \vspace{10pt}
        \includegraphics[width=\linewidth]{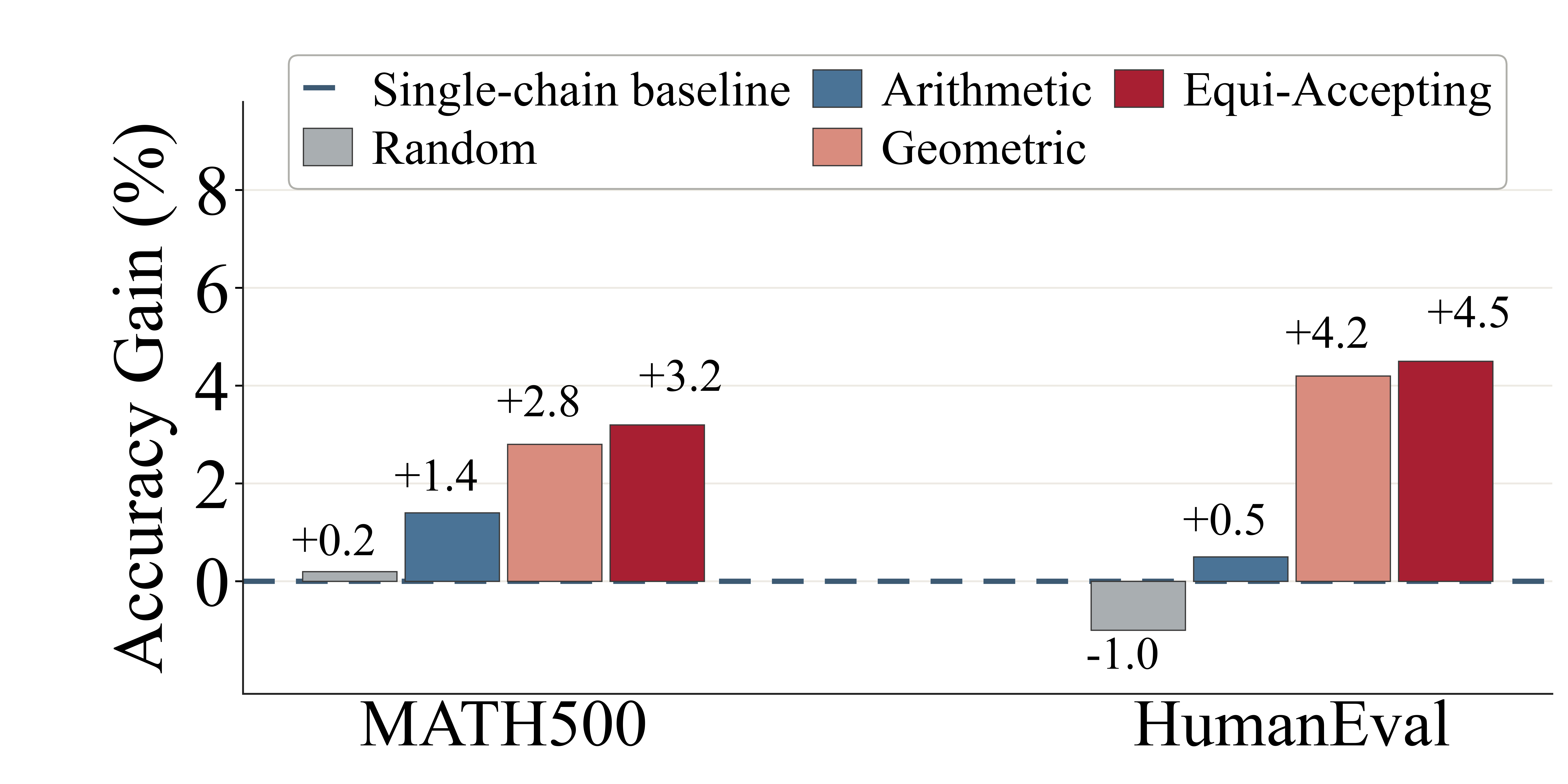}
    \end{minipage}
    \vspace{-.3cm}
    \caption{\textbf{Replica count and ladder design.}
    \emph{Left:} (Top): accuracy gain as the number of replicas increases, and (Bottom): the corresponding compute in wall-clock time per example; the favorable scaling reflects the batching of replicas in parallel.
    \emph{Right:} effect of ladder design on accuracy across two datasets, MATH500 and HumanEval, with $K=5$ replicas.}
    \label{fig:replica_tradeoff}
\end{figure}



\noindent\textbf{Ladder design.}
We examine the effect of appropriate ladder design in the efficacy of the model. We compare: i) randomly picked ladder temperatures, ii) arithmetic sequences , iii) geometric sequences, and iv) equi-accepting sharpening ladders with $K=5$ and fixed block size and decoding budget (Figure~\ref{fig:replica_tradeoff}, right).
Ladder spacing controls how readily states move between exploratory and strongly sharpened replicas.
Random and arithmetic spacing offer limited gains, with random spacing even harming HumanEval.
Geometric spacing, which uses equal power ratios (Remark~\ref{rem:ar_geom_seq}), improves both benchmarks, while equi-accepting performs best.
By approximately equalizing adjacent swap-acceptance rates, the latter aims to prevent any single pair from becoming a communication bottleneck. These results suggest that balancing communication across the ladder helps replicas share useful exploration more effectively.

\flushbottom

\section{Conclusion}
We introduced \method{}, a parallel tempering sampler for power-sharpened LLM distributions that addresses the exploration--exploitation trade-off of single-chain power sampling.
Low-power replicas explore diverse reasoning trajectories, high-power replicas refine them, and swaps couple the ladder power moving promising traces toward the sharpest rung at negligible cost.
In our analysis, we identified a structural truncation bias in early-stopped power samplers and removed it with a fixed-horizon construction that restores preservation of the true sharpened target, and provides all chains the same exploration bandwidth.
Empirically, \method{} is best or tied-best in all $15$ model--benchmark settings and outperforms GRPO without parameter updates or rewards, with gains that come from the swaps rather than extra compute or chains.
With Qwen3.5-9B, it performs comparably to frontier models, suggesting that small frozen models can reason without additional parameter updates.


\section*{Acknowledgment}
We thank the CoreWeave AI cloud platform for supporting part of the computational resources
used in this work. Nan Jiang acknowledges support from the Texas Advanced Computing Center (TACC) under award CCR25054.
E.A. Theodorou and P. Theodoropoulos were partly supported by the Defense Advanced Research Projects Agency (DARPA) through the Artificial Intelligence Quantified (AIQ) program, under Cooperative Agreement No. HR00112520010. The views, opinions, and/or findings expressed are those of the authors and should not be interpreted as representing the official views or policies of the Department of Defense or the U.S. Government.

\newpage
\bibliography{iclr2027_conference}

\newpage
\appendix
\begingroup
\setcounter{tocdepth}{2}
\setlength{\cftbeforetoctitleskip}{0pt}
\setlength{\cftaftertoctitleskip}{6pt}
\setlength{\cftbeforesecskip}{2pt}
\renewcommand{\cftsubsecfont}{\small}
\renewcommand{\cftsubsecpagefont}{\small}
\tableofcontents
\endgroup
\allowdisplaybreaks
\newpage


\appendix
\section{Notation and Preliminaries}
\label{sec:theory}

This section establishes the fixed-horizon representation used throughout the
paper. It introduces prompt-conditioned completion records, the sequence-level
power target, the tokenwise-powered proposal distribution, and the sequence of
stage horizons used by the progressive sampler. 

\subsection{Prompt-conditioned autoregressive model}
Fix a tokenized prompt $\mathbf{x}_0$, a finite language-model vocabulary
$\mathcal V_{\mathrm{LM}}$, a set of configured terminal tokens
$\mathcal E\subseteq\mathcal V_{\mathrm{LM}}$, and a maximum completion
horizon $T\geq1$.  Let $\bot\notin\mathcal V_{\mathrm{LM}}$ be an auxiliary post-terminal symbol used to express the fixed-horizon state space, and define the augmented vocabulary
\[
    \mathcal V
    :=
    \mathcal V_{\mathrm{LM}}\cup\{\bot\}.
\]
The symbol \(\bot\) serves exclusively as bookkeeping for the configured
post-terminal convention. A \emph{completion record} stores the generated
completion tokens, while the prompt \(\mathbf{x}_0\) enters every model distribution as
fixed conditioning context.
For every supported pre-terminal history $h$, the language model defines
normalized next-token probabilities
\(
    p_{\mathrm{LM}}(v\mid \mathbf{x}_0,h),
    \,
    v\in\mathcal V_{\mathrm{LM}}.
\)
We define a completion history \emph{open} through the position preceding its
first configured terminal token and \emph{terminated} from that token onward.
The configured post-terminal convention extends the model distribution to
\(\mathcal V\) through
\begin{equation}
p_0(v\mid \mathbf{x}_0,h)
:=
\begin{cases}
p_{\mathrm{LM}}(v\mid \mathbf{x}_0,h),
&
h\text{ is supported and open and }v\in\mathcal V_{\mathrm{LM}},
\\[1.5mm]
1,
&
h\text{ is supported and terminated and }v=\bot,
\\[1mm]
0,
&
\text{in all remaining cases.}
\end{cases}
\label{eq:theory-absorbing-law}
\end{equation}

In Eq.~\ref{eq:theory-absorbing-law}, invalid constructions, 
such as a $\bot$ before the first terminal token or a 
non-$\bot$ token after termination, are assigned 
zero probability. Generation ends at the first configured terminal token. The post-terminal
convention then fills the remaining horizon with $\bot$, each padding position
contributing a factor of one.

A fixed-horizon record $\mathbf{x} = x_{1:T} \in \mathcal{V}^T$ then has
prompt-conditioned probability
\begin{equation}
    p_0(\mathbf{x} \mid \mathbf{x}_0)
    := \prod_{t=1}^{T} p_0\bigl(x_t \mid \mathbf{x}_0, \mathbf{x}_{<t}\bigr),
    \label{eq:record-prob}
\end{equation}
where each factor is given by Eq.~\ref{eq:theory-absorbing-law}, so any record
containing an invalid construction has zero probability.
The positive-support state space at horizon \(T\) is denoted
\begin{equation}
    \mathcal S_T
    :=
    \left\{
        \mathbf{x}\in\mathcal V^T:
        p_0(\mathbf{x}\mid \mathbf{x}_0)>0
    \right\}.
    \label{eq:theory-positive-support}
\end{equation}
For a sequence \(\mathbf{x}\), we define its first terminal position by 
$$   
    \tau(\mathbf{x})
    :=
    \min\{t\in\{1,\ldots,T\}:x_t\in\mathcal E\},
$$
with its completion length given by
$   
    \operatorname{len}(\mathbf{x})
    :=
    \min\{\tau(\mathbf{x}),T\}.
$
For \(\mathbf{x}\in\mathcal S_T\), when \(\tau(\mathbf{x})\leq T\), the fixed-horizon state
satisfies
\(
    \mathbf{x}_{\tau(\mathbf{x})+1:T}
    =
    (\bot,\ldots,\bot),
\)
and its probability simplifies to
\begin{equation}
    p_0(\mathbf{x}_{1:T}\mid \mathbf{x}_0)
    =
    \prod_{t=1}^{\tau(\mathbf{x})}
    p_{\mathrm{LM}}(x_t\mid \mathbf{x}_0,\mathbf{x}_{<t}).
    \label{eq:theory-terminated-sequence-law}
\end{equation}
For \(\mathbf{x}\in\mathcal S_T\), when no terminal token is emitted, the language model generates
all \(T\) positions and the completion is right-censored at the declared
horizon.
The implementation supports both explicit deterministic tails and equivalent
implicit-tail representations. In the latter representation, the stored
completion record together with the configured post-terminal convention
uniquely specifies the corresponding element of \(\mathcal S_T\). During
progressive construction, an open completion record of length below \(T\)
serves as an intermediate record, and generation through the remaining
positions produces its fixed-horizon representation.

\subsection{Sequence-level power target}

For a sharpening power \(\alpha\geq 1\), we define the sequence-level power
target on \(\mathcal S_T\) by
\begin{equation}
    \pi_{\alpha,T}(\mathbf{x}\mid \mathbf{x}_0)
    :=
    \frac{
        p_0(\mathbf{x}\mid \mathbf{x}_0)^\alpha
    }{
        Z_{\alpha,T}(\mathbf{x}_0)
    },
    \qquad
    Z_{\alpha,T}(\mathbf{x}_0)
    :=
    \sum_{\mathbf{z}\in\mathcal S_T}
        p_0(\mathbf{z}\mid \mathbf{x}_0)^\alpha .
    \label{eq:theory-power-target}
\end{equation}
The finite state space and its positive support give
\(0<Z_{\alpha,T}(\mathbf{x}_0)<\infty\), so
Equation~\ref{eq:theory-power-target} defines a probability distribution on
the fixed-horizon completion space.
Following the above, we define the sequence energy as
$
    U_T(\mathbf{x})
    :=
    -\log p_0(\mathbf{x}\mid \mathbf{x}_0).
$
The target then takes the Gibbs form
\[
    \pi_{\alpha,T}(\mathbf{x}\mid \mathbf{x}_0)
    =
    Z_{\alpha,T}(\mathbf{x}_0)^{-1}
    \exp\{-\alpha U_T(\mathbf{x})\}.
\]

Now consider a power ladder with
\(
    1\leq\alpha_1<\alpha_2<\cdots<\alpha_K,
\)
associating a replica with each power level. The joint replica state is given by
\(
    \mathbf X
    =
    \left(
        \mathbf{x}^{(1)},\ldots,\mathbf{x}^{(K)}
    \right)
    \in\mathcal S_T^K,
\)
with its joint product target being written as
\begin{equation}
    \Pi_T(\mathbf X\mid \mathbf{x}_0)
    :=
    \prod_{k=1}^{K}
    \pi_{\alpha_k,T}
    \left(\mathbf{x}^{(k)}\mid \mathbf{x}_0\right).
    \label{eq:theory-product-target}
\end{equation}
Each power remains associated with its rung index. A predesignated output rung
\(k^\star\) therefore has target marginal $ \pi_{\alpha_{k^\star},T}$.
The choice of \(k^\star\) is fixed independently of the realized completion
records. The horizon \(T\) forms part of the target specification, and every target
statement in this appendix refers to this declared finite horizon. For
\(K=1\), the product target reduces to a single sequence-level target and the
adjacent-exchange operation becomes the identity.

\subsection{Sequence-level powering is not tokenwise temperature decoding}
\label{ps-vs-lowtemp}
Sequence-level powering generally differs from tokenwise temperature decoding
because its next-token conditional depends on possible continuations; see
\citet[Section~4.1]{tomihari2026power} for the analysis and derivation.
For a supported prefix \(h_t:=(\mathbf{x}_0,\mathbf{x}_{<t})\), define the tokenwise powered
normalizer and proposal used by our sampler:
\begin{align}
    z_{\alpha,t}(h_t)
    &:=
    \sum_{v\in\mathcal V}
        p_0(v\mid h_t)^\alpha,
    \label{eq:theory-token-normalizer}
    \\
    g_{\alpha,t}(v\mid h_t)
    &:=
    \frac{
        p_0(v\mid h_t)^\alpha
    }{
        z_{\alpha,t}(h_t)
    }.
    \label{eq:theory-tokenwise-proposal}
\end{align}
Conditioning on \(h_t\) abbreviates conditioning on \((\mathbf{x}_0,\mathbf{x}_{<t})\)
throughout. After a terminal token, \(g_{\alpha,t}\) places unit mass on
\(\bot\), following the configured post-terminal convention.
In our notation, the powered continuation mass is
\begin{equation}
    H_{\alpha,t}(v\mid h_t)
    :=
    \begin{cases}
    \displaystyle
    \sum_{\mathbf{u}_{t+1:T}\in\mathcal V^{T-t}}
        p_0(\mathbf{u}_{t+1:T}\mid h_t,v)^\alpha,
    &
    p_0(v\mid h_t)>0,
    \\[3mm]
    0,
    &
    p_0(v\mid h_t)=0.
    \end{cases}
    \label{eq:theory-continuation-mass}
\end{equation}
At $t = T$, the continuation is empty and carries mass one, so
$H_{\alpha,T}(v \mid h_T) = 1$ whenever $p_0(v \mid h_T) > 0$.
Therefore, the sequence-level target induces the next-token conditional
\begin{equation}
    \pi_{\alpha,T}(v\mid h_t)
    =
    \frac{
        p_0(v\mid h_t)^\alpha
        H_{\alpha,t}(v\mid h_t)
    }{
        \displaystyle
        \sum_{u\in\mathcal V}
        p_0(u\mid h_t)^\alpha
        H_{\alpha,t}(u\mid h_t)
    }.
    \label{eq:theory-power-conditional}
\end{equation}
The tractable proposal \(g_{\alpha,t}\) omits the continuation factor
\(H_{\alpha,t}\) and equals this conditional only when that factor is constant
over supported next tokens.

\section{Implementation Details}
\label{sec:implementation-details}
\label{apx:implement}

A naive implementation runs the $K$ replicas sequentially, giving $K$-fold wall-clock time
increase for $K$ replicas.
Instead, we execute block extensions and suffix-resampling
proposals for all replicas jointly on a paged-attention engine (vLLM), while
preserving each replica's proposal temperature $\tau_k=1/\alpha_k$. Continuous
batching handles the variable-length suffix proposals with little padding, and
KV-cache reuse allows proposed suffixes to be generated and scored using cached prefix states. In other words, each replica stores its current sequence, cached base-model log-probability, and KV-cache
handle, so an accepted swap merely exchanges record and cache without
additional model evaluation. Total generated tokens still grow with $K$, but
per-refinement-round wall-clock scales more favorably, until GPU saturation, after which
scaling approaches the $K$-fold limit. This optimization comes solely from parallel execution
and leaves the local MH and swap kernels unchanged.

The implementation combines three operations at each progressive stage:
block extension, local refinement, and parallel tempering swaps. The first
operation carries every completion record to the current stage horizon. The
second operation refreshes a suffix of each record through a
Metropolis--Hastings transition. Lastly, the third operation swaps whole records
between adjacent powers.

\subsection{Step 1: Block extension}
\label{subsec:implementation-extension}

Our implementation follows the blockwise construction of
\citep{karan2025reasoning}. We define $B\geq1$ to be the block width and further write
\(
    M:=\left\lceil\frac{T}{B}\right\rceil,
    \quad
    T_m:=\min\{mB,T\},
    \quad
    m=0,\ldots,M,
\)
with $T_0=0$ and $T_M=T$. All fixed-horizon definitions apply at
\(T_m\) by replacing \(T\) with \(T_m\). In particular, we define the block-wise state space \(\mathcal S_{T_m}\), with the 
terminal position and length on that space.
For stage $m$ and rung $k$, abbreviate
\begin{equation}
    \pi_{m,k}(\cdot):=\pi_{\alpha_k,T_m}(\cdot\mid \mathbf{x}_0),
    \qquad
    \Pi_m(\mathbf X):=\Pi_{T_m}(\mathbf X\mid \mathbf{x}_0)
    =\prod_{k=1}^{K}\pi_{m,k}\bigl(\mathbf{x}^{(k)}\bigr).
    \label{eq:theory-stage-targets}
\end{equation}
Thus, we can infer that the terminal-stage target is $\Pi_M=\Pi_T$.
For the rest of this section, the fixed conditioning on \(\mathbf{x}_0\) is suppressed in stage
notation. 

Block extension from \(T_{m-1}\) to \(T_m\) initializes the state on the
enlarged space. Sampling new tokens from
\(g_{\alpha_k,t}\) are generally not distributed from the target
law of \(\pi_{m,k}\).
Therefore, we can interpret block extension as a warm start for the subsequent
Metropolis--Hastings refinement. A finite number of refinement steps produces an
approximation to \(\Pi_m\). The exact invariance and convergence results
below concern the local and parallel tempering kernels after the stage horizon
has been fixed.

A record is active at the start of block extension exactly when its current
state is open. Activity is recomputed after all preceding local refinements
and parallel tempering swaps. Thus, if an accepted refinement removes a terminal
token, the record becomes active and is extended at the next stage.
A currently terminated record is extended deterministically through its
post-terminal tail.
At rung \(k\), every model-generated token is drawn according to
\begin{equation}
    x_t^{(k)}
    \sim
    g_{\alpha_k,t}
    \left(
        \,\cdot\mid \mathbf{x}_0,\mathbf{x}_{<t}^{(k)}
    \right),
    \qquad
    t=T_{m-1}+1,\ldots,T_m.
    \label{eq:implementation-extension}
\end{equation}
The generation for each rung stops after the first terminal token, and following the post-terminal convention the remaining positions are filled up to \(T_m\) deterministically. Requests may be batched across prompts and rungs without altering the record-specific law in Equation~\ref{eq:implementation-extension}.

For each model-generated token, the implementation stores its base-model log probability
\begin{equation}
    \ell_t(\mathbf{x})
    :=
    \log p_0(x_t\mid \mathbf{x}_0,\mathbf{x}_{<t}).
    \label{eq:implementation-base-log-probability}
\end{equation}
For every rung \(j\), the corresponding tokenwise normalizer is
\begin{equation}
    \zeta_{t,j}(\mathbf{x})
    :=
    \log z_{\alpha_j,t}(\mathbf{x}_0,\mathbf{x}_{<t}),
    \qquad
    j=1,\ldots,K.
    \label{eq:implementation-token-normalizers}
\end{equation}
Finally, deterministic post-terminal positions satisfy
\(
    \ell_t(\mathbf{x})=\zeta_{t,j}(\mathbf{x})=0,
\)
so their tails may be stored explicitly or reconstructed when evaluating fixed-horizon sums.

\subsection{Step 2: Local refinement}
\label{subsec:implementation-local}

After extension, each rung receives its configured number of local updates.
Fix a stage $m$ and rung $k$. The implementation uses the state-independent
uniform restart law
\[
    r\sim\omega_{m,k},
    \qquad
    \omega_{m,k}(r)=\frac{1}{T_m},
    \qquad
    r\in\{1,\ldots,T_m\}.
\]
The theory below allows any state-independent restart law and requires
\(\omega_{m,k}(1)>0\) only for convergence. Conditioned on \(r\), the
proposal retains the prefix before \(r\) and regenerates the suffix through
the common endpoint \(T_m\):
\begin{equation}
    q_{m,k,r}(\mathbf{y}\mid \mathbf{x})
    :=
    \mathbf 1\{\mathbf{y}_{<r}=\mathbf{x}_{<r}\}
    \prod_{t=r}^{T_m}
    g_{\alpha_k,t}
    \left(
        y_t\mid \mathbf{x}_0,\mathbf{y}_{<t}
    \right).
    \label{eq:implementation-fixed-suffix-proposal}
\end{equation}
If \(r>\tau(\mathbf{x})\), the retained prefix is already terminated and the proposal
is a self-transition. The implementation detects this from the stored
terminal position and issues no model call. Nevertheless, the
restart law is applied over all of \(\{1,\ldots,T_m\}\)
rather than over \(\{1,\ldots,\tau(\mathbf{x})\}\). Restricting it to the open
positions would make \(\omega\) depend on the current state and would
require a \(\tau(\mathbf{x})/\tau(\mathbf{y})\) factor in the acceptance ratio. If for the new position
\(r\leq\tau(\mathbf{x})\), the proposal move could remove, replace, or relocate
the terminal token. Because all records lie in \(\mathcal S_{T_m}\), both
directions use the same restart set and are evaluated through the same horizon;
post-terminal factors equal one.

The proposal is accepted with probability
\begin{equation}
    A_{m,k,r}(\mathbf{x},\mathbf{y})
    :=
    1\wedge
    \frac{
        \pi_{m,k}(\mathbf{y})q_{m,k,r}(\mathbf{x}\mid \mathbf{y})
    }{
        \pi_{m,k}(\mathbf{x})q_{m,k,r}(\mathbf{y}\mid \mathbf{x})
    }.
    \label{eq:implementation-local-mh-acceptance}
\end{equation}
The usual MH convention is used when the reverse proposal probability
vanishes.
The factor \(\omega_{m,k}(r)\) cancels because the restart law is independent
of the current state. For supported \(\mathbf{x},\mathbf{y}\) with \(\mathbf{x}_{<r}=\mathbf{y}_{<r}\), the
powered base-model terms also cancel, as shown next.
The resulting cached acceptance ratio is
\begin{equation}
    \log R_{m,k,r}^{\mathrm{loc}}(\mathbf{x},\mathbf{y})
    :=
    \log
    \frac{
        \pi_{m,k}(\mathbf{y})\,
        q_{m,k,r}(\mathbf{x}\mid \mathbf{y})
    }{
        \pi_{m,k}(\mathbf{x})\,
        q_{m,k,r}(\mathbf{y}\mid \mathbf{x})
    }
    =
    \sum_{t=r}^{T_m}
    \left[
        \zeta_{t,k}(\mathbf{y})-\zeta_{t,k}(\mathbf{x})
    \right].
\label{eq:implementation-local-cached-ratio}
\end{equation}
An accepted proposal replaces the suffix and all aligned
\(\ell\)- and \(\zeta\)-entries. A rejected proposal retains the current
completion record and its cache.
Updates across rungs use independent auxiliary randomness but may be batched
into a shared model evaluation.

\subsection{Step 3: Swap exchange}
\label{subsec:implementation-swap}

Local refinement updates the completion records within their current rungs,
while parallel tempering swaps transport complete records across the power
ladder. For \(k\in\{1,\ldots,K-1\}\), let \(\sigma_k\) denote the
transposition of coordinates \(k\) and \(k+1\). The proposed exchange is
accepted with probability
\begin{equation}
    A_{m,k}^{\mathrm{swap}}(\mathbf X)
    :=
    1\wedge
    \frac{
        \Pi_m(\sigma_k\mathbf X)
    }{
        \Pi_m(\mathbf X)
    }.
    \label{eq:implementation-swap-general}
\end{equation}
Writing \(\mathbf{x}:=\mathbf{x}^{(k)}\) and \(\mathbf{y}:=\mathbf{x}^{(k+1)}\), the log acceptance ratio is
\begin{align}
    \log
    \frac{
        \Pi_m(\sigma_k\mathbf X)
    }{
        \Pi_m(\mathbf X)
    }
    &=
    (\alpha_{k+1}-\alpha_k)
    \left[
        \log p_0(\mathbf{x}\mid \mathbf{x}_0)
        -
        \log p_0(\mathbf{y}\mid \mathbf{x}_0)
    \right].
    \label{eq:swap-acceptance-theory}
\end{align}
The cached base-model scores satisfy
\[
    \log p_0(\mathbf{x}\mid \mathbf{x}_0)
    =
    \sum_{t=1}^{T_m}\ell_t(\mathbf{x}),
\]
so the cached completion records supply the full exchange ratio.

An ordered adjacent sweep attempts exchanges for
\(k=1,\ldots,K-1\) in increasing order, so each attempt acts on the result of
the preceding ones. The powers remain attached to their rung indices, while an
accepted exchange moves the complete record and every aligned cache. For
\(K=1\), no exchange is attempted.

\subsection{Putting the implementation together}
\label{subsec:implementation-complete}

Let \(n_{\mathrm{local},m}\) denote the number of synchronized local rounds in
one schedule period, and let \(n_{\mathrm{sweep},m}\) denote the number of
ordered adjacent sweeps that follow. At stage \(m\), block extension first
advances every record from \(T_{m-1}\) to \(T_m\). The sampler then repeats
\(N_m\) schedule periods, each consisting of
\(n_{\mathrm{local},m}\) local rounds followed by
\(n_{\mathrm{sweep},m}\) ordered sweeps. A zero count skips the corresponding
operation. After stage \(M\), the predesignated rung \(k^\star\) supplies the
returned record, which is presented through its first configured terminal
token or through \(T\) if it is right-censored.

Algorithm~\ref{alg:PPT} summarizes this execution order. The formal Markov
kernels induced by these operations are introduced once, in
Section~\ref{sec:theoretical-properties}.
\newcommand{\LeftComment}[1]{%
    \(\triangleright\)~#1%
}

\begin{algorithm}[t]
\caption{Progressive fixed-horizon parallel tempering}
\label{alg:PPT}
\begin{algorithmic}[1]
\Require Fixed prompt \(\mathbf{x}_0\), horizon \(T\), block width \(B\), ladder powers
    \(\alpha_{1:K}\), stage counts \(N_{1:M}\), update schedules
    \(n_{\mathrm{local},1:M}\) and \(n_{\mathrm{sweep},1:M}\), output rung
    \(k^\star\)
\Ensure A completion from rung \(k^\star\)

\State Initialize records \(\mathbf{x}^{(1:K)}\) and their aligned caches as empty.

\For{stage \(m=1,\ldots,M\)}
    \State Set the current horizon \(T_m\gets\min\{mB,T\}\).

    \Statex
    \Statex  \LeftComment{\textbf{\textcolor{blue}{Step 1: Block extension.}} Expose the next block of tokens and extend each replica to \(T_m\).}
    \For{\(k=1,\ldots,K\)}
        \While{\(\mathbf{x}^{(k)}\) is open and \(|\mathbf{x}^{(k)}|<T_m\)}
            \State Sample the next token from
            \(g_{\alpha_k,t}(\cdot\mid \mathbf{x}_0,\mathbf{x}_{<t}^{(k)})\).
            \State Append the token and its cached \(\ell\)- and
            \(\zeta\)-values to \(\mathbf{x}^{(k)}\).
        \EndWhile
        \State Apply the post-terminal convention through \(T_m\).
    \EndFor

    \For{\(n=1,\ldots,N_m\)}
        \Statex
        \Statex  \LeftComment{\textbf{\textcolor{blue}{Step 2: Local refinement.}} Resample suffixes within each rung, preserving its target distribution.}
        \For{\(a=1,\ldots,n_{\mathrm{local},m}\)}
            \For{\(k=1,\ldots,K\)}
                \State Sample \(r\sim\omega_{m,k}\).
                \State Propose \(\mathbf{y}\) by resampling the corresponding suffix
                through \(T_m\) from
                \(q_{m,k,r}(\cdot\mid \mathbf{x}^{(k)})\).
                \State With probability \(A_{m,k,r}(\mathbf{x}^{(k)},\mathbf{y})\), replace
                \(\mathbf{x}^{(k)}\) and its cache by \(\mathbf{y}\) and its cache.
            \EndFor
        \EndFor

        \Statex
        \Statex \LeftComment{\textbf{\textcolor{blue}{Step 3: Replica swaps.}} Exchange adjacent rungs so records can move between power levels.}
        \For{\(b=1,\ldots,n_{\mathrm{sweep},m}\)}
            \For{\(k=1,\ldots,K-1\)}
                \State With probability
                \(A_{m,k}^{\mathrm{swap}}(\mathbf X)\), swap replicas
                \(k\) and \(k+1\), including their caches.
            \EndFor
        \EndFor
    \EndFor
\EndFor

\State Let \(\mathbf{x}\gets \mathbf{x}^{(k^\star)}\).
\State \Return \(\mathbf{x}\) through its first configured terminal token, or all of
    \(\mathbf{x}\) through \(T\) if no such token occurs.
\end{algorithmic}
\end{algorithm}

\section{Theoretical Properties: Bias and Exactness of PPT}
\label{sec:theoretical-properties}

In this section, we study the exactness of \method. 
First, we define 
the complete transition kernel and show how one-way truncating local updates
lead to structural bias in both the ensemble and the 
returned rung, despite the chain swaps. 
Then, we establish that \method can preserve the true target for fixed-horizon MH updates
and prove asymptotic convergence to the intended target under supported full restarts.

\subsection{Fixed-horizon targets and PPT kernels}
\label{subsec:theory-ppt-setting}

We start by fixing a stage $m$, after block extension and hold its horizon \(T_m\) fixed.
We will suppress the stage index throughout for notation simplicity.
Therefore, denote by \(\mathcal S_T\) the
space of records with horizon \(T\), including
records that terminate earlier and are padded with \(\bot\). Subsequently, we denote by
\(\operatorname{len}(\mathbf{x})\) the number of generated tokens before padding, including
the terminal token when present. An unterminated record in
\(\mathcal S_T\) has length \(T\).

Then, at rung \(k\), we write the marginal target as \(\pi_k=\pi_{\alpha_k,T}\)
defined in Eq.~\ref{eq:theory-stage-targets}.
The ensemble
state and target are
\[
\mathbf X=(\mathbf{x}^{(1)},\ldots,\mathbf{x}^{(K)}),
\qquad
\Pi(\mathbf X)=\prod_{k=1}^K\pi_k(\mathbf{x}^{(k)}).
\]
Lastly, for any joint law \(\mu\), we express its marginal at
rung \(k\) as $\mu^{(k)}$. Distributional deviations are measured by the total variation,
\[
\operatorname{TV}(\mu,\nu)
=\frac12\sum_{\mathbf{x}}|\mu(\mathbf{x})-\nu(\mathbf{x})|
=\sup_A|\mu(A)-\nu(A)|.
\]
The sum is over the common finite space of the two laws. Results for
\(\mu^{(k)}\) apply directly when rung \(k\) is selected as the output.

\paragraph{Local refinement.}

For a fixed prompt \(\mathbf{x}_0\), rung \(k\), and restart position
\(r\in\{1,\ldots,T\}\), the proposal retains positions before \(r\)
and regenerates the remaining record:
\[
q_{k,r}(\mathbf{y}\mid \mathbf{x})
=\mathbf1\{\mathbf{y}_{<r}=\mathbf{x}_{<r}\}
\prod_{t=r}^T g_{\alpha_k,t}(y_t\mid \mathbf{x}_0,\mathbf{y}_{<t}).
\]
Here \(\mathbf{y}_{<r}=(y_1,\ldots,y_{r-1})\), and
\(g_{\alpha_k,t}\) is the normalized token proposal defined in
Subsection~\ref{subsec:implementation-local}. Once a terminal token
appears, each remaining padding token has proposal probability one.
Thus forward and reverse proposals share the endpoint \(T\), even
when the two records terminate at prior positions.
For a pair with \(q_{k,r}(\mathbf{y}\mid \mathbf{x})>0\), the MH acceptance probability is
\[
A_{k,r}(\mathbf{x},\mathbf{y})
=1\wedge
\frac{\pi_k(\mathbf{y})q_{k,r}(\mathbf{x}\mid \mathbf{y})}
{\pi_k(\mathbf{x})q_{k,r}(\mathbf{y}\mid \mathbf{x})}.
\]
A proposed move with zero reverse probability is rejected. Set
\(A_{k,r}(\mathbf{x},\mathbf{y})=0\) when \(q_{k,r}(\mathbf{y}\mid \mathbf{x})=0\), since such a pair
is never proposed. The resulting transition kernel is
\[
L_{k,r}(\mathbf{x},\mathbf{y})=
\begin{cases}
q_{k,r}(\mathbf{y}\mid \mathbf{x})A_{k,r}(\mathbf{x},\mathbf{y}),&\mathbf{y}\ne \mathbf{x},\\[1mm]
1-\displaystyle\sum_{\mathbf{z}\ne \mathbf{x}}q_{k,r}(\mathbf{z}\mid \mathbf{x})A_{k,r}(\mathbf{x},\mathbf{z}),&\mathbf{y}=\mathbf{x}.
\end{cases}
\]
Its diagonal includes both self-proposals and rejected proposals.

Restart positions have state-independent probabilities
\(\omega_k(r)\ge0\), with \(\sum_{r=1}^T\omega_k(r)=1\). The
rung-level kernel and one synchronized local sweep are 
\[
L_k(\mathbf{x},\mathbf{y})=\sum_{r=1}^T\omega_k(r)L_{k,r}(\mathbf{x},\mathbf{y}),
\qquad
\mathcal R(\mathbf X,\mathbf Y)
=\prod_{k=1}^K L_k(\mathbf{x}^{(k)},\mathbf{y}^{(k)}).
\]
The product expresses conditional independence of the complete local
updates across rungs.

\paragraph{Replica exchange.}
Let \(\sigma_k\mathbf X\) exchange coordinates \(k\) and \(k+1\).
The adjacent rung MH acceptance probability is given by
\[
\begin{aligned}
A_k^{\mathrm{swap}}(\mathbf X)
&=1\wedge\frac{\Pi(\sigma_k\mathbf X)}{\Pi(\mathbf X)}
=1\wedge
\frac{\pi_k(\mathbf{x}^{(k+1)})\pi_{k+1}(\mathbf{x}^{(k)})}
{\pi_k(\mathbf{x}^{(k)})\pi_{k+1}(\mathbf{x}^{(k+1)})},
\end{aligned}
\]
defining the swap kernel as follows:
\[
W_k(\mathbf X,\mathbf Y)
=A_k^{\mathrm{swap}}(\mathbf X)\mathbf1\{\mathbf Y=\sigma_k\mathbf X\}
+[1-A_k^{\mathrm{swap}}(\mathbf X)]\mathbf1\{\mathbf Y=\mathbf X\}.
\]
Therefore, the ordered adjacent sweep is defined as the composition of 
all $W_k$, namely:
\(\mathcal C=W_1\cdots W_{K-1}\). Kernels act on laws from the right,
so \(W_1\) is applied first and each exchange acts on the records
produced by the preceding exchanges.

\paragraph{Complete PPT transition.}
Fix integers \(n_{\mathrm{local}}\ge1\) and
\(n_{\mathrm{sweep}}\ge0\). One refinement round has kernel
\[
P=\mathcal R^{\,n_{\mathrm{local}}}
\mathcal C^{\,n_{\mathrm{sweep}}}.
\]
Thus \(n_{\mathrm{local}}\) counts synchronized local sweeps per round,
and \(n_{\mathrm{sweep}}\) counts adjacent exchange sweeps. Starting
from a law \(\mu_0\), the law after \(N\) rounds is
\(\mu_N=\mu_0P^N\). The horizon, targets, restart probabilities, and
schedule remain fixed during these rounds. Block extension changes
the state space and supplies the initializer; it is not part of \(P\).

\subsection{Structural bias from one-way truncation in PPT}
\label{sec:truncation-lower-bound}

The structural bias arises when a record's current length becomes the
generation budget for subsequent local proposals. For instance, if 
a proposal replaces a $10$-token sequence with one that terminates after
$4$ tokens, then next proposal is capped at $4$ tokens. It would be unable to
reconstruct the original $10$-token record, implying that the reverse proposal has
zero probability. Computing the acceptance ratio only using token scores
through the shorter endpoint can miss this loss of reverse support and
accept the shortening move. The exact MH rule would reject it. Accepting
shortening moves while rejecting length-increasing moves
results in one-way probability leakage from longer records
to shorter ones with no return flow.

In PPT, a swap can give a rung a longer record from another rung, but
it only moves a record that already exists. Hence, accepting shortening
moves at the target violates target invariance, with parallel 
tempering unable to undo this violation.
Furthermore, under a uniform accepted-entry condition, we show that repeated shortening also
yields an asymptotic bias floor for the ensemble and returned rung.

\paragraph{Truncating kernel and comparison space.}
We denote the local transition kernel implemented
by the truncating sampler at rung \(k\) by $\widetilde L_k$, including acceptance and
rejection. Since truncation may produce unterminated records shorter
than \(T\), we compare the sampler and target on a common finite space
\(\widetilde{\mathcal S}_T\supseteq\mathcal S_T\) containing these
records. Additionally, each marginal target \(\pi_k\) is extended by zero outside \(\mathcal S_T\), and
\(\operatorname{len}(\mathbf{x})\) counts generated tokens before padding.

The synchronized kernel \(\widetilde{\mathcal R}\) applies the local
updates independently across rungs. A complete round \(\widetilde P\)
consists of \(n_{\mathrm{local}}\) such sweeps followed by
\(n_{\mathrm{sweep}}\) exchange sweeps \(\widetilde{\mathcal C}\):
\[
\widetilde{\mathcal R}(\mathbf X,\mathbf Y)
=\prod_{k=1}^K\widetilde L_k(\mathbf{x}^{(k)},\mathbf{y}^{(k)}),
\qquad
\widetilde P
=\widetilde{\mathcal R}^{\,n_{\mathrm{local}}}
\widetilde{\mathcal C}^{\,n_{\mathrm{sweep}}}.
\]
Here \(\widetilde{\mathcal C}\) only exchanges or retains records;
it need not preserve \(\Pi\). Starting from an initial law
\(\widetilde\mu_0\), the ensemble law after \(N\) complete rounds is
\(\widetilde\mu_N=\widetilde\mu_0\widetilde P^N\).

\paragraph{Short records and excluded target mass.}
Fix a length threshold \(1\le h<T\). We denote the set of records
of length at most \(h\) by \(\mathcal B\), and the target probability
of a longer record at rung \(k\) by \(b_k\):
\[
\mathcal B=\{\mathbf{x}\in\widetilde{\mathcal S}_T:\operatorname{len}(\mathbf{x})\le h\},
\qquad b_k=\pi_k(\mathcal B^c).
\]
An ensemble belongs to \(\mathcal B^K\) precisely when every record
is short. Under the product target \(\Pi\), this event has probability
\(\prod_k(1-b_k)\), so the target probability of at least one long
record is
\[
\Pi((\mathcal B^K)^c)=1-\prod_{k=1}^K(1-b_k).
\]
This joint target mass and the individual masses \(b_k\) will determine
the bias bounds for the ensemble and returned rung, respectively.

\paragraph{One-way drift.}
Assume local updates under $\widetilde P$ never increase record length. A short record then
remains in \(\mathcal B\) after every local update. To track shortening
across the ensemble, define \(V(\mathbf X)\) as the number of long records:
\[
V(\mathbf X)=\sum_{j=1}^K\mathbf1\{\mathbf{x}^{(j)}\notin\mathcal B\}
\]
Note, that local updates under $\widetilde P$ cannot increase this count, and that swaps with $\widetilde C$ preserve it.
For a record initially drawn from a law \(\nu\) on
\(\widetilde{\mathcal S}_T\), let \(D_k(\nu)\) be the probability
that an accepted local update at rung \(k\) crosses from
\(\mathcal B^c\) into \(\mathcal B\).
\[
D_k(\nu)=\sum_{\mathbf{x}\notin\mathcal B}
\nu(\mathbf{x})\widetilde L_k(\mathbf{x},\mathcal B).
\]

Then, for any initial law \(\nu\), one local update decreases the probability
of a long record by exactly the accepted crossing probability \(D_k(\nu)\):
\[
(\nu\widetilde L_k)(\mathcal B^c)
=\nu(\mathcal B^c)-D_k(\nu).
\]
Since no
record in \(\mathcal B\) can leave it under a local update, the
loss of mass from \(\mathcal B^c\) equals \(D_k(\nu)\).

Additionally, consider an ensemble initialized from \(\Pi\). 
We define $\varphi_k:=D_k(\nu)$, when the record is
drawn from the marginal target \(\pi_k\).
Summing over target marginals gives a decrease of
\(\sum_k\varphi_k\) in the expected count after the first local sweep.
This loss gives the following bounds on the expected count and the TV error after one complete round:
\[
\mathbb E_\Pi[V]-\mathbb E_{\Pi\widetilde P}[V]
\ge\sum_k\varphi_k,
\qquad
\operatorname{TV}(\Pi\widetilde P,\Pi)
\ge\frac1K\sum_k\varphi_k.
\]
Thus \(\sum_k\varphi_k>0\) implies \(\Pi\widetilde P\ne\Pi\).
Positive shortening flow at the target is sufficient to violate
invariance; no uniform shortening probability is needed. Crossings observed
under a transient law \(\nu\) estimate \(D_k(\nu)\), so they alone do
not establish the target flow \(\varphi_k\).

\paragraph{Uniform accepted entry.}
To obtain an asymptotic bias floor from any initializer, we additionally
assume that every long record becomes short with probability at least
\(\delta\in(0,1]\) in one local update, uniformly over records and
rungs. Together with the closure of \(\mathcal B\), this gives
\begin{equation}
\widetilde L_k(\mathbf{x},\mathcal B)=1\quad(\mathbf{x}\in \mathcal B),
\qquad
\widetilde L_k(\mathbf{x},\mathcal B)\ge\delta\quad(\mathbf{x}\notin \mathcal B).
\label{eq:ppt-local-confinement}
\end{equation}
The lower bound concerns accepted transitions: positive probability of
proposing a short record alone does not guarantee entry into \(\mathcal B\).

\begin{theorem}[Structural bias of truncating PPT]
\label{thm:ppt-structural-bias}
Under Eq.~\ref{eq:ppt-local-confinement}, after \(N\ge1\) rounds from
any initializer \(\widetilde\mu_0\), the ensemble and each rung \(k\)
satisfy the following TV lower bounds:
\begin{align}
\operatorname{TV}(\widetilde\mu_N,\Pi)
&\ge\left[1-\prod_{j=1}^K(1-b_j)
-K(1-\delta)^{N}\right]_+,
\label{eq:ppt-joint-bias-lower}\\
\operatorname{TV}(\widetilde\mu_N^{(k)},\pi_k)
&\ge\left[b_k-K(1-\delta)^{N}\right]_+,
\label{eq:ppt-output-bias-lower}
\end{align}
Here \([u]_+=\max\{u,0\}\), and the correction term
\(K(1-\delta)^{N}\) bounds the probability that any
long record remains. As this term vanishes, the excluded target masses
give the asymptotic bias floors:
\begin{align}
\liminf_{N\to\infty}\operatorname{TV}(\widetilde\mu_N,\Pi)
&\ge1-\prod_{j=1}^K(1-b_j),\\
\liminf_{N\to\infty}\operatorname{TV}(\widetilde\mu_N^{(k)},\pi_k)
&\ge b_k.
\end{align}
\end{theorem}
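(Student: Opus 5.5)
The plan is to compare $\widetilde\mu_N$ and $\Pi$ on a single test event and then use $\operatorname{TV}(\mu,\nu)\ge\nu(A)-\mu(A)$. For the ensemble we take the event $A=(\mathcal B^K)^c$ that at least one record is long. For rung $k$ we take $A=\mathcal B^c$. The target masses of these events are $1-\prod_j(1-b_j)$ and $b_k$, as computed above. It therefore suffices to show $\widetilde\mu_N((\mathcal B^K)^c)\le K(1-\delta)^N$. This also gives the marginal bound, because $\widetilde\mu_N^{(k)}(\mathcal B^c)\le\widetilde\mu_N((\mathcal B^K)^c)$. The positive-part brackets come from TV being nonnegative. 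Letting $N\to\infty$ gives the $\liminf$ floors.

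The key step is to control the count $V(\mathbf X)$ of long records. By a union bound, $\widetilde\mu_N(V\ge1)\le\mathbb E_{\widetilde\mu_N}[V]=\sum_{j}\Pr(\text{slot }j\text{ is long after }N\text{ rounds})$. The slot-wise accounting is complicated by swaps permuting records across rungs, so we track records rather than rung slots. A swap sweep $\widetilde{\mathcal C}$ only permutes the $K$ records and leaves each one unchanged. Hence $V$ is invariant under $\widetilde{\mathcal C}$, and each physical record is touched only by local kernels $\widetilde L_k$, with $k$ being whichever rung it currently occupies.

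Each round contains $n_{\mathrm{local}}\ge1$ synchronized local sweeps. Consider a record that is long at the start of a local sweep. By Eq.~\ref{eq:ppt-local-confinement} it becomes short with probability at least $\delta$, uniformly in the record and in the rung it sits at. Once short, it stays short forever, since $\mathcal B$ is closed under every $\widetilde L_k$ and swaps do not alter records. Conditioning on the history, including the swap outcomes that determine which rung the record sits at, the probability that a given record is still long after $N$ rounds is at most $(1-\delta)^{N n_{\mathrm{local}}}\le(1-\delta)^N$. Formally this is an induction on local sweeps: the conditional probability of staying long in one sweep is $\le 1-\delta$ given the entire past, because $\delta$ is uniform over rungs. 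Summing over the $K$ records gives $\mathbb E_{\widetilde\mu_N}[V]\le K(1-\delta)^N$ for every initializer $\widetilde\mu_0$.

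The main obstacle I expect is making the record-tracking rigorous, since the kernels are written on rung slots rather than on labeled records. I would handle this by lifting to an augmented chain in which each record carries a label and swaps move labels. The uniformity of $\delta$ across rungs is what lets the history-dependent rung assignment drop out of the per-step bound. Alternatively, I would argue directly on slots: under $\widetilde{\mathcal R}$ we have $\mathbb E[V(\mathbf X')\mid\mathbf X]\le(1-\delta)V(\mathbf X)$, because each long slot stays long with probability at most $1-\delta$, while $V$ is unchanged by $\widetilde{\mathcal C}$. Iterating this supermartingale-type contraction gives $\mathbb E[V]\le(1-\delta)^{N}K$. The slot version avoids labels entirely, and I would present it as the main argument.
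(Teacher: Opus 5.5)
Your proposal is correct and matches the paper's proof: both use the slot-wise contraction $\mathbb E[V(\mathbf X')\mid\mathbf X]\le(1-\delta)V(\mathbf X)$ under local sweeps and the invariance of $V$ under swaps to get $\widetilde\mu_N^{(k)}(\mathcal B^c)\le\widetilde\mu_N((\mathcal B^K)^c)\le\mathbb E_{\widetilde\mu_N}V\le K(1-\delta)^N$, and then compare with the target masses $b_k$ and $1-\prod_j(1-b_j)$ of the same events. The record-labelled lifting is a valid but unnecessary alternative, and your exponent $N n_{\mathrm{local}}$ is slightly sharper than, and implies, the stated $(1-\delta)^N$.
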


\begin{proof}
Let \(\mathbf X'\) be the ensemble after the MH-update per rung
from \(\mathbf X\). Each long record remains long with probability at
most \(1-\delta\), while short records remain short. Thus the expected
number of long records contracts as follows:
\[
\begin{aligned}
\mathbb E[V(\mathbf X')\mid\mathbf X]
&=\sum_{j=1}^K\widetilde L_j(\mathbf{x}^{(j)},\mathcal B^c)\\
&\le(1-\delta)\sum_{j=1}^K\mathbf1\{\mathbf{x}^{(j)}\notin \mathcal B\}
=(1-\delta)V(\mathbf X).
\end{aligned}
\]
Since swaps preserve \(V\), only local refinement can affect this count.
Therefore, repeating this contraction for $N $ iterations yields
\[
\mathbb E_{\widetilde\mu_N}V
\le K(1-\delta)^{N}.
\]
Now, we consider at least one long record at any rung \(k\) which means \(V\ge1\). 
Since $\mathbf{1}_{V\ge1}\le V$, the probability of this event is bounded by the expected count of long records
\[
\widetilde\mu_N^{(k)}(\mathcal B^c)
\le\widetilde\mu_N((\mathcal B^K)^c)
\le\mathbb E_{\widetilde\mu_N}V
\le K(1-\delta)^{N}.
\]
From joint target \(\Pi\), the probabilities of these event is \(b_k\) and
\(1-\prod_j(1-b_j)\), respectively. 

Computing the TV distance between the two events above yields
\begin{align}
    \operatorname{TV}(\widetilde\mu_N^{(k)},\pi_k)
&\ge
\left[b_k-\mathbb E_{\widetilde\mu_N}V\right]_+,\\
\operatorname{TV}(\widetilde\mu_N,\Pi)
&\ge
\left[1-\prod_{j=1}^K(1-b_j)
-\mathbb E_{\widetilde\mu_N}V\right]_+.
\end{align}
Substituting the preceding bound on
$\E_{\widetilde\mu_N}V$ proves the finite-run inequalities.
Then as $N\rightarrow\infty$, this expected count tends to zero, yielding the
stated asymptotic bias floors.
\end{proof}

The quantities \(b_k\) determine the structural bias floors, while
\(\delta\) controls how fast, they are approached.
The inequalities above quantify structural bias caused by violation of the target preservation
for the one-way truncating MH-update law.

\subsection{Target preservation and convergence of fixed-horizon PPT}
\label{sec:local-preservation}
\label{sec:fixed-horizon-convergence}

In our implementation, we eliminate the structural bias and one-way truncation
shown above, by keeping a common horizon \(T\) fixed
throughout refinement. Every record and both directions of every local
proposal use this endpoint, with deterministic padding after a terminal
token. Consequently, early termination does not reduce the budget for later
proposals, since a restart at or before the terminal position can regenerate
beyond it, up to \(T\), when proposal support permits. MH acceptance
computed on this common space preserves the intended targets, and
supported full restarts ensure convergence to them. This removes the
asymptotic truncation bias.

We now analyze \(P\) on \(\mathcal S_T^K\), where the intended target
is strictly positive and both proposal directions use the same horizon.

\begin{proposition}[Target preservation]
\label{prop:fixed-stage-invariance}
Each fixed-position kernel \(L_{k,r}\) is reversible with respect to
\(\pi_k\). The local sweep \(\mathcal R\), each swap \(W_k\), the
ordered swap sweep \(\mathcal C\), and the complete PPT kernel \(P\)
preserve \(\Pi\).
\end{proposition}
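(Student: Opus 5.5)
The plan is to verify detailed balance for each elementary kernel and then pass invariance through mixtures, products and compositions. Throughout we work on $\mathcal S_T^K$, where every $\pi_k$ is strictly positive, so all ratios are well defined.

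\emph{Step 1: $L_{k,r}$ is reversible.} Fix $r$ and distinct $\mathbf{x},\mathbf{y}\in\mathcal S_T$. It suffices to show
\[
\pi_k(\mathbf{x})\,q_{k,r}(\mathbf{y}\mid\mathbf{x})\,A_{k,r}(\mathbf{x},\mathbf{y})=\pi_k(\mathbf{y})\,q_{k,r}(\mathbf{x}\mid\mathbf{y})\,A_{k,r}(\mathbf{y},\mathbf{x}).
\]
There are two cases.
\begin{itemize}
\item If both proposal probabilities are positive, both sides equal $\min\{\pi_k(\mathbf{x})q_{k,r}(\mathbf{y}\mid\mathbf{x}),\,\pi_k(\mathbf{y})q_{k,r}(\mathbf{x}\mid\mathbf{y})\}$. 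This is the standard Metropolis--Hastings identity.
\item If exactly one direction vanishes, the convention that a move with zero reverse probability is rejected (and $A=0$ for pairs never proposed) makes both sides zero.
\end{itemize}
The fixed horizon is what makes the second case consistent. Both directions condition on the same prefix $\mathbf{x}_{<r}=\mathbf{y}_{<r}$ and regenerate through the same endpoint $T$, with post-terminal factors equal to one. Hence $q_{k,r}(\mathbf{y}\mid\mathbf{x})>0$ iff every factor $g_{\alpha_k,t}(y_t\mid\mathbf{y}_{<t})$ for $t\ge r$ is positive. Since $g_{\alpha_k,t}$ and $p_0$ share support, this holds iff $\mathbf{y}\in\mathcal S_T$ and the prefixes agree. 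The condition is symmetric in $\mathbf{x},\mathbf{y}$, so on $\mathcal S_T$ the one-sided case never occurs, and in particular there is no asymmetric truncation.

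Summing detailed balance over $\mathbf{x}$ gives $\pi_kL_{k,r}=\pi_k$. The diagonal term is handled automatically by the standard argument: $\sum_{\mathbf{x}}\pi_k(\mathbf{x})L(\mathbf{x},\mathbf{y})=\pi_k(\mathbf{y})\sum_{\mathbf{x}}L(\mathbf{y},\mathbf{x})$.

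\emph{Step 2: mixtures and products.} The weights $\omega_k(r)$ do not depend on the state, so $L_k=\sum_r\omega_k(r)L_{k,r}$ is a convex combination of $\pi_k$-reversible kernels and is itself $\pi_k$-reversible. For the local sweep $\mathcal R$, use the factorization of $\Pi$:
\[
\sum_{\mathbf X}\Pi(\mathbf X)\mathcal R(\mathbf X,\mathbf Y)=\prod_k\sum_{\mathbf{x}^{(k)}}\pi_k(\mathbf{x}^{(k)})L_k(\mathbf{x}^{(k)},\mathbf{y}^{(k)})=\prod_k\pi_k(\mathbf{y}^{(k)})=\Pi(\mathbf Y).
\]

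\emph{Step 3: swaps.} $W_k$ is a Metropolis--Hastings kernel for $\Pi$ with the deterministic involutive proposal $\mathbf X\mapsto\sigma_k\mathbf X$. Because $\sigma_k$ is a bijection of $\mathcal S_T^K$ with $\sigma_k^2=\mathrm{id}$, the proposal is symmetric. Detailed balance then reads
\[
\Pi(\mathbf X)A_k^{\mathrm{swap}}(\mathbf X)=\min\{\Pi(\mathbf X),\Pi(\sigma_k\mathbf X)\}=\Pi(\sigma_k\mathbf X)A_k^{\mathrm{swap}}(\sigma_k\mathbf X).
\]
So each $W_k$ is $\Pi$-reversible. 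The ordered sweep $\mathcal C=W_1\cdots W_{K-1}$ is a composition of $\Pi$-invariant kernels and therefore preserves $\Pi$. It is not reversible, but only invariance is claimed. The sequential, history-dependent acceptances are harmless because each factor is a fixed kernel on the current joint state.

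\emph{Step 4: the full round.} Finally, $P=\mathcal R^{n_{\mathrm{local}}}\mathcal C^{n_{\mathrm{sweep}}}$ preserves $\Pi$, since $\Pi\mathcal R=\Pi$ and $\Pi\mathcal C=\Pi$. For $K=1$ the swap sweep is the identity and the claim reduces to Step 2.

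The main obstacle is Step 1, specifically checking that the support of the forward and reverse proposals coincide on $\mathcal S_T$. This is exactly where the truncating sampler failed, and here it must be derived from the shared endpoint and the padding convention.
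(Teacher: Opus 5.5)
Your proof is correct and follows essentially the same route as the paper: establish the symmetric min-flow identity for each $L_{k,r}$ and each $W_k$, then pass invariance through the state-independent mixture over $r$, the product over rungs, and composition. One small remark: your support-coincidence argument is accurate but not actually needed for invariance, since with the convention $A_{k,r}=0$ for zero reverse probability the identity $\pi_k(\mathbf{x})L_{k,r}(\mathbf{x},\mathbf{y})=\min\{\pi_k(\mathbf{x})q_{k,r}(\mathbf{y}\mid\mathbf{x}),\pi_k(\mathbf{y})q_{k,r}(\mathbf{x}\mid\mathbf{y})\}$ holds in every case, so it is not really the ``main obstacle.''
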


\begin{proof}
For distinct \(\mathbf{x},\mathbf{y}\), the accepted local flow is
\[
\pi_k(\mathbf{x})L_{k,r}(\mathbf{x},\mathbf{y})
=\min\{\pi_k(\mathbf{x})q_{k,r}(\mathbf{y}\mid \mathbf{x}),
\pi_k(\mathbf{y})q_{k,r}(\mathbf{x}\mid \mathbf{y})\},
\]
which is symmetric in \(\mathbf{x},\mathbf{y}\). Thus each fixed-position kernel
preserves \(\pi_k\), as does its state-independent mixture \(L_k\).
Independence across rungs then gives \(\Pi\mathcal R=\Pi\).
Similarly, the accepted swap flow is
\[
\Pi(\mathbf X)A_k^{\mathrm{swap}}(\mathbf X)
=\min\{\Pi(\mathbf X),\Pi(\sigma_k\mathbf X)\},
\]
which is symmetric under the exchange. Hence \(\Pi W_k=\Pi\).
Composing these invariant kernels proves \(\Pi\mathcal C=\Pi\) and
\(\Pi P=\Pi\). The ordered compositions need not be reversible.
\end{proof}

In contrast, the error of the one-way truncation violates target invariance. If
\(\sum_k b_k>0\), then starting from \(\Pi\) gives a positive expected
count \(\mathbb E_\Pi V=\sum_k b_k\), which one complete truncating
round strictly decreases. Hence \(\Pi\widetilde P\ne\Pi\).

\paragraph{Supported full restart.}
At \(r=1\), no part of the current record is retained, so
\[
q_k^{\mathrm{full}}(\mathbf{y}):=q_{k,1}(\mathbf{y}\mid \mathbf{x})
=\prod_{t=1}^T g_{\alpha_k,t}(y_t\mid \mathbf{x}_0,\mathbf{y}_{<t})
\]
is independent of \(\mathbf{x}\). Suppose every rung selects this restart
with probability \(\omega_k(1)>0\), and
\(q_k^{\mathrm{full}}(\mathbf{y})>0\) for every \(\mathbf{y}\in\mathcal S_T\).
Define
\[
\varepsilon_k
=\omega_k(1)\min_{\mathbf{y}\in\mathcal S_T}
\frac{q_k^{\mathrm{full}}(\mathbf{y})}{\pi_k(\mathbf{y})},
\qquad
\varepsilon=\prod_{k=1}^K\varepsilon_k.
\]
The minimum ratio measures worst-case proposal coverage: it is the
largest constant for which the proposal assigns at least that
multiple of the target probability to every record. Multiplication
by \(\omega_k(1)\) accounts for how often a full restart is selected.
The proof below shows that, regardless of the current record, the
local transition contains a component of weight \(\varepsilon_k\)
distributed as \(\pi_k\). Independence makes \(\varepsilon\) the
corresponding weight of the product target in a joint local sweep.
Finiteness, positive support, and normalization imply
\(0<\varepsilon_k\le1\) and \(0<\varepsilon\le1\).

\begin{theorem}[Convergence of the PPT ensemble and returned rung]
\label{thm:fixed-horizon-convergence}
Under the supported-full-restart conditions above, \(\Pi\) is the
unique invariant law of \(P\). For any initializer \(\mu_0\) on
\(\mathcal S_T^K\), any rung \(k\), and \(N\ge1\),
\begin{equation}
\operatorname{TV}(\mu_N^{(k)},\pi_k)
\le\operatorname{TV}(\mu_N,\Pi)
\le(1-\varepsilon)^{N}
\operatorname{TV}(\mu_0,\Pi)
\xrightarrow[N\to\infty]{}0.
\label{eq:theory-fixed-horizon-convergence}
\end{equation}
\end{theorem}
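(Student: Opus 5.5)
The plan is a Doeblin minorization argument for the local sweep $\mathcal R$, combined with the invariance from Proposition~\ref{prop:fixed-stage-invariance} and the fact that Markov kernels do not increase total variation.

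\textbf{Step 1: one-rung minorization.} Fix a rung $k$ and any $\mathbf{x}\in\mathcal S_T$. We show $L_k(\mathbf{x},\mathbf{y})\ge \varepsilon_k\,\pi_k(\mathbf{y})$ for every $\mathbf{y}\in\mathcal S_T$.

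For $\mathbf{y}\ne\mathbf{x}$, keep only the $r=1$ term of the mixture. The full-restart proposal does not depend on the current state, so the acceptance ratio has the independence-sampler form:
\[
\frac{\pi_k(\mathbf{y})q_k^{\mathrm{full}}(\mathbf{x})}{\pi_k(\mathbf{x})q_k^{\mathrm{full}}(\mathbf{y})}.
\]
This gives
\[
q_k^{\mathrm{full}}(\mathbf{y})A_{k,1}(\mathbf{x},\mathbf{y})=\min\Bigl\{q_k^{\mathrm{full}}(\mathbf{y}),\,\pi_k(\mathbf{y})\tfrac{q_k^{\mathrm{full}}(\mathbf{x})}{\pi_k(\mathbf{x})}\Bigr\}.
\]
Each of the two arguments is at least $\pi_k(\mathbf{y})\min_{\mathbf{z}}q_k^{\mathrm{full}}(\mathbf{z})/\pi_k(\mathbf{z})$. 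Multiplying by $\omega_k(1)$ yields the bound.

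For $\mathbf{y}=\mathbf{x}$, the diagonal of $L_{k,1}$ contains the self-proposal mass $q_k^{\mathrm{full}}(\mathbf{x})$. This is at least $\pi_k(\mathbf{x})\min_{\mathbf{z}}q_k^{\mathrm{full}}(\mathbf{z})/\pi_k(\mathbf{z})$, so the same bound holds.

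\textbf{Step 2: joint local sweep.} Since $\mathcal R$ is a product over rungs, Step 1 gives
\[
\mathcal R(\mathbf X,\cdot)\ge \varepsilon\,\Pi(\cdot).
\]
Write $\mathcal R=\varepsilon\,\mathbf 1\Pi+(1-\varepsilon)Q$. Here $Q$ is a Markov kernel; it is nonnegative by the minorization and has unit row sums. Because $n_{\mathrm{local}}\ge1$, the round kernel is $P=\mathcal R\,S$ with $S=\mathcal R^{n_{\mathrm{local}}-1}\mathcal C^{n_{\mathrm{sweep}}}$. By Proposition~\ref{prop:fixed-stage-invariance}, $\Pi S=\Pi$. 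Hence
\[
P=\varepsilon\,\mathbf 1\Pi+(1-\varepsilon)QS.
\]

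\textbf{Step 3: contraction.} Let $\mu$ be any law. Using $\Pi P=\Pi$ and the decomposition, $(\mu-\Pi)\mathbf 1\Pi=0$ because $\mu-\Pi$ has zero total mass. Therefore
\[
\mu P-\Pi=(\mu-\Pi)P=(1-\varepsilon)(\mu-\Pi)QS.
\]
A Markov kernel does not increase the $\ell^1$ norm of a signed measure, so
\[
\operatorname{TV}(\mu P,\Pi)\le(1-\varepsilon)\operatorname{TV}(\mu,\Pi).
\]
Iterating $N$ times gives the right-hand inequality of Eq.~\ref{eq:theory-fixed-horizon-convergence}. Since $0<\varepsilon\le1$, the bound tends to zero.

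\textbf{Step 4: uniqueness and the marginal bound.} If $\mu$ is invariant, the contraction forces $\operatorname{TV}(\mu,\Pi)\le(1-\varepsilon)^N\operatorname{TV}(\mu,\Pi)\to0$, so $\mu=\Pi$. The marginal inequality follows because projection onto coordinate $k$ is a deterministic map, which does not increase total variation, and the $k$-th marginal of $\Pi$ is $\pi_k$.

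The main point needing care is Step 1. The self-transition case must be handled so the minorization holds on the diagonal. One must also check that $A_{k,1}$ is well-defined on all of $\mathcal S_T$, i.e. that no reverse probability vanishes; this is where the supported-full-restart assumption $q_k^{\mathrm{full}}>0$ on $\mathcal S_T$ is used.
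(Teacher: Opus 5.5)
Your proposal is correct and follows the paper's proof. It uses the same independence-sampler minorization $L_k(\mathbf{x},\cdot)\ge\varepsilon_k\pi_k(\cdot)$ from the $r=1$ component (with the diagonal covered by self-proposal mass), the product bound $\mathcal R\ge\varepsilon\,\Pi$, the residual-kernel contraction, marginalization, and uniqueness from convergence. The only difference is cosmetic: you absorb the remaining local sweeps and the swaps into a single $\Pi$-invariant kernel $S$ and contract once per round. The paper instead contracts each local sweep and appeals to non-expansiveness of the invariant swaps, and it treats $\varepsilon=1$ separately; your decomposition covers that case too, since then $\mathcal R=\mathbf 1\Pi$ and $Q$ may be any Markov kernel.
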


\begin{proof}
The full-restart component of the local MH kernel satisfies
\[
\begin{aligned}
L_k(\mathbf{x},\mathbf{y})
&\ge\omega_k(1)q_k^{\mathrm{full}}(\mathbf{y})
\min\left\{1,
\frac{\pi_k(\mathbf{y})q_k^{\mathrm{full}}(\mathbf{x})}
{\pi_k(\mathbf{x})q_k^{\mathrm{full}}(\mathbf{y})}\right\}\\
&=\omega_k(1)\pi_k(\mathbf{y})
\min\left\{
\frac{q_k^{\mathrm{full}}(\mathbf{y})}{\pi_k(\mathbf{y})},
\frac{q_k^{\mathrm{full}}(\mathbf{x})}{\pi_k(\mathbf{x})}\right\}
\ge\varepsilon_k\pi_k(\mathbf{y}).
\end{aligned}
\]
For \(\mathbf{y}=\mathbf{x}\), self-proposals alone provide this lower bound;
rejections only add diagonal mass. Multiplying over rungs gives
\[
\mathcal R(\mathbf X,\mathbf Y)\ge\varepsilon\Pi(\mathbf Y)
\quad\text{for all }\mathbf X,\mathbf Y\in\mathcal S_T^K.
\]
If \(\varepsilon=1\), one local sweep has law \(\Pi\), and later
invariant updates preserve it. Otherwise define the residual kernel
\[
Q(\mathbf X,\mathbf Y)
=\frac{\mathcal R(\mathbf X,\mathbf Y)-\varepsilon\Pi(\mathbf Y)}
{1-\varepsilon}.
\]
The lower bound makes \(Q\) nonnegative, its rows sum to one, and
\(\Pi\mathcal R=\Pi\) implies \(\Pi Q=\Pi\). Consequently, it holds that
\[
\operatorname{TV}(\nu\mathcal R,\Pi)
=(1-\varepsilon)\operatorname{TV}(\nu Q,\Pi Q)
\le(1-\varepsilon)\operatorname{TV}(\nu,\Pi).
\]
This implies that each local refinement step contracts the error by a factor of at most
\(1-\varepsilon\). Additionally, since swaps preserve \(\Pi\) and cannot increase
TV, we conclude that the joint bound is proved for the entire algorithm.
Marginalization proves the rung bound. Convergence
from every initializer establishes uniqueness of the invariant law.
\end{proof}

\paragraph{Scope.}
It is noted that the supported full restarts are a sufficient condition for convergence.
Additionally, the coefficient $\varepsilon$ is a worst-case guarantee and can be very
small. Uniform restart selection alone contributes a factor $T^{-K}$ through
$\omega_k(1)=1/T$, and the minimum proposal-to-target ratio can be smaller
still. Theorem~\ref{thm:fixed-horizon-convergence} therefore establishes asymptotic
exactness; it does not certify a small error at a practical refinement
budget. At the terminal horizon, the law produced by the preceding block
extensions serves as $\mu_0$, and further fixed-horizon refinement converges
to the output target without requiring consistency between stage targets at
different horizons. The statement concerns refinement at a fixed terminal
horizon, not the number of block extensions.

\subsection{A finite illustration of target preservation and one-way truncation bias}
\label{sec:endpoint-bias-example-condensed}

Lastly, we present an example instantiating the lower bound of
Subsection~\ref{sec:fixed-horizon}. It contrasts a fixed-horizon
Metropolis--Hastings transition with the unbalanced current-length-capped
transition discussed above. The example makes both effects explicit on a common
4-state comparison space.
\\
\textbf{Four-state example.}
Take one rung with a fixed horizon \(T=2\), \(\alpha=2\), and vocabulary
\(\{a,e\}\), where \(e\) is terminal and
\(p_0(a\mid h)=p_0(e\mid h)=1/2\) at every open prefix. Let
\(X_1=(e,\bot)\), \(X_2=(a)\), \(X_3=(a,e)\), and
\(X_4=(a,a)\), with respective lengths \((1,1,2,2)\).
The valid fixed-horizon space is \(\mathcal S_2=\{X_1,X_3,X_4\}\);
\(X_2\) is prematurely censored and appears only in the enlarged
comparison space. Write \(p_{0,2}\) for the base record law at horizon
two, extended by zero on \(X_2\). Its probabilities and powered target,
obtained by normalizing \(p_{0,2}^2\) with \(Z=3/8\), are
\begin{equation}
(p_{0,2}(X_i))_{i=1}^4
=
\left(\frac12,0,\frac14,\frac14\right),
\qquad
\pi
=
\left(\frac23,0,\frac16,\frac16\right).
\label{eq:condensed-finite-summary}
\end{equation}
Here \(X_3\) terminates at the horizon, whereas \(X_4\) reaches the
horizon without a terminal token. Both are valid records; the shorter
unfinished record \(X_2\) has zero target mass, despite its nonzero
prefix likelihood. Symmetry gives the full-restart proposal
\(q^{\mathrm{full}}=p_{0,2}\), but \(q^{\mathrm{full}}\neq\pi\):
the proposal is normalized tokenwise, whereas \(\pi\) uses a single
sequence-level normalizer. Deterministic padding leaves these weights
unchanged.

The corrected update chooses a restart position uniformly from
\(\{1,2\}\) and regenerates through the common endpoint \(T\), padding
after EOS. Its full-restart component is an independence proposal with
acceptance probability
\(A_1(\mathbf{x},\mathbf{y})=1\wedge p_{0,2}(\mathbf{y})/p_{0,2}(\mathbf{x})\) on valid records.
The unbalanced update instead chooses the restart uniformly from
\(\{1,\ldots,\operatorname{len}(\mathbf{x})\}\), caps regeneration at \(\operatorname{len}(\mathbf{x})\), and applies
the token-score acceptance rule through the candidate endpoint without
checking reverse support. The two kernels are found to be
\begin{equation}
\begin{aligned}
P^{\mathrm{fix}}
&=
\begin{pmatrix}
7/8 & 0 & 1/16 & 1/16 \\
0   & 1 & 0    & 0    \\
1/4 & 0 & 3/8  & 3/8  \\
1/4 & 0 & 3/8  & 3/8
\end{pmatrix},\\[2mm]
\widetilde P
&=
\begin{pmatrix}
1/2 & 1/2 & 0   & 0   \\
1/2 & 1/2 & 0   & 0   \\
1/4 & 0   & 3/8 & 3/8 \\
1/4 & 0   & 3/8 & 3/8
\end{pmatrix}.
\end{aligned}
\label{eq:condensed-fixed-kernel}
\end{equation}
The self-loop at \(X_2\) in \(P^{\mathrm{fix}}\) only embeds the
corrected kernel in the comparison space; this zero-target state is
unreachable from \(\mathcal S_2\).

Then, we consider the laws
\(\nu_N=\delta_{X_4}(P^{\mathrm{fix}})^N\) and
\(\widetilde\nu_N=\delta_{X_4}\widetilde P^N\), with no block
extensions during refinement. Starting from \(X_3\) gives the same
TV curves: \(\pi(X_3)=\pi(X_4)\), and their transition rows coincide
in each kernel.
Direct calculation gives
\(\pi P^{\mathrm{fix}}=\pi\) and
\(\operatorname{TV}(\nu_N,\pi)\to0\). In contrast, \(\widetilde P\) neither
preserves \(\pi\) nor converges to it:
\(\widetilde\nu_N\to\tfrac12\delta_{X_1}+\tfrac12\delta_{X_2}\), and
\begin{equation}
\begin{aligned}
\pi\widetilde P
&=
\left(\frac5{12},\frac13,\frac18,\frac18\right)
\neq\pi,\\
\lim_{N\to\infty}
\operatorname{TV}(\widetilde\nu_N,\pi)
&=
\frac12\left(\frac16+\frac12+\frac16+\frac16\right)
=\frac12.
\end{aligned}
\label{eq:condensed-invariance-test}
\end{equation}
More explicitly, for \(N\ge1\),
\begin{equation}
\begin{aligned}
\operatorname{TV}(\nu_N,\pi)
&=\frac23\left(\frac58\right)^N,\\
\operatorname{TV}(\widetilde\nu_N,\pi)
&=\max\left\{
\frac16+\frac13\left(\frac34\right)^N,
\frac12-\frac23\left(\frac34\right)^N
\right\}.
\end{aligned}
\label{eq:condensed-tv-laws}
\end{equation}
At \(N=2\), the unbalanced TV error has decreased to \(17/48\),
but remains above the corrected error \(25/96\). The unbalanced error
reaches its minimum \(37/128\) at \(N=4\), then rises toward \(1/2\).
Finite-step improvement therefore does not establish target preservation.
Figure~\ref{fig:condensed-tv-summary} shows this transient behavior and
the subsequent asymptotic separation.

\begin{figure}[t]
    \centering
    \begin{minipage}[t]{0.59\linewidth}
        \vspace{0pt}
        \centering
        \textbf{(a) Selected TV values}

        \vspace{2mm}
        \small
        \setlength{\tabcolsep}{3.5pt}
        \begin{tabular}{c|cc}
            \hline
            \(N\) & Fixed horizon & Unbalanced \\
            \hline
            \(0\)  & \(0.83333\) & \(0.83333\) \\
            \(1\)  & \(0.41667\) & \(0.41667\) \\
            \(2\)  & \(0.26042\) & \(0.35417\) \\
            \(3\)  & \(0.16276\) & \(0.30729\) \\
            \(4\)  & \(0.10173\) & \(0.28906\) \\
            \(5\)  & \(0.06358\) & \(0.34180\) \\
            \(10\) & \(0.00606\) & \(0.46246\) \\
            \(\infty\) & \(0\)   & \(1/2\) \\
            \hline
        \end{tabular}
    \end{minipage}
    \hfill
    \begin{minipage}[t]{0.38\linewidth}
        \vspace{0pt}
        \centering
        \textbf{(b) Deviation from $\pi$}
        \vspace{1mm}
        \includegraphics[width=\linewidth]{figures/tv_x4.png}
    \end{minipage}
    \caption{
        Selected values and total-variation trajectories from \(X_4\)
        (identical TV values from \(X_3\)). The unbalanced curve uses the
        unbalanced kernel \(\widetilde P\). The dashed line is the attained
        asymptotic floor \(1/2\); TV can lie below it at finite \(N\).
    }
    \label{fig:condensed-tv-summary}
\end{figure}

\paragraph{Recovery of the structural lower bound.}
The example realizes the confinement mechanism of
Subsection~\ref{sec:truncation-lower-bound} with the closed set of
one-token records. From either longer record, the chain enters this
set in one step with probability \(1/4\), while the intended target
assigns mass \(1/3\) to its complement:
\[
\mathcal B=\mathcal B_1=\{X_1,X_2\},
\quad
\delta=\frac14,
\quad
b_1=\pi(\mathcal B^c)=\frac13.
\]
With \(K=n_{\mathrm{local}}=1\),
Eq.~\ref{eq:ppt-output-bias-lower} gives the finite-step lower bound
\(\bigl[\frac13-(\frac34)^N\bigr]_+\), and hence the asymptotic
floor \(1/3\). This length-only bound can be sharpened by using the
zero target mass of the prematurely censored record \(X_2\).
The second coordinate of \(\widetilde\nu_N\) gives, for \(N\ge1\),
\begin{equation}
\operatorname{TV}(\widetilde\nu_N,\pi)
\ge \widetilde\nu_N(X_2)
=\frac12-\frac23\left(\frac34\right)^N.
\label{eq:censored-mass-bound}
\end{equation}
For every \(N\ge4\), the mass on each valid record is at most its
target mass, so all excess mass lies on \(X_2\) and this bound is
an equality. It therefore recovers exactly the asymptotic floor
\(1/2\) attained by the unbalanced trajectory; the confinement bound
\(1/3\) remains a valid but weaker guarantee.
 
\paragraph{Takeaway.}
At a fixed refinement stage, a current-endpoint kernel---one whose
proposals cannot extend past the current record's length---can only
shorten a record or preserve its length. If it accepts shortening
moves with positive probability under the intended target~$\pi$,
mass flows irreversibly toward shorter records and $\pi$ cannot be
invariant; the early-stopping implementation's truncated-score
acceptance rule permits exactly this.
Fixed-horizon refinement removes this obstruction by defining
proposals on a common state space in which both directions have
positive probability, so the MH correction is non-degenerate and the
chain can move across record lengths.

\section{Power-Ladder Design}
\label{sec:appendix-ladder-design}

This section develops a principled design rule for the power ladder at a fixed
horizon $T$, following \citep{rathore2005optimal, syed2022nonreversible, kofke2002acceptance}. Recall that we denote the fixed-horizon state space with $\mathcal S_T$,
the sequence energy by $U(\mathbf{x}):=-\log p_0(\mathbf{x}\mid \mathbf{x}_0)$, and the the sequence-level power
target by
$\pi_\alpha(\mathbf{x}):=Z_\alpha(\mathbf{x}_0)^{-1}e^{-\alpha U(\mathbf{x})}$.
Fix the number of replicas $K$ and the endpoint powers
$\alpha_1$ and $\alpha_K$.  Every adjacent pair satisfies
$\alpha_k<\alpha_{k+1}$.  The analysis assumes equilibrium draws at the two
rungs involved in each exchange and studies the resulting expected swap
acceptance.

The design criterion below uses equilibrium overlap \citep{rathore2005optimal, DBLP:conf/aaai/000200LL23}.  Thus, each sequence from adjacent
temperatures is represented by independent draws from its two power targets.  This
criterion isolates the geometry of the ladder; observed acceptance in a
finite run also reflects warm-up, local mixing, and Monte Carlo variability. 

\subsection{Thermodynamic coordinate and exact swap overlap}
\label{subsec:thermodynamic-coordinate}

For $\alpha<\beta$, let $X_\alpha\sim\pi_\alpha$ and
$X_\beta\sim\pi_\beta$ independently.  The swap log ratio is
$G_{\alpha,\beta}:=(\beta-\alpha)
\{U(X_\beta)-U(X_\alpha)\}$, and the exact equilibrium acceptance is
$\bar a(\alpha,\beta):=
\mathbb E[1\wedge e^{G_{\alpha,\beta}}]$. Define
\(\sigma^2(s):=\operatorname{Var}_{\pi_s}[U(X)]\). The exponential-family
identity
\(
\frac{d}{ds}\mathbb E_{\pi_s}[U(X)]=-\sigma^2(s)
\)
and independence yield
\begin{equation}
    \mathbb E[G_{\alpha,\beta}]
    =-(\beta-\alpha)\int_\alpha^\beta \sigma^2(s)\,ds,
    \qquad
    \operatorname{Var}(G_{\alpha,\beta})
    =(\beta-\alpha)^2
      \{\sigma^2(\alpha)+\sigma^2(\beta)\}.
    \label{eq:swap-log-ratio-moments}
\end{equation}

To describe the local scale, let \(\sigma(\alpha)\) denote the energy
standard deviation under \(\pi_\alpha\). The corresponding thermodynamic
distance and total ladder length are
\[
d(\alpha,\beta)
=
\int_\alpha^\beta \sigma(s)\,ds,
\qquad
D_{\mathrm{tot}}=d(\alpha_1,\alpha_K).
\]
On the finite state space, the power family varies smoothly with \(\alpha\),
and its mean energy satisfies
\(
\frac{d}{d\alpha}\mathbb E_{\pi_\alpha}[U(X)]
=
-\sigma^2(\alpha)
\).
Together with the independence of \(X_\alpha\) and \(X_\beta\), this gives
the following local behavior when \(\beta=\alpha+\delta\):
\begin{equation}
\mathbb E[G_{\alpha,\alpha+\delta}]
=
-d(\alpha,\alpha+\delta)^2+O(\delta^3),
\qquad
\operatorname{Var}(G_{\alpha,\alpha+\delta})
=
2d(\alpha,\alpha+\delta)^2+O(\delta^3).
\label{eq:local-swap-moments}
\end{equation}
These expansions identify thermodynamic distance as the local scale of the
swap log ratio. They do not imply Gaussianity, so Gaussian shape is an
additional finite-gap approximation \citep{kofke2002acceptance}. Specifically, we use
\[
G_{\mathrm G}(\alpha,\beta)
\sim
\mathcal N\!\left(
-d(\alpha,\beta)^2,\,
2d(\alpha,\beta)^2
\right),
\]
and denote its predicted acceptance by
\(
a_{\mathrm G}(\alpha,\beta)
=
\mathbb E[1\wedge e^{G_{\mathrm G}(\alpha,\beta)}]
\).
All optimality claims below refer to this Gaussian surrogate.

\subsection{Max--min design and the unique equi-accepting ladder}
\label{subsec:maximin-ladder}

Adjacent exchanges form a path, so every trip between the endpoint rungs
must cross every interface. When interfaces are attempted equally often, the
smallest predicted adjacent acceptance is a natural bottleneck
proxy for end-to-end transport. For fixed \(K\), \(\alpha_1\), and
\(\alpha_K\), we consequently consider
\begin{equation}
\max_{\substack{
\alpha_2,\ldots,\alpha_{K-1}:\\
\alpha_1<\alpha_2<\cdots<\alpha_{K-1}<\alpha_K
}}
\min_{1\leq k<K}
a_{\mathrm G}(\alpha_k,\alpha_{k+1}).
\label{eq:maximin-ladder-problem}
\end{equation}
This criterion optimizes the weakest interface rather than the average
acceptance across the ladder.

\begin{lemma}[Optimal Gaussian equi-accepting ladder]
\label{thm:unique-maximin-ladder}
Suppose \(U\) takes at least two values on \(\mathcal S_T\). Under the
Gaussian surrogate, the predicted acceptance between \(\alpha<\beta\) is
\[
a_{\mathrm G}(\alpha,\beta)
=
A(d(\alpha,\beta)),
\qquad
A(d):=2\Phi\!\left(-\frac{d}{\sqrt2}\right),
\]
where \(\Phi\) is the standard normal cumulative distribution function.
The unique solution of Eq.~\ref{eq:maximin-ladder-problem} divides
the total thermodynamic length equally among the \(K-1\) adjacent
interfaces:
\begin{equation}
\int_{\alpha_k^{\mathrm{opt}}}^{\alpha_{k+1}^{\mathrm{opt}}}
\sigma(s)\,ds
=
\frac{D_{\mathrm{tot}}}{K-1},
\qquad
k=1,\ldots,K-1.
\label{eq:equal-thermodynamic-distance}
\end{equation}
Consequently, every interface has the common predicted acceptance
\(A(D_{\mathrm{tot}}/(K-1))\).
\end{lemma}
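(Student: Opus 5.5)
The proof has two parts: compute the Gaussian acceptance in closed form, then solve the max--min problem using monotonicity.

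\emph{Step 1 (closed form).} Let $d=d(\alpha,\beta)$ and $G\sim\mathcal N(-d^2,2d^2)$. The key fact is that a lognormal with $\mu=-s^2/2$ satisfies $\mathbb E[e^G\mathbf 1\{G<0\}]=\mathbb P(G>0)$. To see this, complete the square: the density of $G$ times $e^g$ is the $\mathcal N(+d^2,2d^2)$ density, so
\[
\mathbb E[e^G\mathbf 1\{G<0\}]=\mathbb P_{\mathcal N(d^2,2d^2)}(G<0)=\Phi\!\left(-\tfrac{d}{\sqrt2}\right).
\]
Also $\mathbb P(G\ge0)=\Phi(-d^2/(\sqrt2 d))=\Phi(-d/\sqrt2)$. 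Adding the two pieces gives $a_{\mathrm G}=2\Phi(-d/\sqrt2)=A(d)$. The case $d=0$ cannot occur when $\alpha<\beta$, because of the next point.

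\emph{Step 2 (strict positivity of $\sigma$).} Since $U$ takes at least two values on $\mathcal S_T$ and every $\pi_s$ has full support on the finite set $\mathcal S_T$, we get $\sigma(s)>0$ for every $s$. Moreover, $s\mapsto\sigma(s)$ is continuous, since the $\pi_s$ are finite exponential families. Hence $\alpha\mapsto d(\alpha_1,\alpha)$ is a continuous, strictly increasing bijection from $[\alpha_1,\alpha_K]$ onto $[0,D_{\mathrm{tot}}]$. It follows that each ladder corresponds to a unique partition of $[0,D_{\mathrm{tot}}]$ into gaps $d_k>0$ with $\sum_k d_k=D_{\mathrm{tot}}$, and conversely.

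\emph{Step 3 (max--min).} Because $A$ is strictly decreasing, $\min_k A(d_k)=A(\max_k d_k)$, so the problem reduces to minimizing $\max_k d_k$ subject to $\sum_k d_k=D_{\mathrm{tot}}$. Any feasible partition has $\max_k d_k\ge D_{\mathrm{tot}}/(K-1)$, and equality holds iff all $d_k$ are equal; if some gap were smaller, another would have to exceed the average. The equal partition is therefore the unique minimizer. Pulling it back through the bijection of Step 2 gives the unique ladder satisfying Eq.~\ref{eq:equal-thermodynamic-distance}, with common acceptance $A(D_{\mathrm{tot}}/(K-1))$.

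The main point needing care is the lognormal identity in Step 1. The remaining steps follow from strict monotonicity, with uniqueness resting on the strict positivity of $\sigma$ from Step 2.
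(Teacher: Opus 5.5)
Your proposal is correct and follows essentially the same route as the paper. Both arguments use the exponential-tilt identity $\mathbb P(G\ge0)+\mathbb E[e^{G}\mathbf 1\{G<0\}]=2\Phi(-d/\sqrt2)$, get $\sigma>0$ from full support and nonconstant $U$, reduce the max--min problem to minimizing $\max_k d_k$ subject to $\sum_k d_k=D_{\mathrm{tot}}$, and recover the unique rungs by inverting the strictly increasing map $\alpha\mapsto d(\alpha_1,\alpha)$. The only difference is that you write out the tilt calculation, showing that the $\mathcal N(-d^2,2d^2)$ density times $e^{g}$ is the $\mathcal N(d^2,2d^2)$ density, which the paper merely cites.
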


\begin{proof}
For \(d=d(\alpha,\beta)>0\) and
\(G_{\mathrm G}\sim\mathcal N(-d^2,2d^2)\), the Gaussian tail and its
exponential tilt give
\(\mathbb P(G_{\mathrm G}\geq0)
+\mathbb E[e^{G_{\mathrm G}}\mathbf 1\{G_{\mathrm G}<0\}]
=2\Phi(-d/\sqrt2)\).
This proves the stated acceptance formula, and \(A\) is strictly decreasing \citep{kofke2002acceptance}.

For any feasible ladder, let
\(d_k=d(\alpha_k,\alpha_{k+1})\). Positive support and nonconstant \(U\)
imply \(\sigma(\alpha)>0\), while additivity gives
\(\sum_{k=1}^{K-1}d_k=D_{\mathrm{tot}}\). Consequently,
\(\min_k a_{\mathrm G}(\alpha_k,\alpha_{k+1})
=A(\max_k d_k)\). Since
\(\max_k d_k\geq D_{\mathrm{tot}}/(K-1)\), with equality if and only if all
gaps equal their average, the equal-gap vector is uniquely optimal.

Finally, the map
\(F(\alpha)=d(\alpha_1,\alpha)/D_{\mathrm{tot}}\) is continuous and strictly
increasing, so the optimal gaps determine the unique rungs
\(\alpha_k^{\mathrm{opt}}
=F^{-1}((k-1)/(K-1))\), \(k=1,\ldots,K\).
\end{proof}

Thus equi-acceptance follows from the max--min objective: fixed total
thermodynamic length forces the optimal ladder to balance every bottleneck.
When $U$ is constant on $\mathcal S_T$, every power target is the same,
$\sigma(\alpha)=0$, and each swap is accepted.  Every feasible ladder is
then max--min optimal. Lemma~\ref{thm:unique-maximin-ladder} covers the
varying-energy regime in which rung placement affects transport \citep{rathore2005optimal, syed2022nonreversible}.

\begin{corollary}[Arithmetic and geometric ladder templates]
\label{cor:arithmetic-geometric-ladders}
\label{cor:geometric-power-ladder}
Fix $K$, $\alpha_1$, and $\alpha_K$, and apply the max--min construction of Lemma~\ref{thm:unique-maximin-ladder} to either of the following idealized
thermodynamic design curves on $[\alpha_1,\alpha_K]$.  Each curve yields a
unique surrogate ladder:
\begin{equation}
    \begin{aligned}
    \sigma(\alpha)=\sigma_0>0
    &\quad\Longrightarrow\quad
    \alpha_k^{\mathrm{opt}}
    =\alpha_k^{\mathrm{arith}}
    :=\alpha_1+
      \frac{k-1}{K-1}(\alpha_K-\alpha_1),
    \qquad k=1,\ldots,K,\\
    C(\alpha)=\alpha^2\sigma^2(\alpha)=c>0
    &\quad\Longrightarrow\quad
    \alpha_k^{\mathrm{opt}}
    =\alpha_k^{\mathrm{geom}}
    :=\alpha_1
      \left(\frac{\alpha_K}{\alpha_1}\right)^{(k-1)/(K-1)},
    \qquad k=1,\ldots,K.
    \end{aligned}
    \label{eq:arithmetic-geometric-ladders}
\end{equation}
\end{corollary}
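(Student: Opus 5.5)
The plan is to reduce both cases directly to Lemma~\ref{thm:unique-maximin-ladder}. That lemma characterizes the unique max--min ladder as the one equalizing the thermodynamic gaps $\int_{\alpha_k}^{\alpha_{k+1}}\sigma(s)\,ds=D_{\mathrm{tot}}/(K-1)$, equivalently $\alpha_k^{\mathrm{opt}}=F^{-1}((k-1)/(K-1))$ with $F(\alpha)=d(\alpha_1,\alpha)/D_{\mathrm{tot}}$. So for each idealized design curve it suffices to compute $d(\alpha_1,\alpha)$ in closed form, check that it is continuous and strictly increasing, and invert $F$ at the equally spaced levels $(k-1)/(K-1)$. Uniqueness is then inherited from the lemma, since its proof only used positivity of $\sigma$ and additivity of $d$. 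Both hold for the design curves, taken as given functions on $[\alpha_1,\alpha_K]$ rather than derived from an actual $U$.

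\emph{Arithmetic case.} If $\sigma(\alpha)=\sigma_0>0$, then $d(\alpha_1,\alpha)=\sigma_0(\alpha-\alpha_1)$ and $D_{\mathrm{tot}}=\sigma_0(\alpha_K-\alpha_1)$. Hence $F(\alpha)=(\alpha-\alpha_1)/(\alpha_K-\alpha_1)$, and solving $F(\alpha_k)=(k-1)/(K-1)$ gives $\alpha_k^{\mathrm{arith}}$ immediately.

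\emph{Geometric case.} If $\alpha^2\sigma^2(\alpha)=c>0$, then $\sigma(\alpha)=\sqrt c/\alpha$, taking the positive root since $\sigma$ is a standard deviation and $\alpha\ge\alpha_1\ge1>0$. Then $d(\alpha_1,\alpha)=\sqrt c\,\log(\alpha/\alpha_1)$, so $F(\alpha)=\log(\alpha/\alpha_1)/\log(\alpha_K/\alpha_1)$. Setting this equal to $(k-1)/(K-1)$ and exponentiating yields $\alpha_k^{\mathrm{geom}}=\alpha_1(\alpha_K/\alpha_1)^{(k-1)/(K-1)}$. The endpoints $k=1$ and $k=K$ are checked directly.

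The only step needing care is justifying that Lemma~\ref{thm:unique-maximin-ladder} applies when $\sigma$ is a prescribed curve rather than the energy standard deviation of a nonconstant $U$. I would argue that the lemma's proof uses only three properties: the Gaussian surrogate acceptance $A(d)$ is strictly decreasing in $d$; $d$ is additive over adjacent intervals; and $\sigma>0$, so that $F$ is a continuous strictly increasing bijection of $[\alpha_1,\alpha_K]$ onto $[0,1]$. All three hold verbatim for the two curves. I would also remark that the common predicted acceptance equals $A(\sigma_0(\alpha_K-\alpha_1)/(K-1))$ in the first case and $A(\sqrt c\log(\alpha_K/\alpha_1)/(K-1))$ in the second. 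This matches Remark~\ref{rem:ar_geom_seq}, where acceptance depends only on the ratio $\alpha_{k+1}/\alpha_k$.
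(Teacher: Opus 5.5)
Your proposal is correct and matches the paper's proof. Both compute $d(\alpha,\beta)$ in closed form for each curve, $\sigma_0(\beta-\alpha)$ and $\sqrt{c}\,\log(\beta/\alpha)$, impose equal thermodynamic gaps via Lemma~\ref{thm:unique-maximin-ladder} (you invert $F$, the paper equalizes raw or log gaps), and read off the common acceptance $A(D_{\mathrm{tot}}/(K-1))$. Your extra step, noting that the lemma's proof only uses strict monotonicity of $A$, additivity of $d$, and $\sigma>0$, is a useful rigor point the paper leaves implicit: it lets the lemma apply to prescribed design curves even though, as the paper itself remarks, a constant positive $\sigma$ cannot come from an actual finite-state energy.
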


\begin{proof}
When $\sigma(\alpha)=\sigma_0$, integration gives
$d(\alpha,\beta)=\sigma_0(\beta-\alpha)$.  Equal thermodynamic gaps are equal
raw-power gaps, and the fixed endpoints make each gap
$(\alpha_K-\alpha_1)/(K-1)$.  When $C(\alpha)=c$, one has
$\sigma(\alpha)=\sqrt c/\alpha$, so integration gives
$d(\alpha,\beta)=\sqrt c\log(\beta/\alpha)$.  Equal thermodynamic gaps are
then equal log-power gaps, and the fixed endpoints make each log gap
$\log(\alpha_K/\alpha_1)/(K-1)$.  These two substitutions yield
Equation~\ref{eq:arithmetic-geometric-ladders}; by Lemma~\ref{thm:unique-maximin-ladder}, every interface then has the common predicted acceptance $A(D_{\mathrm{tot}}/(K-1))$, i.e., $A\bigl(\sigma_0(\alpha_K-\alpha_1)/(K-1)\bigr)$ for the arithmetic template and $A\bigl(\sqrt{c}\,\log(\alpha_K/\alpha_1)/(K-1)\bigr)$ for the geometric one.
\end{proof}

These exact curves are idealized design models.  Approximate flatness of
$\sigma(\alpha)$ gives a near-arithmetic initialization, while approximate
constancy of $C(\alpha)$ gives a near-geometric initialization.  Pilot
calibration can refine either template toward equal thermodynamic gaps \citep{kofke2002acceptance}.

In a finite state space, exact constancy of $\sigma^2(\alpha)$ on an open
interval forces $\sigma^2(\alpha)=0$ and constant energy: analyticity extends
the constant value across the positive power domain, while concentration on
minimum-energy states sends the variance to zero as $\alpha\to\infty$.
Therefore, positive constant variance is an idealized local model, and
arithmetic spacing is a local empirical template.

\section{Chain Communication Protocol}
\label{sec:chain-communication}

Each adjacent-pair swap is itself an MH move on the joint target, so any composition
of swaps preserves $\Pi_m$ at every stage $m$, and hence $\Pi$: the communication
schedule cannot change what we sample, only how fast records travel across the
ladder. Hence, choosing one is an efficiency question, and its answer depends on
the regime.

Rungs are indexed $1,\ldots,K$ as in the main text, and edge $k\in\{1,\ldots,K-1\}$ joins rungs $k$ and $k+1$. An attempt on edge $k$ is accepted with probability $a_k\in(0,1]$; as in the remark of Section~\ref{sec:ladder-design}, acceptance decisions are taken to be independent with these fixed, history-independent probabilities, and we write
\[
r_k=1-a_k,\qquad \rho_k=\frac{r_k}{a_k},\qquad R=\sum_{k=1}^{K-1}\rho_k .
\]
The schedules are defined as follows.

\textbf{Ordered ADJ:} one sweep attempts edges $1,2,\ldots,K-1$
in order, applying each accepted exchange immediately. Observe the tagged
replica at complete-sweep boundaries. Its round trip starts at rung $1$,
visits rung $K$, and then returns to rung $1$.
Let $T_\mathrm{ADJ}$ count sweeps, referring to one tagged replica's trip, 
rather than completion of a trip by
every replica. 
We proceed to derive the expected iteration count for a round trip $\E T_\mathrm{ADJ}$.

\emph{Upward passage.}
Follow the first upward crossing of each edge $k$.
Successful crossings can cascade through the entire ladder within one
sweep, so the passage takes one sweep plus delays caused by rejections.
After a rejection at edge $k$, the tag stays in the prefix
$\{1,\ldots,k\}$ until its next attempt at that edge. Observe the tag
just before edge $k$ is scheduled in each sweep. Until the next attempt,
its motion is the ordered scan confined to this prefix. Starting from
$k$, the mean time to return to $k$ and retry is also $k$ sweeps.
The number of rejections before the successful crossing is geometric,
with mean $\rho_k=r_k/a_k$. Independence and the Markov property make
the expected delay at this edge $k\rho_k$. Summing gives
\begin{equation}\label{eq:ord-up}
H^\mathrm{ADJ}_{1\to K}=1+\sum_{k=1}^{K-1}k\,\rho_k.
\end{equation}

\emph{Downward passage.}
A successful move from $k+1$ to $k$ occurs after edge $k-1$ has already
been processed. Thus the tag can move down by at most one rung per sweep,
so the passage takes $K-1$ sweeps plus rejection delays. After rejecting
edge $k$ from rung $k+1$, the tag makes an excursion in the suffix
$\{k+1,\ldots,K\}$ before retrying. Observing just before edge $k$ is
scheduled, the return-time fact gives a mean retry delay of $K-k$ sweeps.
There are on average $\rho_k$ rejections before the successful downward
crossing, hence
\begin{equation}\label{eq:ord-down}
H^\mathrm{ADJ}_{K\to1}=(K-1)+\sum_{k=1}^{K-1}(K-k)\rho_k.
\end{equation}
An endpoint reached during a sweep remains occupied until its end,
so these passage counts agree with the stated sweep-boundary convention.
Adding the two expectations gives 
\begin{equation}
    \E T_\mathrm{ADJ}=K(1+R).
\end{equation}

Lastly, we can compare the expected iteration count above with the
corresponding expected count for the asymptotically optimal schedule 
deterministic even--odd (DEO) introduced by \citep{okabe2001replica}.
More specifically in DEO, one layer attempts all even or all odd edges,
alternating the two matchings deterministically. In the lifted state
$(k,\varepsilon)$, the sign $\varepsilon\in\{-1,+1\}$ is the next
proposed direction. Acceptance preserves the sign; rejection reverses it.
An outward direction at an endpoint reverses in one idle layer. We count
the round trip, including both turnaround
layers. This is the recurring endpoint-arrival convention.

The corresponding expected count is 
\begin{equation}
    \E T_\mathrm{DEO}=2K(1+R).
\end{equation}

Subsequently, assume $d_{\mathrm{ADJ}}$ and $d_{\mathrm{DEO}}$ denote the 
iteration durations, including refinement and swap communication. When
the additional communication cost of ordered ADJ is negligible
relative to $d_{\mathrm{DEO}}$, the expected round-trip elapsed times satisfy
\[
    \frac{\mathbb{E}C_{\mathrm{ADJ}}}
         {\mathbb{E}C_{\mathrm{DEO}}}
    = \frac{d_\mathrm{ADJ}}{2d_{\mathrm{DEO}}}
    = \frac12+
      \frac{d_{\mathrm{ADJ}}-d_{\mathrm{DEO}}}
           {2d_{\mathrm{DEO}}}
    \approx \frac12.
\]

\begin{remark}
DEO becomes preferable when the additional sequential communication
outweighs this iteration-count advantage, namely when
$d_{\mathrm{ADJ}}>2d_{\mathrm{DEO}}$. In particular, as $K\to\infty$,
the quadratic scaling $\Theta(K^2(1+R))$ of the ordered sweep dominates.
Under these timing assumptions, DEO is asymptotically faster
as the number of replicas grows.
\end{remark}

\section{Extended Related Work}\label{sec:related}

\noindent\textbf{Distribution sharpening for LLMs.}
A major goal of LLM post-training is to shift probability mass toward higher-quality responses. RLHF optimizes against learned human-preference rewards~\citep{ouyang2022training}, while RLVR methods such as GRPO optimize against verifiable task rewards and have improved mathematical and coding performance~\citep{shao2024deepseekmath,guo2025deepseekr1,hu2025openreasonerzero,lambert2024tulu3}. However, post-training is expensive, reward-dependent, and can reduce output diversity by reinforcing already likely solution modes~\citep{he2025rewarding,NEURIPS2025_537d5aa7,chen2025posttraining}. This motivates training-free inference-time methods that sharpen or reshape the base distribution without updating model weights.

\noindent\textbf{Inference-time scaling.}
Best-of-$N$ and reward-reranking improve generation by sampling multiple completions and selecting a high-scoring one~\citep{wang2025effect,huang2025bestofn}. Temperature and truncation methods, including low-temperature, modify local token probabilities to balance quality and diversity~\citep{meister2023locally,troshin2025control,du2025optimizing}. These approaches are effective but are generally heuristic: they do not sample from a specified sequence-level target distribution. 

\noindent\textbf{Power distribution sampling.}
Power distribution sampling replaces these heuristics with an explicit sequence-level target, the power distribution $\pi_\alpha\propto p_0^{\alpha}$ with $\alpha>1$ (Eq.~\ref{eq:pi-alpha}), which sharpens the base model over whole completions rather than token by token. \citet{karan2025reasoning} introduced Power Sampling, which targets $\pi_\alpha$ with a Metropolis--Hastings chain whose proposals regenerate a uniformly chosen suffix under the tokenwise-powered proposal $g_\alpha$ (Eq.~\ref{eq:proposal}), applied blockwise over progressively longer prefixes. Without training, rewards, or verifiers, it nearly matches and in some cases exceeds GRPO on MATH500, HumanEval, and GPQA, but repeated suffix regeneration makes it expensive at inference time. Two follow-ups reduce this cost. \citet{ji2026scalable} show that $\pi_\alpha$ can be approximated autoregressively by the low-temperature distribution rescaled with token-level factors that capture the quality of future continuations; they estimate these factors with Monte Carlo rollouts and a jackknife bias correction, which removes the MCMC loop and reduces latency by more than an order of magnitude. PowerSMC~\citep{DBLP:arxiv/power-smc} instead targets $\pi_\alpha$ with sequential Monte Carlo: a small set of particles is advanced in parallel under $g_\alpha$, which they show is the unique prefix-only proposal minimizing the incremental weight variance, with token-by-token importance-weight correction, resampling when needed, and an exponent-bridging schedule that stabilizes the particles. It matches or exceeds MH-based Power Sampling on MATH500 at a fraction of its latency. \method builds on the local MH refinement of Power Sampling, and we compare against both Power Sampling and PowerSMC in our experiments.

\noindent\textbf{Monte Carlo methods for LLM inference.}
MCMC and SMC provide principled alternatives to heuristic decoding for unnormalized sequence-level targets. MCMC constructs a Markov chain with the desired stationary distribution, and has been used for energy-based controllable text generation, blockwise text resampling, quality-aware machine translation, and constrained LM sampling~\citep{mireshghallah2022mix,forristal2023block,faria2024quest,anaya2026constrained}. SMC instead maintains weighted particles across a sequence of intermediate targets and has recently been applied to twisted SMC for language-model inference, mathematical reasoning, and reward-guided decoding~\citep{zhao2024probabilisticinference,feng2024step,markovic2026sampling}. 

\noindent\textbf{Parallel tempering.}
Parallel tempering is designed to improve MCMC mixing by running multiple tempered replicas and swapping their states with a Metropolis correction. It originated in spin-glass simulation~\citep{swendsen1986replica}, was generalized to statistical MCMC~\citep{geyer1991markov}, and became standard in exchange Monte Carlo and molecular simulation~\citep{hukushima1996exchange,sugita1999replica,earl2005parallel}. Later work tunes the temperature ladder~\citep{kone2005efficiency}, optimizes the annealing path~\citep{syed2021parallel}, and uses non-reversible replica dynamics~\citep{okabe2001replica,syed2022nonreversible}. These methods exploit high-temperature replicas for exploration and low-temperature replicas for target concentration. Existing LLM samplers rarely use this exchange mechanism: they typically rely on a single MH chain~\citep{karan2025reasoning}, independent best-of-$N$ samples~\citep{huang2025bestofn}, or SMC particles~\citep{DBLP:arxiv/power-smc}. \method adapts parallel tempering to LLM sampling by transferring states from exploratory replicas to sharpened targets, with the aim of improving finite-budget mixing while exploiting parallel execution.

Recently, \citet{he2026recipes} developed replica-exchange sampling for LLMs using intermediate targets at a common sharpening temperature indexed by prefix length, with exchange proposals that extend and truncate token chunks. Instead, \method couples replicas at different sharpening powers over a common horizon at each generation stage, exchanging entire sequence records using cached likelihoods without additional model evaluations. This construction explicitly couples exploration under flatter distributions with concentration under sharper targets over the same completion space. 

\section{Experimental Settings}\label{sec:exp-settings}

\subsection{Choice of Baselines}
\begin{itemize}[leftmargin=*]
    \item \textbf{Standard Decoding.} We sample directly from the autoregressive model $p_0$ using standard ancestral decoding at temperature one, corresponding to the unsharpened target $\alpha=1$. This baseline measures the raw capability of the underlying model and serves as the reference against which all sharpening gains are reported.
    \item \textbf{Low Temperature Decoding.} We decode from the autoregressive model at a reduced sampling temperature $\tau=1/\alpha<1$, the standard heuristic for concentrating probability on high-likelihood tokens. Because temperature sharpens each next-token conditional independently, it does not reproduce the sequence-level power target $\pi_{\alpha}\propto p_0^{\alpha}$ (Equation~\ref{eq:theory-power-conditional}).
    
    \item \textbf{Power Sampling (Blockwise MH Sampler)}~\citep{karan2025reasoning}\textbf{.} For the \emph{Power Sampling} results of Table~\ref{tab:main-results}, we use the authors' released implementation without modification. In particular it retains the early stopping analyzed in Section~\ref{sec:fixed-horizon}, where each proposal is generated and scored only through the current realized length. 
\item \textbf{Power Sequential Monte Carlo}: A particle-based alternative that approximates the same tempered target by maintaining a population of partial sequences and periodically resampling them in proportion to their power-reweighted importance weights. It represents the sequential Monte Carlo counterpart to our parallel tempering scheme, allowing us to contrast population resampling against exchange-based transport.

\item \textbf{GRPO}: Group Relative Policy Optimization~\citep{shao2024deepseekmath} is a reinforcement-learning method that fine-tunes the model weights to concentrate probability mass on high-reward trajectories. Unlike the inference-time samplers above, GRPO modifies the model itself and requires a verifiable reward signal. we include it as a strong training-based reference point to assess how far a purely inference-time sampler can close the gap to a method that pays the additional cost of post-training.
\end{itemize}

For Power Sampling~\citep{karan2025reasoning} and PowerSMC, we based our experiments on the respective official implementations, left their sampling kernels unmodified, and tuned their method-specific hyperparameters independently for each model to obtain the strongest performance we could achieve under the common evaluation protocol.

\subsection{Choice of Datasets}

\begin{itemize}[leftmargin=*, nosep]
    \item {\textbf{MATH500}}: The {MATH} dataset~\citep{hendrycks2021math} consists of competition math problems spanning seven categories including geometry, number theory, and precalculus. There are 12500 problems total, with 7500 training problems and 5000 test problems. {MATH500} is a fixed subset of 500 problems from the test set~\citep{lightman2024lets}.

    \item \textbf{HumanEval}: HumanEval is a set of 164 handwritten programming problems covering algorithms, reasoning, mathematics, and language comprehension \cite{chen2021evaluatingllmcode}. Each problem has an average of 7.7 associated unit tests, where solving the problem corresponds to passing all unit tests.

    \item {\textbf{GPQA}}: GPQA \cite{rein2024gpqa} is a dataset of multiple-choice science questions (physics, chemistry, and biology) which require advanced reasoning skills to solve. We use the GPQA Diamond subset for evaluation, which consists of 198 questions which represent the highest quality subset of the GPQA dataset.

    \item {\textbf{GSM8K}}: GSM8K is a dataset of grade-school math word problems that require multi-step arithmetic reasoning to solve~\citep{gsm8k_dataset}. Each problem is paired with a natural-language solution and a single numeric final answer, and correctness is scored by exact match on that answer. We evaluate on the standard 1319-problem test split.
    \item {\textbf{AIME 24\&25}}: The AIME 24\&25 sets collect the problems from the American Invitational Mathematics Examination for those years, a high-difficulty olympiad-style competition whose questions each have an integer answer between 0 and 999~\citep{aime24, aime25}.
    We evaluate on the combined $60$-problem set, both examinations (I and II) from $2024$ and $2025$ respectively. This benchmark probes performance on the hardest, most reasoning-intensive mathematics in our suite, where exploration of distinct solution paths matters most.
    \item {\textbf{LCB v5}}: LiveCodeBench is a contamination-aware coding benchmark that continuously collects problems from competitive-programming contests \citep{DBLP:conf/iclr/JainHGLYZWSSS25}. 
\end{itemize}

\subsection{Evaluation Metrics}

\paragraph{Task accuracy.}
Our primary metric is strict pass@1 accuracy: for each benchmark item, every method returns exactly one completion, and we report the percentage of items solved. No method uses reward- or verifier-based reranking or oracle filtering. For GSM8K and AIME 24\&25, we extract the final numeric answer and require an exact match after standard normalization. GPQA is scored by exact match on the selected multiple-choice option. MATH500 is scored with the PRM800K/Hendrycks-MATH equivalence grader.

\paragraph{Code correctness.}
For HumanEval, and LiveCodeBench~v5, we report execution-based pass@1. A completion is correct only if it compiles or executes successfully and passes every associated unit test; invalid programs, runtime errors, and timeouts receive zero credit. 

\paragraph{Diagnostic and efficiency metrics.}
To characterize the generated reasoning traces, we additionally report the average token log-likelihood under the base model,
$T^{-1}\sum_{t=1}^{T}\log p_0(x_t\mid x_{<t})$. Confidence is measured as the mean negative next-token entropy,
\[
C(x)=\frac{1}{T}\sum_{t=1}^{T}\sum_{v\in\mathcal V}
p_0(v\mid x_{<t})\log p_0(v\mid x_{<t}),
\]
with values closer to zero indicating greater confidence; we first average over generated positions within each completion and then average the resulting completion-level scores across benchmark items.

\subsection{Configuration}

\label{sec:hyperparameter-configuration}

\paragraph{Models and prompting.}
The evaluated checkpoints range from 3.8B to 9B parameters and include
base, instruction-tuned, and GRPO-trained models from the Qwen2.5, Qwen3,
Qwen3.5, Phi-3.5, and Tulu-3 families. No additional training is performed as part
of our evaluation. The Qwen3 models use their tokenizer-provided chat
templates, whereas the Qwen2.5 models use raw prompts. Experiments were run
on NVIDIA H100 hardware. The requested completion budget is
capped per item by the checkpoint context length minus the prompt length
and a 64-token safety margin. The Qwen2.5-Math experiments use
$T=3072$. For Qwen3, Table \ref{tab:ppt-hyperparams-qwen3-4b} shows: GSM8K use $T=3072$, HumanEval use $T=4096$, MATH500 and LCB v5 experiments use
$T=8192$,  Qwen3 GPQA experiments use $T=16384$, and  Qwen3 AIME 24\&25 experiments use $T=32768$.
Lastly, for Qwen3.5 GPQA and LCB v5 use $T=131072$, and AIME 24\&25 use $T=65536$.


\paragraph{PowerSMC implementation.}
PowerSMC maintains normalized particle weights $\bar w_i$ and computes
\[
    \mathrm{ESS}
    =
    \frac{1}{\sum_{i=1}^{N}\bar w_i^2}.
\]
ESS is checked at the configured token interval and at the final horizon.
When it falls below $0.5N$, the implementation performs systematic
resampling and resets the particle weights. At the end of generation, one
particle is sampled according to the final normalized weights. Particles
are internal sampler states and are not counted as additional evaluation
completions. Lastly, PowerSMC uses a per-particle EOS handling.

\begin{table*}[t]
\centering
\caption{PPT hyperparameters for the
\texttt{Qwen3-4B} and \texttt{Qwen3-8B} evaluations.
$B$ is the generation block size, and $N_{\mathrm{MCMC}}$ is the number of local refinement steps per block.}
\label{tab:ppt-hyperparams-qwen3-4b}
\setlength{\tabcolsep}{3.5pt}
\begin{tabular}{lccccc}
\toprule
Benchmark
& $K$
& $\{\alpha_k\}$
& $T$
& $B$
& $N_{\mathrm{MCMC}}$\\
\midrule
MATH500
& 4
& $\{2.35,2.5,2.7,2.9\}$
& 8192
&4096
& 10\\

AIME 24\&25
& 4
& $\{1.8,1.95,2.15,2.4\}$
& 32768
& 8192
& 10\\

GPQA-Diamond
& 3
& $\{1.15,1.3,1.5\}$
& 16384
& 4096
& 6
\\

LiveCodeBench v5
& 4
& $\{1.8,2.0,2.2,2.4\}$
& 8192
& 2048
& 8\\

HumanEval
& 3
& $\{1.5, 1.6, 1.8\}$
& 4096
& 2048
& 20\\

GSM8K 
& 6
& $\{2.0, 2.3, 2.6, 3.0, 3.5, 4.0\}$
& 3072
& 192
& 10\\
\bottomrule
\end{tabular}
\end{table*}

\begin{table*}[t]
\centering
\caption{PPT hyperparameters for the
\texttt{Qwen3.5-9B} evaluations.
$B$ is the generation block size, and $N_{\mathrm{MCMC}}$ is the number of local refinement steps per block.}
\label{tab:ppt-hyperparams-qwen35-9b}
\setlength{\tabcolsep}{3.5pt}
\begin{tabular}{lccccc}
\toprule
Benchmark
& $K$
& $\{\alpha_k\}$
& $T$
& $B$
& $N_{\mathrm{MCMC}}$\\
\midrule
GPQA-Diamond
& 3
& $\{1.05, \,1.10, \,1.25\}$
& 131{,}072
& 131{,}072
& 10\\

AIME 24\&25
& 3
& $\{1.25,\,1.4,\,1.6\}$
& 65{,}536
& 65{,}536
& 5\\

LiveCodeBench v5
& 3
& $\{1.1,\,1.25,\,1.4\}$
& 131{,}072
& 131{,}072
& 5\\
\bottomrule
\end{tabular}
\end{table*}
Across all configurations, replica $K$ is the returned replica and has target
power $\alpha^\star$. Local proposals draw the restart position uniformly
from $\{1,\ldots,T_m\}$ and regenerate the suffix using temperature
$\tau_k=1/\alpha_k$, top-$p=1$, disabled top-$k$, and min-$p=0$. We apply one
ordered adjacent sweep, $(1,2),(2,3),\ldots,(K-1,K)$, after every synchronized local-MH round.

\paragraph{Common Comparison Protocol}
\label{apx:common-protocol}

Two entries are compared, in Table~\ref{tab:main-results} or in
Appendix~\ref{apx:extra-exp}, only when they satisfy the following
conditions:
\begin{itemize}[leftmargin=*, nosep]
    \item \textbf{Grading.} Public or official graders throughout.
    \item \textbf{Prompt.} Every baseline is compared with \method{} under an identical prompt.
    \item \textbf{Completion cap.} Identical horizon for all baselines.
\end{itemize}

\section{Extended Experimental Results} \label{apx:extra-exp}

\subsection{Extended Main Results}
\newcommand{\best}[1]{\textbf{#1}}
\newcommand{\second}[1]{\underline{#1}}
\providecommand{\sd}[1]{{\scriptsize$\pm$#1}}

\begin{table*}[!t]
\centering
\caption{Single-sample performance (\%) across reasoning, coding, and knowledge benchmarks. \textbf{Bold} indicates the best result within each base-model group. Scores are averaged over 5 random seeds and reported as mean $\pm$ standard deviation. Dashes '\textemdash{}' indicate that no GRPO result is available for Phi 3.5 instruct.}
\label{tab:main-results-extended}
\begin{adjustbox}{width=\textwidth}
\begin{tabular}{l|cccccc}
\toprule
\textbf{Method} &
\textbf{MATH 500} &
\textbf{GPQA} &
\textbf{Human Eval} &
\textbf{GSM8K} &
\textbf{AIME 24\&25} &
\textbf{LCB v5} \\
\midrule

\multicolumn{7}{l}{{\textbf{Qwen 2.5 MATH}}} \\
\hspace{12pt} Standard
& 51.6 \sd{1.8} & 31.6 \sd{1.1} & 30.5 \sd{1.8} & 71.4 \sd{0.9} & 4.3 \sd{0.9} & 2.5 \sd{0.9} \\
\hspace{12pt} Lower Temperature
& 72.7 \sd{0.7} & 35.9 \sd{2.3} & 53.0 \sd{2.2} & 85.1 \sd{1.2} & 11.0 \sd{1.5} & 7.6 \sd{2.1} \\
\hspace{12pt} Power Sampling
& 76.7 \sd{0.4} & 35.9 \sd{1.8} & 56.7 \sd{0.4} & 86.7 \sd{0.3} & 15.7 \sd{1.5} & 8.9 \sd{1.2} \\
\hspace{12pt} PowerSMC
& 78.0 \sd{0.7} & 32.8 \sd{1.4} & 59.1 \sd{2.6} & 85.4 \sd{1.0} & 13.3 \sd{1.2} & 10.7 \sd{1.0} \\
\hspace{12pt} GRPO
& 77.3 \sd{0.6} & 36.5 \sd{2.7} & 53.7 \sd{1.1} & 86.4 \sd{0.4} & 13.3 \sd{1.7} & 2.8 \sd{1.1} \\
\rowcolor{pptrow} \hspace{12pt}  \textbf{\method (Ours)}
& \textbf{79.6} \sd{0.8} & \textbf{37.6} \sd{1.1} & \textbf{60.6} \sd{1.3}
& \textbf{91.4} \sd{1.1} & \textbf{17.7} \sd{1.5} & \textbf{12.5} \sd{1.9} \\

\midrule

\multicolumn{7}{l}{ {\textbf{Qwen 2.5}}} \\
\hspace{12pt} Standard
& 43.0 \sd{3.0} & 28.6 \sd{2.0} & 20.7 \sd{0.9} & 59.2 \sd{1.1} &  4.3 \sd{1.9} & 2.5 \sd{0.3} \\
\hspace{12pt} Lower Temperature
& 63.8 \sd{2.2} & 32.8 \sd{2.1} & 51.8 \sd{0.7} & 81.0 \sd{0.5} &  6.7 \sd{2.0} & 15.1 \sd{1.8} \\
\hspace{12pt} Power Sampling
& 69.6 \sd{2.1} & 33.2 \sd{2.9} & 58.7 \sd{2.5} & 86.1 \sd{0.5} &  9.0 \sd{1.5} & 6.6 \sd{0.8} \\
\hspace{12pt} PowerSMC
& 70.3 \sd{0.7} & 34.0 \sd{0.6} & 59.4 \sd{0.8} & 86.3 \sd{0.4} &  12.7 \sd{1.9} & 15.0 \sd{0.7} \\
\hspace{12pt} GRPO
& \textbf{74.0} \sd{1.1} & 33.5 \sd{2.5} & 57.3 \sd{0.9} & 84.2 \sd{0.4} & 13.3 \sd{1.2} & \textbf{16.9} \sd{1.2} \\
\rowcolor{pptrow} \hspace{12pt} \textbf{\method (Ours)}
& \textbf{74.0} \sd{1.5} & \textbf{34.9} \sd{0.7} & \textbf{61.6} \sd{0.9} & \textbf{88.1} \sd{0.5} & 
 \textbf{15.7} \sd{0.9} & 9.3 \sd{0.5} \\

\midrule

\multicolumn{7}{l}{ {\textbf{Phi 3.5 instruct}}}\\
\hspace{12pt} Standard
& 37.9 \sd{1.4} & 31.6 \sd{1.4} & 37.9 \sd{1.1} & 80.4 \sd{0.5} &  2.3 \sd{0.9} & 7.2 \sd{1.3} \\
\hspace{12pt} Lower Temperature
& 41.1 \sd{0.8} & 31.3 \sd{0.4} & 61.0 \sd{1.2} & 82.3 \sd{0.4} &  4.3 \sd{1.5} & 8.1 \sd{1.8} \\
\hspace{12pt} Power Sampling
& 38.2 \sd{1.2} & 29.8 \sd{2.6} & 67.0 \sd{0.8} & 81.9 \sd{0.7} &  \textbf{6.7} \sd{1.2} & 11.7 \sd{1.9} \\
\hspace{12pt} PowerSMC
& 45.1 \sd{0.8} & 31.8 \sd{1.1} & 63.4 \sd{0.4} & 86.7 \sd{0.6} &  4.3 \sd{0.9} & 8.4 \sd{2.2} \\
\hspace{12pt} GRPO
& \textemdash{} & \textemdash{} & \textemdash{} & \textemdash{} & \textemdash{} & \textemdash{} \\
\rowcolor{pptrow} \hspace{12pt} \textbf{\method (Ours)}
& \textbf{48.4} \sd{1.1} & \textbf{33.4} \sd{0.9} & \textbf{68.3} \sd{0.7}
& \textbf{88.6} \sd{0.4} &  {4.3} \sd{1.5} & \textbf{12.0} \sd{2.2} \\
\midrule

\multicolumn{7}{l}{ {\textbf{Tulu}}} \\
\hspace{12pt} Standard
& 44.9 \sd{1.0} & 26.8 \sd{0.5} & 59.8 \sd{1.8} & 85.4 \sd{0.3} &  2.3 \sd{1.5} & 9.8 \sd{0.9} \\
\hspace{12pt} Lower Temperature
& 45.2 \sd{1.1} & 26.3 \sd{0.5} & 63.4 \sd{1.5} & 87.3 \sd{0.2} &  2.3 \sd{1.5} & 10.0 \sd{1.3} \\
\hspace{12pt} Power Sampling
& 49.9 \sd{0.9} & 32.3 \sd{2.2} & 59.8 \sd{0.9} & 87.8 \sd{0.1} &  4.3 \sd{0.9} & 11.1 \sd{0.9} \\
\hspace{12pt} PowerSMC
& 50.3 \sd{0.6} & 31.8 \sd{0.6} & 60.2 \sd{0.9} & 87.7 \sd{0.2} &  4.3 \sd{0.9} & 12.2 \sd{0.7} \\
\hspace{12pt} GRPO
& 47.0 \sd{0.8} & 24.7 \sd{0.4} & 61.3 \sd{1.1} & 87.6 \sd{0.1} &  4.3 \sd{1.5} & 11.0 \sd{0.5} \\
\rowcolor{pptrow} \hspace{12pt} \textbf{\method (Ours)}
& \textbf{51.8} \sd{0.6} & \textbf{35.9} \sd{0.7} & \textbf{64.0} \sd{1.6}
& \textbf{89.7} \sd{0.3} &  \textbf{6.7} \sd{3.1} & \textbf{12.6} \sd{0.6} \\

\midrule
 {\textbf{Qwen3-4B}}
 \\
\hspace{12pt} Standard
& 83.3 \sd{1.6} & 51.0 \sd{1.3} & 85.4 \sd{2.6} & 94.6 \sd{0.1} &  70.7 \sd{1.9} & 45.6 \sd{0.4} \\
\hspace{12pt} Lower Temperature
& 85.1 \sd{0.4} & 48.5 \sd{1.2} & 86.6 \sd{1.8} & 94.5 \sd{0.2} & 72.0 \sd{1.8} & 44.8 \sd{1.5} \\
\hspace{12pt} Power Sampling
& 83.5 \sd{2.5} & 51.0 \sd{0.7} & 90.2 \sd{1.5} & 94.2 \sd{0.1} &  73.3 \sd{1.2} & 55.1 \sd{1.7} \\
\hspace{12pt} PowerSMC
& 83.2 \sd{1.2} & 47.5 \sd{1.4} & 87.9 \sd{0.9} & 94.3 \sd{0.1} &  61.7 \sd{1.2} & 52.3 \sd{1.5} \\
\hspace{12pt} GRPO
& 85.6 \sd{0.6} & 50.0 \sd{0.6} & 91.6 \sd{0.7} & 94.6 \sd{0.1} &  73.3 \sd{1.7} & 52.7 \sd{1.8} \\
\rowcolor{pptrow} \hspace{12pt} \textbf{\method (Ours)}
& \textbf{86.0} \sd{0.7} & \textbf{53.6} \sd{0.9} & \textbf{93.9} \sd{2.3}
& \textbf{94.8} \sd{0.2} &  \textbf{78.3} \sd{2.4} & \textbf{57.9} \sd{1.3} \\

\midrule

 {\textbf{Qwen3-8B}}
\\
\hspace{12pt} Standard
& 83.9 \sd{1.2} & 55.8 \sd{1.7} & 84.9 \sd{1.2} & 94.6 \sd{0.3} &  72.7 \sd{3.8} & 48.1 \sd{1.7} \\
\hspace{12pt} Lower Temperature
& 84.6 \sd{0.4} & 54.5 \sd{2.1} & 87.2 \sd{0.6} & 95.0 \sd{0.2} &  73.0 \sd{4.6} & 48.4 \sd{2.5} \\
\hspace{12pt} Power Sampling
& 82.2 \sd{1.8} & 57.6 \sd{1.6} & 90.7 \sd{1.6} & 95.2 \sd{0.5} &  73.3 \sd{2.4} & 53.5 \sd{1.2} \\
\hspace{12pt} PowerSMC
& 84.1 \sd{0.6} & 58.0 \sd{0.6} & 90.2 \sd{0.4} & \textbf{95.5} \sd{0.1} &  65.0 \sd{1.2} & 53.6 \sd{1.0} \\
\hspace{12pt} GRPO
& 86.4 \sd{0.6} & 53.0 \sd{0.7} & 93.3 \sd{1.2} & 94.6 \sd{0.4} &  74.0 \sd{0.9} & 54.1 \sd{2.3} \\
\rowcolor{pptrow} \hspace{12pt} \textbf{\method (Ours)}
& \textbf{88.0} \sd{0.6} & \textbf{60.1} \sd{0.8} & \textbf{94.9} \sd{1.5}
& \textbf{95.5} \sd{0.2} &  \textbf{78.3} \sd{2.6} & \textbf{58.4} \sd{1.3} \\

\bottomrule
\end{tabular}
\end{adjustbox}
\end{table*}

Table~\ref{tab:main-results-extended} shows that \method{} is the most consistent method across model families and task types, attaining the best or tied-best performance in the large majority of settings. Its advantage is clearest for the Qwen3 models on the hardest tasks: the largest margin over the strongest baseline is on AIME 24\&25 for Qwen3-4B (5.0 points) and on LCB v5 for Qwen3-8B (4.3 points). Results from other model families are more mixed. On tasks where the model is already near saturation, \method{} remains competitive. This pattern is consistent with the central intuition behind parallel tempering: exploratory replicas can search across different reasoning modes, while exchange lets the output rung inherit promising trajectories instead of having to discover them on its own. Because the main comparison changes more than one component relative to Power Sampling, the table establishes the breadth of the improvement; the matched ablations below isolate the effects of fixed-horizon refinement and replica exchange.

\paragraph{Acceptance Rates}
Table~\ref{tab:swap-acceptance} reports the mean acceptance rate of swap
proposals between adjacent rungs, averaged over all exchange attempts in
the run, across six benchmarks and six models.  Two regularities stand
out.  First, acceptance is governed by the model and by the task.
Second, the Qwen3 models accept far fewer swaps $(0.18-0.37)$ than the other
models $(0.35-0.58)$.  Both regimes nonetheless sit
in the range classically recommended for parallel tempering (roughly
$0.15-0.6$), so the ladders are
communicating rather than idling; a denser ladder for the newer and more capable
models would raise acceptance further at the cost of more replicas, and we
leave that trade-off to the ladder-design discussion.

\begin{table}[t]
  \centering
  \small
  \setlength{\tabcolsep}{4.5pt}
  \caption{Mean swap acceptance of the replica ladder.}
  \label{tab:swap-acceptance}
  \begin{adjustbox}{max width=\linewidth}
  \begin{tabular}{l cccccc}
    \toprule
    Dataset & Qwen3-8B & Qwen3-4B & Qwen 2.5 MATH & Qwen 2.5 & Tulu & Phi 3.5 instruct \\
    \midrule
    MATH500       & 0.199 & 0.242 & 0.446          & 0.500          & 0.487          & {0.373} \\
    AIME 24\&25     & 0.192 & 0.276 & 0.404          & 0.533          & 0.449          & 0.392 \\
    GSM8K         & 0.274 & 0.332 & 0.427          & 0.455          & 0.505          & {0.382} \\
    GPQA-D        & 0.203 & 0.240 & {0.346}        & 0.405          & 0.498          & 0.400 \\
    HumanEval     & 0.205 & 0.374 & 0.428          & 0.580          & {0.478}        & 0.421 \\
    LiveCodeBench & 0.222 & 0.260 & {0.530}        & 0.511          & 0.482          & 0.425 \\
    \bottomrule
  \end{tabular}
  \end{adjustbox}
\end{table}

\paragraph{Confidence of Trace }
\begin{figure}[t]
    \centering
    \begin{minipage}[t]{0.60\linewidth}
        \centering
        \includegraphics[height=4.2cm,width=\linewidth,keepaspectratio]{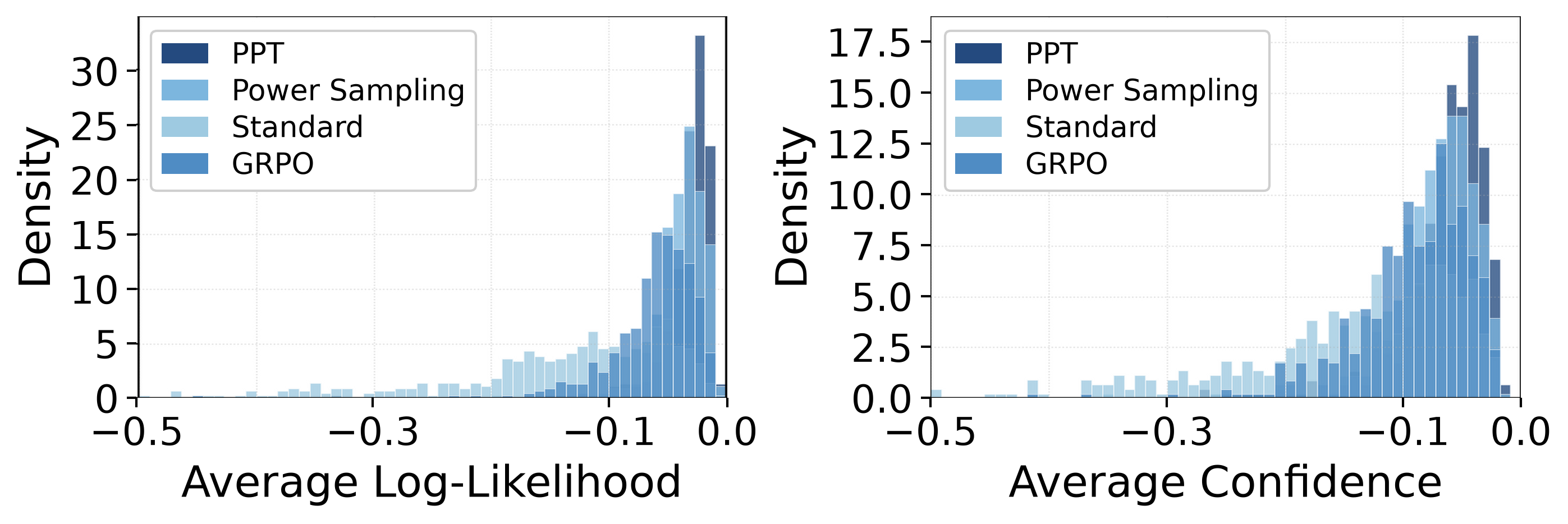}
        \caption{Confidence comparison across methods on MATH500 using Qwen3-8B; values closer to zero indicate more plausible traces.}
        \label{fig:confidence}
    \end{minipage}\hfill
    \begin{minipage}[t]{0.37\linewidth}
        \centering
        \safeincludegraphics[width=0.88\linewidth]{figures/acc_vs_mcmc.png}
        \caption{Accuracy gain over the one-step baseline for varying number of refinement steps.}
        \label{fig:local_update_ablation}
    \end{minipage}
\end{figure}

Figure~\ref{fig:confidence} complements task accuracy by asking whether the returned traces lie in regions that the model itself regards as plausible. As illustrated, for both average log-likelihood and negative next-token entropy, \method{} concentrates more of the distribution near zero and suppresses the diffuse low-score tail seen most clearly under standard decoding. This is the behavior expected from sequence-level sharpening: trajectories discovered across the ladder can move to the output rung when they are compatible with the sharper target. These diagnostics measure the model's internal preference rather than calibration, so they support the intended concentration mechanism but do not by themselves establish correctness.

\subsection{Ablation Studies}
\label{apx:ablation}

\paragraph{Impact of MCMC steps.}
The number of local refinement steps $N_{\mathrm{MCMC}}$ specifies the number of MH-update attempts per replica per block.
Figure~\ref{fig:local_update_ablation} demonstrates the accuracy gain over a single step, as most of the improvement arrives by $4$ steps ($+3.0$ on MATH500, $+4.1$ on HumanEval), and the best results come at $10$ steps ($+4.4$ and $+6.6$).
More refinement therefore helps overall but the gain is not monotone in $N_{\mathrm{MCMC}}$ 
and since local updates dominate runtime (Table~\ref{tab:time-memory-main}), $N_{\mathrm{MCMC}}$ is the main knob for trading accuracy against cost.

\paragraph{Compute-matched comparison}
\label{apx:compute-matched}
In this section, we consider two complementary compute-matched controls: a single chain given more MCMC refinement steps, and multiple uncoupled chains.  

\emph{More refinement in a single chain.}
First, Table~\ref{tab:compute-matched-single-chain} compares \method{} with $K>1$ replicas against a single chain ($K=1$). We increase the number of MCMC steps in the single-chain run until its total generated-token budget matches or even exceeds that of the multi-replica run. Both methods return one trace, so this comparison tests whether the same token budget is better spent on additional refinement of one chain or on interacting replicas. \method{} achieves consistently higher accuracy across all eight comparisons, with the largest gains on AIME.

\begin{table}[!htb]
  \centering
  \caption{Compute-matched single-trace comparison. Entries report Accuracy (Tokens$\times10^3$), with accuracy in percent and decode tokens per problem in parentheses, including refinement proposals. The $K=1$ control uses additional MCMC steps to match the decode-token budget of \method{} ($K>1$). \textbf{Bold} marks the higher accuracy for each benchmark and model.}
  \label{tab:compute-matched-single-chain}
  \setlength{\tabcolsep}{5pt}
  \begin{adjustbox}{max width=\linewidth}
  \begin{tabular}{l|c|c|c|c}
    \toprule
    \textbf{Method} & \textbf{MATH 500} & \textbf{GPQA} & \textbf{AIME 24\&25} & \textbf{LCB~v5} \\
    \midrule
    \multicolumn{5}{l}{\textbf{Qwen3-4B}} \\
    \hspace{12pt} Single chain ($K=1$)
    & 84.8 (112) & 53.0 (212) & 72.7 (95) & 56.0 (255) \\
    \rowcolor{pptrow} \hspace{12pt} \textbf{\method{} (Ours)} ($K>1$)
    & \textbf{86.0} (83) & \textbf{53.6} (146) & \textbf{78.3} (87) & \textbf{57.9} (233) \\
    \midrule
    \multicolumn{5}{l}{\textbf{Qwen3-8B}} \\
    \hspace{12pt} Single chain ($K=1$)
    & 85.6 (131) & 59.1 (363) & 75.3 (215) & 54.9 (213) \\
    \rowcolor{pptrow} \hspace{12pt} \textbf{\method{} (Ours)} ($K>1$)
    & \textbf{88.0} (89) & \textbf{60.1} (349) & \textbf{78.3} (151) & \textbf{58.4} (210) \\
    \bottomrule
  \end{tabular}
  \end{adjustbox}
\end{table}

\emph{Multiple traces.}
Subsequently, Table~\ref{tab:compute-matched-app} compares \method{} against two multi-trace baselines, each given the same decode-token budget as the coupled run.
The \emph{uncoupled ladder} keeps the replica count, powers $\{\alpha_k\}$, horizon, block size, and refinement schedule of \method{} but disables all swap attempts, so it isolates the effect of exchanging records during generation.
\emph{Best-of-$N$} (BoN) draws independent samples from the model until the budget is used, so it represents the standard way of spending extra inference compute across many traces.
For our metrics, we use:
\emph{Pass@1} returns one final record;
\emph{Vote} takes the majority over final answers, breaking ties by cumulative base-model log-likelihood.

For Pass@1, \method{} returns its coldest-rung ($k^\star$) trace and \emph{Best-of-$N$} its highest-log-likelihood trace; the \emph{uncoupled ladder} is reported under both rules for a like-for-like comparison.
The single-trace readout gives a clear comparison of sampling quality, and there \method{} is the strongest method in every setting.
The records returned by \method consistently outperform both the traces from the uncoupled ladder and from BoN at the same compute.
Compared with the uncoupled ladder, exchange improves the single-trace and vote accuracy in every model--benchmark setting, and on AIME 24\&25 it raises all three settings for both models, by a considerable margin.
This shows that the gains come from how exchange reshapes each chain's trajectory, not from spending more tokens.

Taken together, these controls show that the benefit of \method{} holds whether compute is matched through deeper refinement of one chain, several chains without exchange, or independent base-model samples.

\begin{table*}[!t]
\centering
\caption{Compute-matched accuracy (\%) of \method{}, the uncoupled ladder (no swaps), and Best-of-$N$ under the same decode-token budget. For the uncoupled ladder, the two Pass@1 values correspond to returning the highest-likelihood trace (Likelihood Rank) and the trace at rung $k^\star$ (Coldest Rung). \textbf{Bold} marks the best result within each column and model group.}
\label{tab:compute-matched-app}
\setlength{\tabcolsep}{3pt}
\begin{adjustbox}{max width=\textwidth}
\begin{tabular}{l|cc|cc|cc}
\toprule
\multirow{2}{*}{\textbf{Method}} &
\multicolumn{2}{c|}{\textbf{MATH 500}} &
\multicolumn{2}{c|}{\textbf{GPQA}} &
\multicolumn{2}{c}{\textbf{AIME 24\&25}} \\
\cmidrule(lr){2-3} \cmidrule(lr){4-5} \cmidrule(lr){6-7}
& Pass@1 & Vote
& Pass@1 & Vote
& Pass@1 & Vote \\
\midrule
\multicolumn{7}{l}{\textbf{Qwen3-4B}} \\
\hspace{12pt} \begin{tabular}[b]{@{}l@{}}Uncoupled ladder\\(Likelihood\,/\,Coldest Rung)\end{tabular}
& 84.2\,/\,83.6 & 87.6
& 52.3\,/\,52.9 & 48.5
& 75.7\,/\,74.3 & 78.3 \\
\hspace{12pt} Best-of-$N$
& 84.7 & \textbf{89.2}
& 52.4 & 54.7
& 71.3 & 77.7 \\
\rowcolor{pptrow} \hspace{12pt} \textbf{\method{} (Ours)}
& \textbf{86.0} & 88.8
& \textbf{53.6} & \textbf{55.1}
& \textbf{78.3} & \textbf{80.0} \\
\midrule
\multicolumn{7}{l}{\textbf{Qwen3-8B}} \\
\hspace{12pt} \begin{tabular}[b]{@{}l@{}}Uncoupled ladder\\(Likelihood\,/\,Coldest Rung)\end{tabular}
& 86.0\,/\,84.9 & 87.4
& 57.6\,/\,57.6 & 58.9
& 76.7\,/\,76.7 & 80.0 \\
\hspace{12pt} Best-of-$N$
& 84.6 & \textbf{90.3}
& 56.5 & 60.1
& 72.8 & 78.6 \\
\rowcolor{pptrow} \hspace{12pt} \textbf{\method{} (Ours)}
& \textbf{88.0} & 90.0
& \textbf{60.1} & \textbf{61.6}
& \textbf{78.3} & \textbf{81.7} \\
\bottomrule
\end{tabular}
\end{adjustbox}
\end{table*}

\paragraph{Fixed Horizon vs One-Way Truncation}
Table~\ref{tab:cold-replica-results} separates the effect of the fixed-horizon correction from the effect of replica exchange.
 With exchange disabled, moving to a common fixed-horizon state space improves accuracy for both models. Retaining reversible suffix moves lets refinement reconsider where a response should end, rather than making an early EOS an irreversible change of state space. 
  Enabling exchange further improves accuracy. The strongest results consequently come from two complementary ingredients---a valid fixed-horizon kernel that can revise 
  trajectories in both directions, and communication that transports useful trajectories across sharpening levels.
\\
The two kernels treat termination asymmetrically. Under the
truncating kernel, an accepted early EOS permanently shrinks the proposal
budget of every later local move, while a record that has not terminated
can only be shortened by a proposal that itself terminates. Consequently, the terminal
position is effectively frozen after a few updates. 
Conversely, under the
fixed-horizon kernel, a restart at or before the terminal position
regenerates the suffix through $T_m$ in both directions, so refinement can
move the EOS earlier or later and can convert a right-censored record into
a terminated one. 

Consistent with this, moving from the truncating to the
fixed-horizon kernel at $K=1$ lowers the hit-capacity rate from $9.2\%$ to
$8.0\%$ on Qwen3-4B and from $12.2\%$ to $9.5\%$ on Qwen3-8B, so more responses
terminate within budget, and accuracy rises at the same time. Enabling
exchange on top of the fixed-horizon kernel leaves the hit-capacity rate
essentially unchanged ($7.6\%$ and $9.6\%$) while improving accuracy further,
which is the expected signature of swaps acting on \emph{which} trajectories
reach the output rung rather than on termination behavior. The correction
is therefore practically useful as well as theoretically necessary, and the
two ingredients of PPT contribute through distinct mechanisms.

\begin{table}[!t]
  \centering
  \caption{Pass@1 accuracy (\%) and hit-capacity rate (\%, share of returned responses that exhaust the completion budget without emitting EOS) on MATH500 for the truncating single-chain sampler (Power Sampling), a fixed-horizon single chain, and \method{} (fixed horizon, $K>1$ replicas with swaps). \textbf{Bold} marks the best accuracy for each model.}
  \label{tab:cold-replica-results}
  \setlength{\tabcolsep}{5pt}
  \begin{adjustbox}{max width=\linewidth}
  \begin{tabular}{l|cc|cc}
    \toprule
    \multirow{2}{*}{\textbf{Method}} & \multicolumn{2}{c|}{\textbf{Qwen3-4B}} & \multicolumn{2}{c}{\textbf{Qwen3-8B}} \\
    & Acc.\ $\uparrow$ & Hit cap.\ $\downarrow$ & Acc.\ $\uparrow$ & Hit cap.\ $\downarrow$ \\
    \midrule
    Truncating, $K=1$ (Power Sampling) & 83.5 & 9.2 & 82.2 & 12.2 \\
    Fixed-horizon, $K=1$ & 84.5 & 8.0 & 85.3 & 9.5 \\
    \rowcolor{pptrow} \textbf{\method{} (Ours)} (fixed-horizon, $K>1$) & \textbf{86.0} & 7.6 & \textbf{88.0} & 9.6 \\
    \bottomrule
  \end{tabular}
  \end{adjustbox}
\end{table}

\paragraph{Communication Schedule: DEO versus Ordered Adjacent Sweeps}
\label{apx:deo-adj}
Appendix~\ref{sec:chain-communication} compares the ordered adjacent
sweep (ADJ) with deterministic even--odd communication (DEO). 
Table~\ref{tab:deo-adj} presents an empirical comparison between the schedules showing
that, for the same finite iteration budget, 
ADJ consistently achieves higher accuracy than DEO across all evaluated models and benchmarks, 
suggesting that its greater exploration per iteration translates into improved predictive performance.

\begin{table}[t]
\centering
\caption{Communication-schedule ablation: single-sample accuracy (\%) of \method{} with DEO versus the ordered adjacent sweep (ADJ, our default), under otherwise identical configurations. \textbf{Bold} marks the better schedule.}
\label{tab:deo-adj}
\small
\begin{tabular}{l|ccccc}
\toprule
\textbf{Schedule} &
\textbf{MATH500} &
\textbf{GPQA} &
\textbf{HumanEval} &
\textbf{AIME 24\&25} &
\textbf{LCB v5} \\
\midrule
\multicolumn{6}{l}{\textbf{Qwen 2.5 MATH}} \\
\hspace{12pt} \method{} with DEO
& 77.8 & 32.3 & 57.6 & 15.7 & 11.3 \\
\rowcolor{pptrow} \hspace{12pt} \method{} with ADJ (default)
& \textbf{79.6} & \textbf{37.6} & \textbf{60.6} & \textbf{17.7} & \textbf{12.5} \\
\midrule
\multicolumn{6}{l}{\textbf{Qwen3-4B}} \\
\hspace{12pt} \method{} with DEO
& 84.7 & 53.0 & 90.2 & 73.3 & 53.8 \\
\rowcolor{pptrow} \hspace{12pt} \method{} with ADJ (default)
& \textbf{86.0} & \textbf{53.6} & \textbf{93.9} & \textbf{78.3} & \textbf{57.9} \\
\midrule
\multicolumn{6}{l}{\textbf{Qwen3-8B}} \\
\hspace{12pt} \method{} with DEO
& 85.9 & 57.1 & 90.2 & 75.7 & 55.3 \\
\rowcolor{pptrow} \hspace{12pt} \method{} with ADJ (default)
& \textbf{88.0} & \textbf{60.1} & \textbf{94.9} & \textbf{78.3} & \textbf{58.4} \\
\bottomrule
\end{tabular}
\end{table}

\end{document}